\documentclass{article}

\usepackage{microtype}
\usepackage{graphicx}
\usepackage{subcaption}
\usepackage{booktabs} % for professional tables

\usepackage{hyperref}

\usepackage[preprint]{icml2026}

\usepackage{amsmath}
\usepackage{amssymb}
\usepackage{mathtools}
\usepackage{amsthm}

\usepackage[capitalize,noabbrev]{cleveref}

\theoremstyle{plain}
\newtheorem{theorem}{Theorem}[section]

\newtheorem{lemma}[theorem]{Lemma}

\theoremstyle{definition}

\theoremstyle{remark}
\newtheorem{remark}[theorem]{Remark}

\usepackage[textsize=tiny]{todonotes}

\usepackage{subcaption}
\usepackage{amssymb}% http://ctan.org/pkg/amssymb
\usepackage{pifont}% http://ctan.org/pkg/pifont
\newcommand{\xmark}{\ding{55}}%
\usepackage{bm}
\usepackage{multirow}

\usepackage{titletoc}
\usepackage{imakeidx}
\makeindex[columns=3, title=Alphabetical Index, intoc]

\usepackage{xcolor}
\usepackage{array}

\usepackage[table]{xcolor}
\newcolumntype{G}{>{\columncolor{gray!15}}c} % 15% gray

\icmltitlerunning{Multiscale State-Space Models for Non-stationary Signal Attention}

\begin{document}

\twocolumn[
\icmltitle{WaveSSM: Multiscale State-Space Models for Non-stationary Signal Attention}

% It is OKAY to include author information, even for blind
% submissions: the style file will automatically remove it for you
% unless you've provided the [accepted] option to the icml2025
% package.

% List of affiliations: The first argument should be a (short)
% identifier you will use later to specify author affiliations
% Academic affiliations should list Department, University, City, Region, Country
% Industry affiliations should list Company, City, Region, Country

% You can specify symbols, otherwise they are numbered in order.
% Ideally, you should not use this facility. Affiliations will be numbered
% in order of appearance and this is the preferred way.
\icmlsetsymbol{equal}{*}

\begin{icmlauthorlist}
\icmlauthor{Ruben Solozabal}{equal,yyy}
\icmlauthor{Velibor Bojkovic}{equal,yyy}
\icmlauthor{Hilal Alquabeh}{yyy,zzz}
\icmlauthor{Klea Ziu}{yyy}
\icmlauthor{Kentaro Inui}{yyy,zzz}
\icmlauthor{Martin Takáč}{yyy}
% \icmlauthor{Firstname7 Lastname7}{comp}
% \icmlauthor{}{sch}
% \icmlauthor{Firstname8 Lastname8}{sch}
% \icmlauthor{Firstname8 Lastname8}{yyy,comp}
%\icmlauthor{}{sch}
%\icmlauthor{}{sch}
\end{icmlauthorlist}

\icmlaffiliation{yyy}{MBZUAI}
\icmlaffiliation{zzz}{RIKEN AIP}

\icmlcorrespondingauthor{Ruben Solozabal}{ruben.solozabal@mbzuai.ac.ae}
% \icmlcorrespondingauthor{Firstname2 Lastname2}{first2.last2@www.uk}

% You may provide any keywords that you
% find helpful for describing your paper; these are used to populate
% the "keywords" metadata in the PDF but will not be shown in the document
\icmlkeywords{Machine Learning, ICML}

\vskip 0.3in
]

% this must go after the closing bracket ] following \twocolumn[ ...

% This command actually creates the footnote in the first column
% listing the affiliations and the copyright notice.
% The command takes one argument, which is text to display at the start of the footnote.
% The \icmlEqualContribution command is standard text for equal contribution.
% Remove it (just {}) if you do not need this facility.

%\printAffiliationsAndNotice{}  % leave blank if no need to mention equal contribution
\printAffiliationsAndNotice{\icmlEqualContribution} % otherwise use the standard text.

\begin{abstract}
State-space models (SSMs) have emerged as a powerful foundation for long-range sequence modeling, with the HiPPO framework showing that continuous-time projection operators can be used to derive stable, memory-efficient dynamical systems that encode the past history of the input signal. However, existing projection-based SSMs often rely on polynomial bases with global temporal support, whose inductive biases are poorly matched to signals exhibiting localized or transient structure. In this work, we introduce \emph{WaveSSM}, a collection of SSMs constructed over wavelet frames. Our key observation is that wavelet frames yield a localized support on the temporal dimension, useful for tasks requiring precise localization. Empirically, we show that on equal conditions, \textit{WaveSSM} outperforms orthogonal counterparts as S4 on real-world datasets with transient dynamics, including physiological signals on the PTB-XL dataset and raw audio on Speech Commands.
\end{abstract}

\section{Introduction}
\label{intro}

Modeling long-range dependencies in sequential data remains one of the central challenges in modern machine learning. From natural language, speech processing, or analyzing sensory data, effective sequence models must capture both long-range temporal structure and localized transient events while being computationally efficient. Classical recurrent neural networks (RNNs) and their gated variants (e.g., LSTMs~\cite{DBLP:journals/neco/HochreiterS97}) are limited by vanishing and exploding gradients, while attention-based Transformers face quadratic scaling with sequence length. Recently, \textit{state-space models} (SSMs), such as HiPPO~\cite{DBLP:conf/nips/GuDERR20}, S4~\cite{DBLP:conf/iclr/GuGR22}, and Mamba~\cite{DBLP:journals/corr/abs-2312-00752}, have emerged as powerful alternatives, providing theoretically grounded, sub-quadratic mechanisms for continuous-time sequence representation.

%%%%%%%%%%%%%%%%%%%%%%%%%%%%%%%%%%%%%%%%%
%% FIGURE
%%%%%%%%%%%%%%%%%%%%%%%%%%%%%%%%%%%%%%%%%
\begin{figure}[t!]
    \centering
    \begin{subfigure}[t]{0.25\textwidth}
        \centering
        \includegraphics[width = \textwidth]{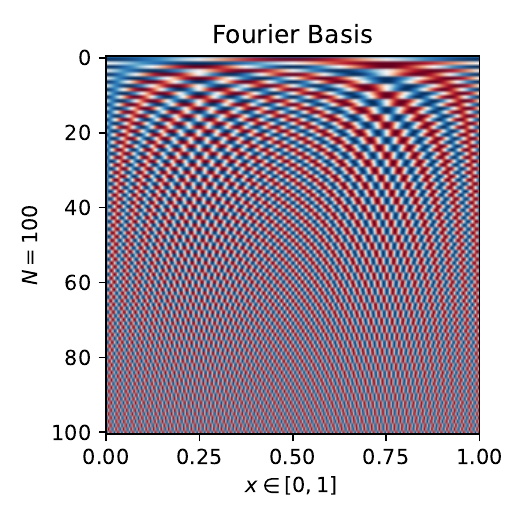}
        % \caption{}
    \end{subfigure}%
    \begin{subfigure}[t]{0.25\textwidth}
        \centering
        \includegraphics[width = \textwidth]{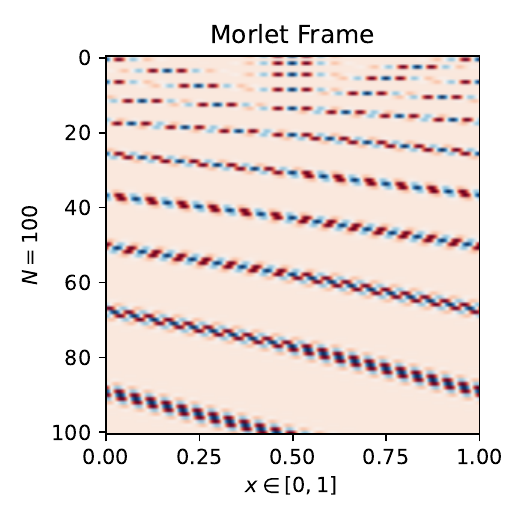}
        % \caption{}
    \end{subfigure}
    \caption{
    Comparison between global polynomial bases and localized wavelet frames. \textbf{Left:} Legendre orthogonal-basis functions are utilized in the HiPPO framework to construct an online projection of the input signal with global support.
    \textbf{Right:} wavelet-based frames (Morlet in the example) exhibit more localized time support, promoting selective representations with the ability to attend transient events more effectively.
    }
    \label{fig:intro}
\end{figure}

Despite their success, existing SSM formulations share an important limitation: they rely on a family of \textit{orthogonal polynomials}~\cite{DBLP:conf/nips/GuDERR20, DBLP:conf/iclr/GuJTRR23} (e.g., Legendre, Laguerre, or Chebyshev) for constructing the transition dynamics. These bases have global support, meaning each basis element spans the entire signal temporal domain, and therefore cannot efficiently represent non-stationary or localized patterns that vary across the time-scale. As a result, conventional HiPPO-style SSMs tend to fail to adaptively focus on regions of interest in the input sequences.

In this work, our goal is to endow state-space sequence models with an explicitly \emph{multiscale and time-localized} representation. SSMs constructed using orthogonal-basis (i.e., HiPPO-based models~\cite{DBLP:conf/iclr/GuGR22,smith2022simplified,DBLP:conf/icml/OrvietoSGFGPD23}) are powerful for globally smooth signal representation, but their states are inherently global: a single coefficient typically mixes information from the entire history, which can be inefficient for signals with localized events, sharp transitions, or piecewise-smooth structure. Motivated by the approximation-theoretic advantage of wavelets on such data, we introduce \textit{WaveSSM}, a wavelet-based state-space model whose latent states correspond to localized multiscale atoms that can therefore represent and propagate information at different temporal resolutions.

We show that WaveSSM is particularly advantageous for modalities such as physiological signals, time series, and audio streams that exhibit \textit{non-stationarity} and \textit{transient} structure. We demonstrate empirically that our WaveSSM model achieves superior accuracy on several benchmark datasets, outperforming SSMs based on orthogonal bases when tested on equal conditions. Our main contributions are as follows:

\vspace{-1mm}
\begin{enumerate}
    \item \textbf{Wavelet-induced SSMs.} We derive a principled construction of SSMs dynamics from continuous and discrete wavelet frames. The main property of our formulation is that it induces \emph{temporally localized} state coordinates, i.e., information from different temporal regions is stored in different subsets of the state. 

    \item \textbf{Analysis of the stability considerations.} 
    We characterize the numerical stability challenges introduced by wavelet-induced dynamics and discuss design choices that improve stability and yield reliable long-horizon kernels.

    \item \textbf{Empirical validation on relevant benchmarks.} We evaluate WaveSSM on a suite of real-world tasks emphasizing transient and non-stationary behavior (e.g., PTB-XL, Speech Commands), but also more general benchmarks as Long Range Arena (LRA), demonstrating its gains over orthogonal-basis SSM counterparts under controlled comparisons.

\end{enumerate}

\section{Background}

\subsection{State-Space Models for Sequence Representation}

Continuous-time SSMs provide an expressive and computationally efficient framework for representing long-term dependencies in sequential data. An SSM describes the evolution of a hidden state $h(t)$ driven by an input signal $u(t)$ normally through a linear time-invariant (LTI) system. Such LTI-SSM is defined by
\begin{equation}
\dot h(t) = A\,h(t) + B\,u(t), \qquad y(t) = C\,h(t),
\label{eq:ssm-ode}
\end{equation}
where $u(t)$ is an input signal, $h(t)\in\mathbb{R}^N$ is a latent state, and $A,B,C$ are learned or designed system matrices. After discretization with step size $\Delta$, one obtains the recurrence
\begin{equation}
h_{t+1} = \bar A \, h_t + \bar B \, u_t,
\label{eq:ssm-discrete}
\end{equation}
with $\bar A,\bar B$ derived from a numerical integrator such as the bilinear transform. This produces a linear time-invariant convolution kernel $K = (C\bar A^l \bar B)_{l\ge 0}$, allowing SSMs to compute long-range dependencies with $O(N)$ memory and $O(L\log L)$ parallel kernel evaluations.

To use SSMs as sequence models, one typically chooses the latent state $h(t)$ to encode a summary of the input history $\{u(\tau):\tau\le t\}$. Classical approaches (e.g.\ Liquid Time-Constant networks~\cite{DBLP:conf/aaai/HasaniLARG21}, Neural CDEs~\cite{DBLP:conf/nips/KidgerMFL20}, and modern S4 layers~\cite{DBLP:conf/iclr/GuGR22, DBLP:conf/iclr/SmithWL23}) differ primarily in how they choose or learn the matrices $A$ and $B$ and how these matrices govern the decay and mixing of information over time.

\textbf{Projection-based SSMs.}
In this paper, we focus on a successful family of SSMs that builds $h(t)$  as an \emph{online projection} of the recent signal history onto a set of basis functions. Let $\{\phi_n\}$ be such functions. Define
\begin{equation}
h_n(t) = \int_{0}^{t} u(\tau)\,\phi_n(\tau,t) \, d\tau,
\end{equation}
where $\phi_n(\tau,t)$ may depend on $t$ through a measure function.  Differentiating $h(t)$ yields an ODE of the form~\eqref{eq:ssm-ode} whose matrices $A$ and $B$ are determined by the structure of the basis. This idea underlies the HiPPO and SaFARi frameworks.

\textbf{The HiPPO framework.}
The High-Order Polynomial Projection Operator (HiPPO) framework~\cite{DBLP:conf/nips/GuDERR20} defines an online procedure for projecting the history of the input signal onto a polynomial basis. At time $t$, let $\{P_n(\cdot,t)\}_{n\ge 0}$ denote the family of orthonormal polynomials associated with a time-varying measure $\mu_t$ (e.g.\ scaled or translated). The HiPPO coefficients are defined as the optimal projection
\begin{equation}
h_n(t)
=
\langle u_{[0,t]},\, P_n(\cdot,t) \rangle_{\mu_t}
=
\int_{0}^{t} u(\tau)\,P_n(\tau,t)\,\mu_t(\tau)\,d\tau ,
\label{eq:hippo-poly-proj}
\end{equation}
which represent an online summary of the history of $u_{[0,t]}$. 
% HiPPO therefore defines a family of continuous-time SSMs whose parameters correspond to the dynamics of the chosen orthogonal polynomial basis.

Different measures produce different memory behaviors. The \textit{scaled} measure, $\mu_{\mathrm{sc}}(\tau) = \frac{1}{t}\mathbf{1}_{\tau\in[0,t]}$, distributes weight uniformly over all past times and leads to global representations. Whereas the \textit{translated} measure, $\mu_{\mathrm{tr}}(\tau) = \frac{1}{\theta}\mathbf{1}_{\tau\in[t-\theta,t]}$, restricts memory to a sliding window of length $\theta$.  While these measures allow trade-offs between global context and local recency, existing HiPPO formulations lack an explicit mechanism for selectively accessing specific temporal regions of the input.

%%%%%%%%%%%%%%%%%%%%%%%%%%%%%%%%%%%%%%%%%
%% FIGURE
%%%%%%%%%%%%%%%%%%%%%%%%%%%%%%%%%%%%%%%%%
\begin{figure*}[t!]
    \centering
    \begin{subfigure}[t]{0.6\textwidth}
        \centering
        \includegraphics[trim={0 9.5cm 12cm 0},clip,width = \textwidth]{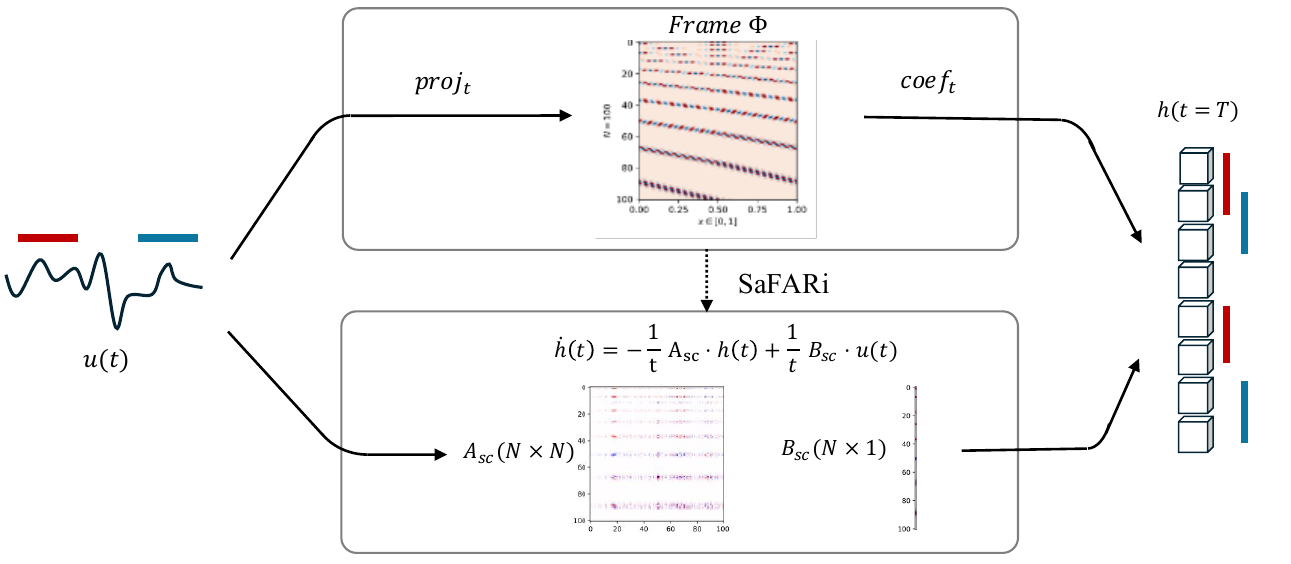}
        \caption{}
    \end{subfigure}
    \begin{subfigure}[t]{0.35\textwidth}
        \centering
        \includegraphics[width = \textwidth]{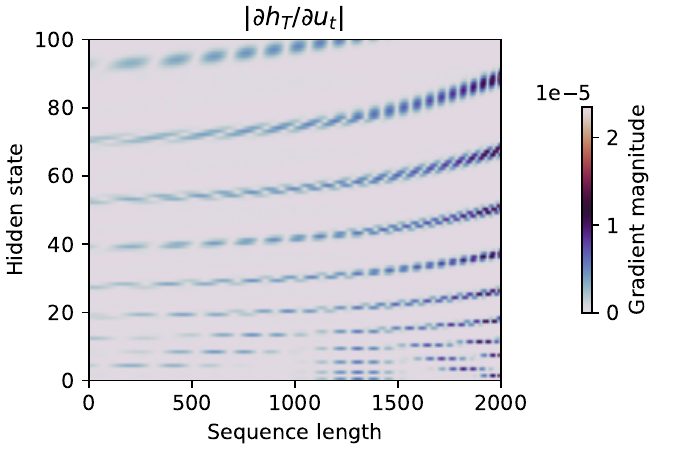}
        \caption{}
    \end{subfigure}
    \caption{ \textbf{Left:} Overview of the WaveSSM framework. The input signal is projected onto a wavelet frame, inducing a continuous-time state-space representation via differentiation of the projection. The resulting SSM maintains a compact latent state with addressable, time-local components, which can be decoded independently or processed for sequence modeling. \textbf{Right:} Visualization of the Jacobian $\partial h_T / \partial x_t$, illustrating the influence of inputs at different time steps on the final hidden state. Results are shown for Morlet wavelets of order $N=100$ under the scaled measure $\mu_{sc}$; see Appendix~\ref{jacobian} for details.}
    \label{fig:main}
\end{figure*}

\textbf{The SaFARi framework.} The State-Space Models for Frame-Agnostic Representation or SaFARi~\cite{DBLP:journals/corr/abs-2505-08977} generalizes HiPPO by providing a numerical procedure to derive SSM parameters $(A,B)$ from \emph{any} frame or basis, not only orthogonal polynomials.  
Given a frame $\Phi = \{\varphi_n\}$ in a Hilbert space with its dual $\tilde{\Phi}$, SaFARi defines the representation coefficients
\begin{equation}
h_n(T) = \int_0^T u(t)\varphi_n^{(T)}(t)\mu(t)dt,    
\end{equation}
and shows that their temporal evolution again follows a first-order SSM where $A$ and $B$ are obtained from inner products between the frame elements and their derivatives. By replacing analytic derivations with a numerical frame operator formulation, SaFARi unifies previously distinct SSM variants under a single theoretical framework, enabling the construction of new models based on localized and adaptive bases.

\paragraph{Scaled measure.}
For the uniform scaled measure $\mu_t(\tau)=\tfrac{1}{t}\mathbf{1}_{[0,t]}(\tau)$ and warped frame
\[
\varphi_n(\tau,t)=\varphi_n(\tau/t),
\]
the SaFARi dynamics take the form
\begin{equation}
\dot h(t) = -\tfrac{1}{t}A_{sc}\,h(t) + \tfrac{1}{t}B_{sc}\,u(t),
\end{equation}
with
\begin{equation}
\label{eq:scaled_A}
A_{sc} = I + U_{\Upsilon}U_{\tilde{\varphi}}^{*},
\qquad
(B_{sc})_n = \varphi_n(1),
\end{equation}
where $\Upsilon\varphi_n(s)= s\,\partial_s \varphi_n(s)$ and $U_{\varphi}$ denotes the frame analysis operator.
For Legendre polynomials, this construction recovers the HiPPO-LegS operator.

\paragraph{Translated measure.}
For the uniform translated measure $\mu_t(\tau)=\tfrac{1}{\theta}\mathbf{1}_{[t-\theta,t]}(\tau)$ and warped frame
\[
\varphi_n(\tau,t)
=
\varphi_n\!\left(\tfrac{\tau-(t-\theta)}{\theta}\right),
\]
the resulting dynamics are
\begin{equation}
\dot h(t) = -\tfrac{1}{\theta}A_{tr}\,h(t) + \tfrac{1}{\theta}B_{tr}\,u(t),
\end{equation}
with
\begin{equation}
\label{eq:translated_A}
A_{tr} = U_{\dot{\varphi}}U_{\tilde{\varphi}}^{*} + Q_{\varphi} Q_{\tilde{\varphi}}^{*},
\qquad
(B_{tr})_n = \varphi_n(1),
\end{equation}
where $\dot{\varphi}_n=\partial_s \varphi_n(s)$ and
$Q_{\varphi}f = f(0)\varphi(0)$ is the zero-time frame operator.
For Legendre polynomials, this reduces to the LMU~\cite{DBLP:conf/nips/VoelkerKE19} and HiPPO-LegT updates.

%%%%%%%%%%%%%%%%%%%%%%%%%%%%%%%%%%%%%%%%%
%% FIGURE
%%%%%%%%%%%%%%%%%%%%%%%%%%%%%%%%%%%%%%%%%
\begin{figure*}[t!]
    \centering

    \begin{subfigure}[t]{0.19\textwidth}
        \centering
        \includegraphics[width = \textwidth]{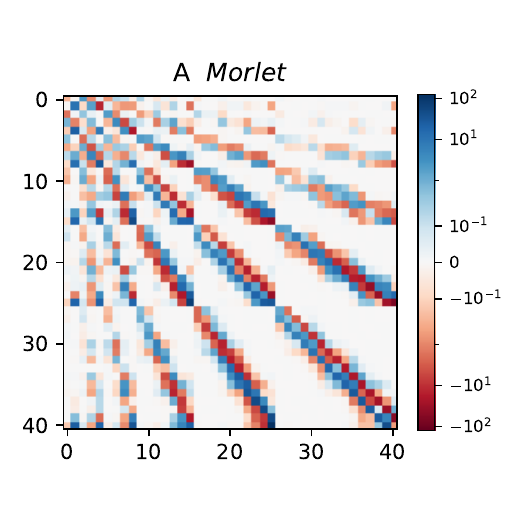}
        % \caption{}
    \end{subfigure}
    \begin{subfigure}[t]{0.19\textwidth}
        \centering
        \includegraphics[width = \textwidth]{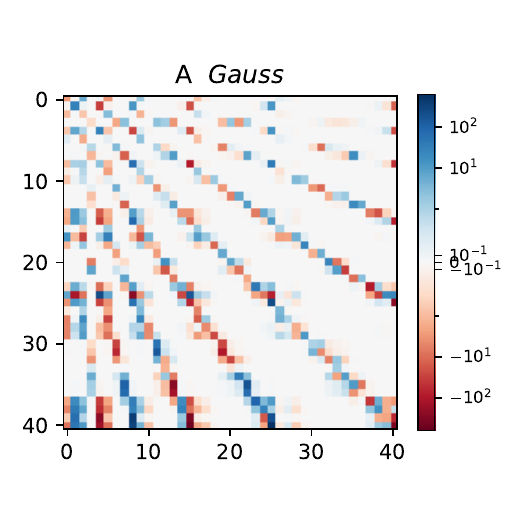}
        % \caption{}
    \end{subfigure}
    \begin{subfigure}[t]{0.19\textwidth}
        \centering
        \includegraphics[width = \textwidth]{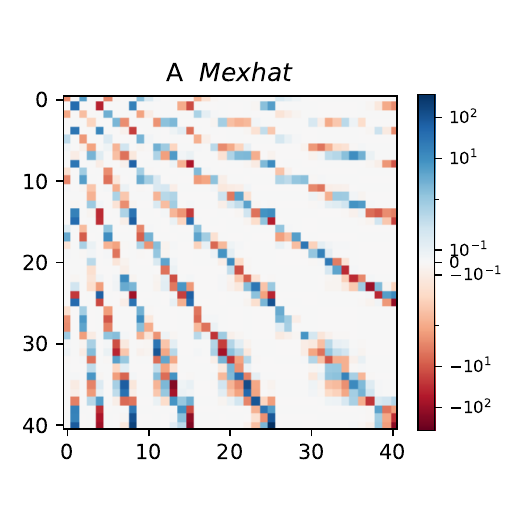}
        % \caption{}
    \end{subfigure}
    \begin{subfigure}[t]{0.19\textwidth}
        \centering
        \includegraphics[width = \textwidth]{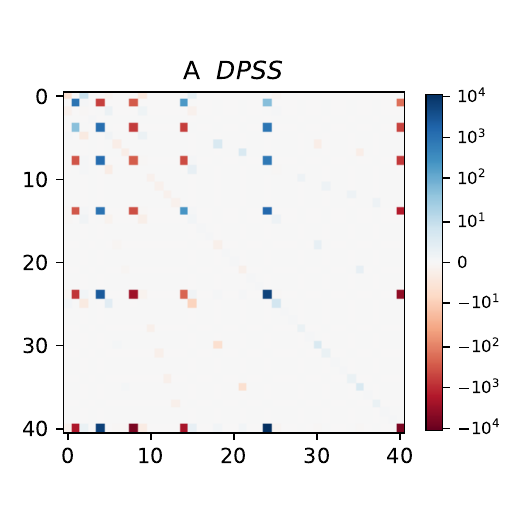}
        % \caption{}
    \end{subfigure}
    \begin{subfigure}[t]{0.19\textwidth}
        \centering
        \includegraphics[width = \textwidth]{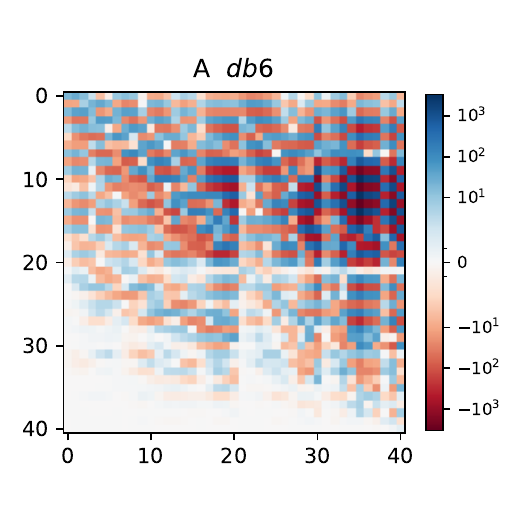}
        % \caption{}
    \end{subfigure}
    \caption{ Visualization of the transition matrices $A$ obtained for the wavelet families for $N=40$. From left to right: Morlet, Gaussian-derivative (Gauss), Mexican hat (Mexhat), Discrete Prolate Spheroidal Slepians (DPSS), and Daubechies (db6).}
    \label{fig:frames}
\end{figure*}

\section{Methodology: State Space Models over Wavelet Frames}

In this work, we utilize the introduced SaFARi framework~\cite{DBLP:journals/corr/abs-2505-08977} to construct our wavelet-based SSMs. As described, SaFARi generalizes the HiPPO formulation by providing a numerical method to derive state transition matrices for any frame, not just limited to an orthogonal basis. Leveraging this generality, we instantiate a collection of continuous and discrete wavelet frames to construct the dynamics of an SSM, yielding a novel multiscale state-space model we called \textit{WaveSSM}. 

% Unlike global orthogonal SSMs, our \textit{WaveSSM} formulation produces states that are well-localized in time support, allowing the model to differently attend on various parts of the input sequence.
 
In this section, we construct a set of time-localized wavelets of interest. Our objective is to design SSMs whose latent state evolves in the coefficient space induced by a wavelet frame $\Phi=\{\varphi_i\}_{i=0}^{N-1}$, whose atoms are well-localized in the underlying time--frequency tiling.  The resulting \emph{WaveSSMs} inherit their inductive bias directly from the chosen frames. In our experiments, we instantiate frames using continuous-wavelet constructions (Morlet, Gaussian-derivative, and Mexican hat), discrete wavelet transforms (as Daubechies), and Slepian-based frequency frames, which offer complementary control over spectral coverage.

\subsection{Wavelet frame construction}

Let $\Phi$ be a collection of $N$ time-localized atoms discretized on a grid of length $L$, forming a frame matrix $F\in\mathbb{R}^{N\times L}$ whose rows are sampled atoms. We consider real-valued multiscale constructions obtained by shifting and dilating a prototype waveform:
\begin{equation}
\varphi_{k,m}(t)
\propto
\psi\!\left(\frac{t-\tau_m}{\sigma_k}\right),
\label{eq:wavelet_atoms_general}
\end{equation}
where $\sigma_k$ controls temporal scale (width) and $\tau_m$ controls the shift. Each atom is subsequently normalized to unit energy, $\|\varphi_{k,m}\|_2=1$. Complete theoretical definitions can be found in Appendix~\ref{app:wavessm-frames} and practical implementation details on the frame construction are given in Appendix~\ref{construction_frames}. As a summary: 

\textbf{Morlet wavelets.} 
Morlet frames are constructed from a real sinusoid modulated by a Gaussian envelope. Morlet frames are well-suited for representing localized, band-limited transients.

\textbf{Gaussian-derivative wavelets.}
Gaussian-derivative atoms are obtained from the $P$-th derivative of a Gaussian envelope. In our experiments, we fix $P=1$. Gaussian wavelets are particularly effective for capturing broadband, pulse-like transients and sharp events. 

\textbf{Mexican hat.}
The Mexican hat wavelet corresponds to the second derivative of a Gaussian. Increasing $P=2$ introduces additional oscillations while preserving strong time localization. Additionally, their symmetric structure yields localized representations with minimal interference.

\textbf{DPSS-based frames.}
Discrete prolate spheroidal sequences (DPSS), also known as Slepians, provide an alternative construction optimized for spectral concentration.

\textbf{Daubechies frames.}
Daubechies atoms are obtained from a \emph{discrete} orthonormal wavelet family with compact support and a specified number of vanishing moments. In our experiments, we use db6 wavelets.

%%%%%%%%%%%%%%%%%%%%%%%%%%%%%%%%%%%%%%%%%
%% Stability
%%%%%%%%%%%%%%%%%%%%%%%%%%%%%%%%%%%%%%%%%

\subsection{Stability considerations}

Our construction relies on the existence of a numerically stable dual frame $\widetilde{\Phi} = S^{-1}\Phi$, where $S = U_\Phi^\ast U_\Phi$ denotes the frame operator. In the continuous setting, stability corresponds to $S$ being invertible with bounded inverse, i.e., $\|S^{-1}\| < \infty$, ensuring that projections onto the dual do not amplify noise or discretization errors.

% In practice, the frame $\Phi$ is implemented through a discretized matrix $F \in \mathbb{R}^{N \times L}$, obtained by sampling the frame atoms on a temporal grid. In this discrete setting, the frame operator is realized as $S = F F^{\ast}$. Because the wavelet frames considered in this work are generally redundant and non-orthogonal, the conditioning of $S$ cannot be taken for granted and plays a central role in the stability of the induced state-space dynamics.

In practice, the frame $\Phi$ is implemented through a discretized matrix $F \in \mathbb{R}^{N \times L}$ obtained by sampling the $N$ frame atoms on a temporal grid of length $L$. With this convention, the relevant operator for coefficient-space projections is $S \;:=\; F F^\ast \in \mathbb{R}^{N \times N}$. Because the wavelet frames considered in this work are generally redundant and non-orthogonal, the conditioning of $S$ cannot be taken for granted and plays a central role in the stability of the induced state-space dynamics.

\textbf{Dual stability and its impact on the state-space dynamics.}
As described in \eqref{eq:scaled_A} and \eqref{eq:translated_A}, the continuous-time state transition is obtained by projecting time derivatives of frame atoms back onto the span of the frame using the dual. In discrete form, this projection is realized via the least-squares problem
\begin{equation}
A \in \arg\min_{X \in \mathbb{R}^{N\times N}} \;\|\dot F - X F\|_F^2,
\label{eq:ls_A}
\end{equation}
where $\dot F \in \mathbb{R}^{N\times L}$ stacks the row-wise time derivatives of the sampled atoms.

\begin{lemma}
\label{lem:conditioning-controls-A}
Let $F \in \mathbb{R}^{N \times L}$ be a discretized frame matrix with full row rank and let $S := F F^{\ast}$ denote the associated frame operator. Let $\dot F$ denote the row-wise time derivatives of the sampled frame atoms, and define $A$ by the above least-squares projection. Then $A$ is uniquely defined and satisfies
\begin{equation}
A = \dot F F^{\ast} S^{-1},
\qquad
\|A\|_2 \le \frac{\|\dot F F^{\ast}\|_2}{\lambda_{\min}(S)} .
\end{equation}
\end{lemma}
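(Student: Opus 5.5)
The plan is to treat \eqref{eq:ls_A} as a standard linear least-squares problem in the unknown $X$ and to work row by row. Since $\|\dot F - XF\|_F^2 = \sum_{i=1}^N \|\dot f_i - x_i F\|_2^2$, where $\dot f_i$ and $x_i$ denote the $i$-th rows of $\dot F$ and $X$, the problem decouples into $N$ independent least-squares problems, each with the same design matrix $F^\ast$. It is cleaner, though, to handle all rows at once. The objective $J(X)=\operatorname{tr}\big((\dot F - XF)(\dot F - XF)^\ast\big)$ is a convex quadratic in $X$, and its gradient is $\nabla J(X) = -2(\dot F - XF)F^\ast$. Setting this to zero gives the normal equations
\[
X\,FF^\ast = \dot F F^\ast, \qquad\text{i.e.}\qquad X S = \dot F F^\ast .
\]

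Next I use the full row rank hypothesis. Since $F$ has rank $N$, we have $v^\ast S v = \|F^\ast v\|_2^2 > 0$ for every $v \neq 0$. Thus $S$ is symmetric positive definite, hence invertible, with $\lambda_{\min}(S) > 0$. The normal equations therefore have the unique solution $A = \dot F F^\ast S^{-1}$.

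For uniqueness as a minimizer, I note that $J$ is strictly convex. Its Hessian acts as $H \mapsto 2HS$, and $\langle H, HS\rangle_F = \operatorname{tr}(HSH^\ast) > 0$ for $H\neq 0$. So the stationary point is the unique global minimizer. Equivalently, if $X_1,X_2$ both minimize $J$, then $(X_1-X_2)F$ must vanish, which forces $X_1=X_2$ because $F$ has trivial left null space.

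For the norm bound I use submultiplicativity of the spectral norm: $\|A\|_2 \le \|\dot F F^\ast\|_2\,\|S^{-1}\|_2$. Because $S$ is symmetric positive definite, $\|S^{-1}\|_2 = 1/\lambda_{\min}(S)$, and the stated inequality follows.

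There is no real obstacle in this argument. The only points needing care are deriving the matrix gradient correctly, including the conjugate transposes (everything is real here, so $\ast$ is just the transpose), and explicitly justifying that full row rank gives positive definiteness of $S$, since that single fact yields both uniqueness and the $1/\lambda_{\min}(S)$ bound.
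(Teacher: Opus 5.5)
Your proposal is correct and follows essentially the same route as the paper's proof. Both get positive definiteness of $S$ from full row rank, derive the normal equations $XS=\dot F F^\ast$ from the trace-form gradient, and bound $\|A\|_2$ by submultiplicativity with $\|S^{-1}\|_2 = 1/\lambda_{\min}(S)$; your explicit strict-convexity check (Hessian $H\mapsto 2HS$) usefully makes rigorous the step, left implicit in the paper, that the unique stationary point is the unique global minimizer.
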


This shows that poor conditioning of S amplifies the magnitude of the transition matrix. Leading to large off-diagonal entries in $A$ and inducing non-local mixing between state components, which degrades truncation stability~\cite{DBLP:journals/corr/abs-2505-08977} and can render the state dynamics numerically unstable.

\textbf{Frame tightening.}
To improve numerical stability, we explicitly construct near-tight discretized frames by whitening the row space:
\begin{equation}
F \leftarrow S^{-1/2} F.
\label{eq:tightening}
\end{equation}
This transformation enforces $F F^{\ast} \approx I_N$, yielding a well-conditioned frame. Frame tightening eliminates near-linear dependencies between frame atoms, stabilizes the dual frame, and suppresses spurious amplification in derivative projections. See the experimental improvement in Fig~\ref{fig:condition_numbers}.

%%%%%%%%%%%%%%%%%%%%%%%%%%%%%%%%%%%%%%%%%
%% FIGURE
%%%%%%%%%%%%%%%%%%%%%%%%%%%%%%%%%%%%%%%%%
\begin{figure}[t]
    \centering
    \includegraphics[width = 0.35\textwidth]{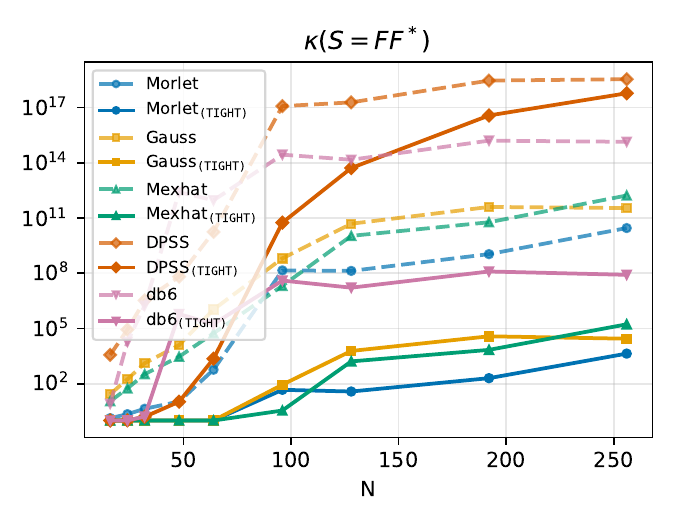}
    \caption{Condition number of the discretized frame operator $\kappa(S)$ as a function of the state dimension $N$.
    Raw frames (dashed) can become ill-conditioned as $N$ grows, while \textit{tightening} $F$ (in solid lines) mitigates this degradation. The improvement is especially pronounced for Morlet, Gaussian, Daubechies, and Mexican-hat frames. }
    \label{fig:condition_numbers}
\end{figure} 

\subsection{Why wavelet frames outperform polynomials on localized irregularities}
\label{subsec:intuition-approx}

One of the starting motivational points of this paper is  a standard approximation-theoretic fact: \emph{localized multiscale wavelets represent localized irregularities (jumps, sharp transitions) much more efficiently than global polynomial modes}. 
To give a more precise but informal coating to this fact, we recall that given a dictionary $\mathcal D\subset L^2(0,1)$, its best $N$-term approximation error is defined as
\[
\sigma_N^{\mathcal D}(f)_2 \;:=\; \inf_{\substack{g\in \mathrm{span}(\mathcal D)\\ g \text{ uses }\le N\text{ atoms}}}\|f-g\|_{L^2(0,1)}.
\]

\begin{theorem}[Informal]
\emph{If $f$ is smooth on an interval $(0,1)$ except for $O(1)>0$ breakpoints, then a wavelet-type Parseval frame $\Phi$ achieves}
\[
\sigma_N^{\Phi}(f)_2 \;\lesssim\; N^{-s}
\qquad (f\in \mathrm{PW}H^s,\ s>\tfrac12),
\]
\emph{whereas even the best nonlinear $N$-term approximation using global Legendre/HiPPO-type polynomial modes has worst-case rate at best}
\[
\sigma_N^{\mathcal L}(f)_2 \;\gtrsim\; N^{-1/2}
\quad 
\]
on the same class (the difference can already be seen on a simple class of functions with one jump discontinuity).
\end{theorem}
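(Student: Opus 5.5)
The plan has two independent parts: an upper bound for the wavelet frame and a lower bound for Legendre modes.

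\textbf{Upper bound.} Take $\Phi$ to be a compactly supported orthonormal wavelet basis on $(0,1)$, adapted to the boundary (e.g.\ Cohen--Daubechies--Vial). I will use $r>s$ vanishing moments and regularity above $s$; this is a Parseval frame, so the $N$-term error is at most the $\ell^2$ tail of the $N$ discarded coefficients. Fix $f$ with $K=O(1)$ breakpoints. At scale $j$, at most $cK$ wavelets, for a constant $c$ depending only on the support length, meet a breakpoint. These ``singular'' coefficients satisfy $|\langle f,\psi_{j,k}\rangle|\lesssim 2^{-j/2}\|f\|_\infty$, and $\|f\|_\infty$ is controlled by $s>\tfrac12$ and Sobolev embedding on each piece. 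All remaining wavelets sit inside a smooth piece, so I view $f$ there as an $H^s$ function. For this I would extend each piece in $H^s$, or equivalently use local Jackson estimates, which gives $\sum_k|\langle f,\psi_{j,k}\rangle|^2\lesssim 2^{-2js}$ over regular $k$.

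Now choose $J$ and keep all singular coefficients up to level $2^{J}$-ish scale $J_1$ together with all regular coefficients up to scale $J$. This costs $N\approx 2^J+cKJ_1$ terms. The error is
\[
\sum_{j>J}2^{-2js}+\sum_{j>J_1}cK\,2^{-j}\approx 2^{-2Js}+2^{-J_1}.
\]
Taking $J_1=2sJ$ makes the terms used $\approx 2^J+O(J)\approx N$, and the error is $\lesssim N^{-s}$. The step that needs the most care is the regular-coefficient estimate. Each piece's $H^s$ (Besov $B^s_{2,2}$) seminorm must control the wavelets entirely inside that piece, uniformly over breakpoint positions. I would handle this by extending each piece to an $H^s(\mathbb R)$ function with comparable norm and invoking the standard wavelet characterization of $H^s$, using the vanishing moments.

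\textbf{Lower bound.} I specialize to $f=\mathbf 1_{[1/2,1)}$, or any single jump, which lies in $\mathrm{PW}H^s$ for all $s$. Let $\ell_n$ be the orthonormal Legendre polynomials on $(0,1)$. By Parseval the best $N$-term error is
\[
\sigma_N^{\mathcal L}(f)_2^2=\sum_{n\notin\Lambda_N}|c_n|^2,
\]
where $\Lambda_N$ indexes the $N$ largest coefficients $c_n=\langle f,\ell_n\rangle$. I compute $c_n$ explicitly. From the identity $(2n+1)P_n=(P_{n+1}-P_{n-1})'$ one gets $c_n\propto\sqrt{2n+1}\,(P_{n-1}(0)-P_{n+1}(0))/(2n+1)$ for the jump at the midpoint (rescaled $x\mapsto 2x-1$). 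The classical asymptotics $|P_{2m}(0)|\sim(\pi m)^{-1/2}$ then give $|c_n|\asymp n^{-1}$ for odd $n$. These magnitudes are essentially monotone, so any choice of $N$ indices leaves at least $\sum_{n>2N,\,n\text{ odd}}n^{-2}\gtrsim N^{-1}$ in the tail. Hence $\sigma_N^{\mathcal L}(f)\gtrsim N^{-1/2}$.

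The delicate point is that nonlinear selection cannot beat this. That follows because $|c_n|$ is comparable to a decreasing sequence: the best $N$-term tail is bounded below by the tail beyond index $\approx 2N$. For a jump at a general interior point $a$, I would use the Laplace--Heine asymptotic $P_n(\cos\theta)\approx\sqrt{2/(\pi n\sin\theta)}\cos((n+\tfrac12)\theta-\pi/4)$. It shows that a positive fraction of the coefficients in each dyadic block have size $\gtrsim n^{-1}$, which gives the same conclusion.
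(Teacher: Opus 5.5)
Your proposal is correct. The lower bound is essentially the paper's own argument. The paper uses the reflected step $\mathbf 1_{[0,1/2]}$, the same antiderivative identity, the closed form $P_{2m}(0)=(-1)^m\binom{2m}{m}4^{-m}$, and the same harmonic tail. One detail should be stated explicitly. It is the alternating sign of $P_{2m}(0)$ that gives $|P_{2m}(0)-P_{2m+2}(0)|=b_m+b_{m+1}$; the asymptotic for $|P_{2m}(0)|$ alone does not rule out cancellation. With that in hand, $|c_{2m+1}|$ is strictly decreasing, so the nonlinear-selection step is immediate. The Laplace--Heine extension to a general jump location is also unnecessary, since one bad function suffices for the statement.

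For the upper bound you take a genuinely different route. The paper argues axiomatically, for any Parseval frame satisfying:
\begin{itemize}
\item multiscale support counting;
\item bounded overlap;
\item an $L^1$ scaling bound;
\item a \emph{per-coefficient} decay hypothesis $|\langle f,\phi_\lambda\rangle|\le C_s2^{-j(\lambda)(s+1/2)}\|f\|_{H^s}$ on smooth pieces.
\end{itemize}
It then analyses greedy top-$N$ thresholding through the decreasing rearrangement. The regular part gives $c_k^*\lesssim k^{-(s+1/2)}$, and it spends on the order of $N$ slots on singular atoms so that their tail is exponentially small.

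You instead fix a concrete boundary-adapted orthonormal basis with $r>s$ vanishing moments. From an $H^s$ extension and the Jackson-type wavelet characterization you extract only the per-scale square-sum bound $\sum_k|c_{j,k}|^2\lesssim 2^{-2js}$ for atoms inside a smooth piece. You then choose the retained set explicitly: everything up to scale $J$, plus only $O(\log N)$ singular atoms down to scale $2sJ$.

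What each buys:
\begin{itemize}
\item The paper's formulation buys generality: it covers any frame meeting its axioms, and it bounds the thresholding rule one actually uses.
\item Yours buys robustness. The per-scale $\ell^2$ bound is exactly what $H^s$ regularity delivers. The paper's per-coefficient exponent $s+\tfrac12$ is a H\"older-type estimate that $H^s$ functions need not satisfy: a cusp $|x-x_0|^{\alpha}$ with $s-\tfrac12<\alpha<s$ lies in $H^s$ but typically has coefficients of size $2^{-j(\alpha+1/2)}$ at the cusp.
\end{itemize}
Since the Parseval truncation bound holds for any retained index set, your selection argument also carries over unchanged to general Parseval frames with the paper's structural properties (a), (b) and the $L^1$ bound.
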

In short: \emph{wavelets exploit locality (few coefficients per scale near each breakpoint), while global polynomials cannot localize a jump}, which creates the rate gap $N^{-s}$ vs.\ $N^{-1/2}$.

\begin{figure}
\centering
\includegraphics[width=0.35\textwidth]{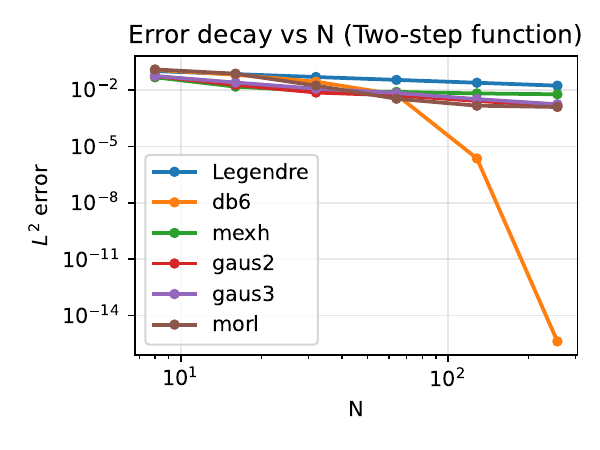}
\caption{Approximation errors on various frames for the two-step piecewise constant function, see Appendix \ref{app:step-budget} for details.}
\vskip-10pt
\label{fig errors}
\end{figure}

The precise assumptions (multiscale localization, bounded overlap, and vanishing-moment/cancellation on smooth pieces) and complete proofs are given in Appendices~\ref{app:wavelet-frames-parseval} and \ref{app:main-approx-theorem-parseval}. 
Compact support is used there mainly to keep the arguments simple; analogous statements hold for non-compactly supported but rapidly decaying wavelet systems, at the cost of more technical localization estimates.

We refer to Fig.~\ref{fig errors} for a visual representation of the error decays for various wavelet frames and Legendre frames when approximating a two-step constant function using the best $N$-term approximations. We refer to the Appendix \ref{app:step-budget} for details on the experimental setting.

%%%%%%%%%%%%%%%%%%%%%%%%%%%%%%%%%%%%%%%%%
%% FIGURE
%%%%%%%%%%%%%%%%%%%%%%%%%%%%%%%%%%%%%%%%%
\begin{figure*}[t!]
    \centering
    % (a)
    \begin{subfigure}[b]{0.7\textwidth}
        \centering
        \includegraphics[trim={0 9cm 0 0},clip,width=\textwidth]{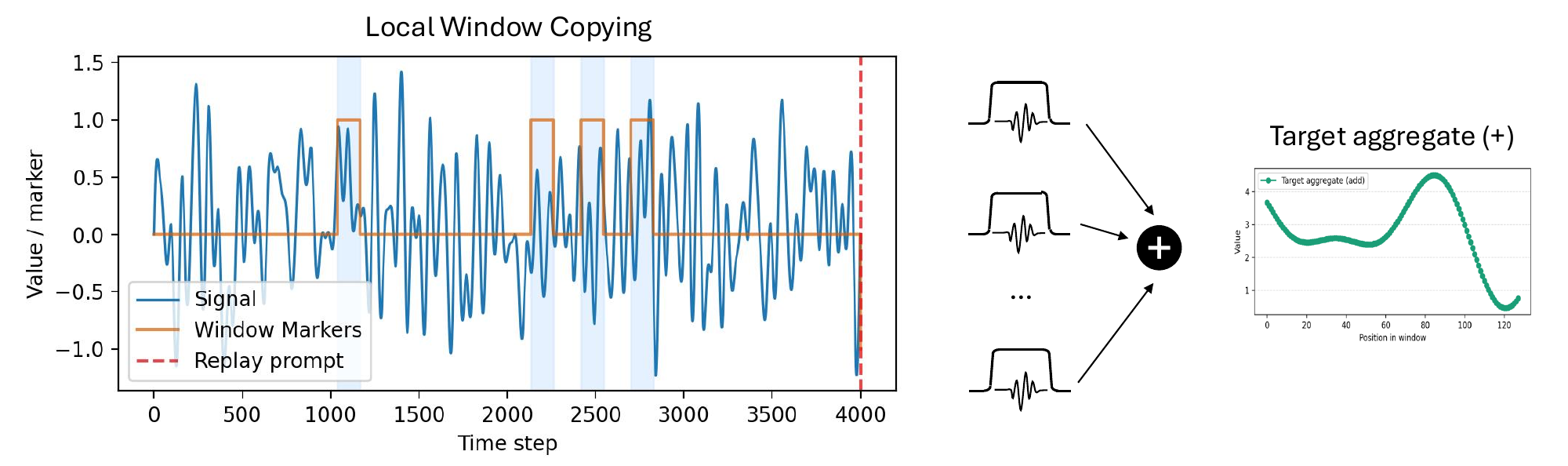}
        \caption{}
    \end{subfigure}%
    % (b)
    \begin{subfigure}[b]{0.28\textwidth}
        \centering
        \small
        \includegraphics[width=\textwidth]{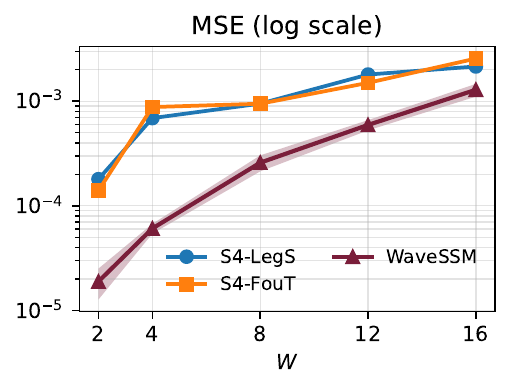}
        \caption{}
    \end{subfigure}
    \caption{Continuous Window Copying Task. \textbf{Left:} A  white-noise signal of length-$L{=}4000$ is streamed to the model together with sparse random markers defining non-overlapping windows of interest $W\in[2,20]$. After a variable delay, the model must selectively retrieve the marked segments and apply a prescribed operation (i.e., addition) to generate the output. \textbf{Right:} Results reported as reconstruction MSE in log scale as a function of the number of windows $W$. For WaveSSM, we report the mean with confidence intervals across the five wavelet variants introduced in this work under the scaled measure. }
    \label{fig:motiv_example}
\end{figure*}

%%%%%%%%%%%%%%%%%%%%%%%%%%%%%%%%%%%%%%%%%
%% Disjoint Temporal Windows
%%%%%%%%%%%%%%%%%%%%%%%%%%%%%%%%%%%%%%%%%
\section{Why Linear Time-Invariant HiPPO Kernels Cannot Preserve Disjoint Temporal Windows}

A natural question arises when comparing WaveSSM to polynomial-based state-space models: If HiPPO can, in principle, ``store the entire input history'' in its final state, would it be capable of storing and recalling \emph{multiple disjoint windows} on demand? In this section, we clarify this distinction and show that SSMs based on HiPPO fundamentally lack this ability due to the superpositional nature of their convolutional kernels.

\subsection{HiPPO as a Single Convolutional Memory Trace}

An HiPPO layer with kernel $K \in \mathbb{R}^{T}$ computes its final state as a linear time-invariant convolution:
\begin{equation}
    h_T \;=\; \sum_{\tau=1}^{T} K[\tau]\, x_{T-\tau}
    \label{eq:s4_conv}
\end{equation}
for an input sequence $(x_1,\dots,x_T)$.  
Thus, the model forms a \emph{single linear combination} of all past inputs, with each time index weighted by the same kernel shape $K$. This representation indeed contains global information about the input trajectory, but only in a compressed, linearly mixed form. Crucially, the LTI structure imposes that \emph{all} time positions are projected through the same kernel and accumulated into the same state vector.

\subsection{Superposition Prevents Temporal Attention}

Suppose two disjoint windows of the input occur at times $t_A$ and $t_B$, with values 
$w_A = \{x_t : t \in A\}$ and $w_B = \{x_t : t \in B\}$.  
From \eqref{eq:s4_conv}, their contributions to the final state are:
\begin{equation}
    h_T \;=\; 
    \sum_{t \in A} K[T-t]\, x_t 
    \;+\;
    \sum_{t \in B} K[T-t]\, x_t 
    \;+\; \text{(other times)}
\end{equation}
Both windows are therefore \emph{superimposed} through the same kernel weights. Because the mapping from each window to the final state shares the same functional form, the contributions of windows $A$ and $B$ are inseparable in general, i.e., the model does not possess distinct subspaces dedicated to different regions of time. Recovering $w_A$ without also recovering a mixture with $w_B$ is typically hard unless the windows have a special structure (e.g., are orthogonal under the kernel).

\subsection{Storing the Whole History Is Not Storing It Usefully}
The ability of HiPPO to preserve the entire history in its final state does not imply that this memory is \emph{addressable}.  
Given only $h_T$, a decoder cannot recover $w_A$ and $w_B$ independently because they are linearly entangled through the shared kernel. In contrast, WaveSSM constructs its state representation using temporally localized frame elements. Each state dimension has a localized temporal receptive field, enabling the model to assign disjoint windows to disjoint subsets of coordinates with minimal interference. This yields an \emph{attention-like} mechanism: multiple windows can be stored, preserved, and retrieved independently by design.

\textbf{Continuous Window Copying Task.}

We introduce this task to evaluate a model’s ability to store and manipulate \emph{temporally localized} information in an \emph{addressable} manner (Fig.~\ref{fig:motiv_example}a). The input is a time series of length $T$, where the first channel is a real-valued signal $x(t)$ and the second channel contains sparse marker events indicating the start and end of multiple windows of interest. The locations and durations of the windows are sampled randomly and do not overlap. At the end of the sequence, the model is instructed to apply a specified mathematical operation (e.g., \textit{addition}) to the windows and return the results as an output vector.

As shown in Fig.~\ref{fig:motiv_example}b, WaveSSM achieves roughly an order-of-magnitude lower reconstruction error when compared to S4 on the same conditions (2 layers of 128-dim hidden state). We attribute this gain to the structure of its latent state: each marked window maps to a distinct, largely non-overlapping subset of state coordinates, so multiple windows can be stored concurrently with minimal interference. Consequently, the required operations can read out window-specific content directly from the corresponding state components, without requiring to disentangle overlapping representations.

\section{Incorporating Wavelet Frames into the S4 Architecture}

% We implement WaveSSM as a structured instantiation of the S4 architecture. efficient parameterization and computation introduced in S4. The computational efficiency of S4 relies on constraining the state matrix A to a diagonal-plus-low-rank (DPLR) form, enabling stable diagonalization, fast convolution via Cauchy kernels

While wavelet frames induce a dense continuous-time state transition operator, we do not parameterize or learn the full state matrix $A \in \mathbb{R}^{N \times N}$. Instead, we embed the wavelet-initialized dynamics into the S4 architecture~\cite{DBLP:conf/iclr/GuGR22}, using its diagonal-plus-low-rank (DPLR) parameterization. This choice provides several advantages. DPLR reduces the parameter count and per-step computation from $O(N^2)$ to $O(N)$; furthermore, it enables efficient computation. Also, constraining $A$ to a normal-plus-low-rank form substantially improves numerical stability compared to unconstrained dense matrices, whose non-normality can lead to ill-conditioned powers and unstable kernel computation.

%%%%%%%%%%%%%%%%%%%%%%%%%%%%%%%%%%%%%%
%% Table
%%%%%%%%%%%%%%%%%%%%%%%%%%%%%%%%%%%%%%
\begin{table*}[t]
\centering
\small
\caption{AUROC ($\uparrow$) on the PTB-XL dataset for ECG multi-label/multi-class classification. The three best performing methods are highlighted in \textcolor{red}{red}$^{1}$ (First), \textcolor{blue}{blue}$^{2}$ (Second), and \textcolor{violet}{violet}$^{3}$ (Third). Result reproduced from~\cite{karami2025ms}. }
\label{tab:ptbxl_auroc}
\begin{tabular}{lcccccG}
\toprule
\textbf{Model} & \texttt{Diag} & \texttt{Sub-Diag} & \texttt{Super-Diag} & \texttt{Form} & \texttt{Rhythm} &   \texttt{Overall} \\
\midrule
Transformer~\cite{vaswani2017attention}        & 0.876 & 0.882 & 0.887 & 0.771 & 0.831 & 0.849\\
% MultiResNet        & 0.939 & 0.934 & 0.934 & 0.897 & 0.975 \\
Spacetime~\cite{DBLP:conf/iclr/ZhangSPDGR23}          & 0.941 & 0.933 & 0.929 & 0.883 & 0.967 & 0.931\\
S4~\cite{DBLP:conf/iclr/GuGR22}                 & 0.939 & 0.929 & \textbf{\textcolor{violet}{0.931}}$^3$ & 0.895 & \textbf{\textcolor{blue}{0.977}}$^2$ & 0.934\\
InceptionTime~\cite{DBLP:journals/datamine/FawazLFPSWWIMP20}      & 0.931 & 0.930 & 0.921 & 0.899 & 0.953 & 0.927\\
% LSTM               & 0.927 & 0.928 & 0.927 & 0.851 & 0.953 \\
Wavelet features~\cite{DBLP:journals/titb/StrodthoffWSS21}   & 0.855 & 0.859 & 0.874 & 0.757 & 0.890 & 0.847\\
Mamba~\cite{DBLP:journals/corr/abs-2312-00752}              & 0.929 & 0.905 & 0.912 & 0.876 & 0.952 & 0.915\\
MS-SSM~\cite{karami2025ms}              & 0.939 & 0.935 & 0.930 & 0.899 & \textbf{\textcolor{red}{0.980}}$^1$ & 0.937\\
\midrule
WaveSSM$_\text{MorletS}$ & 0.943  &  0.938  &  \textbf{\textcolor{violet}{0.931}}$^3$ &  0.906 &  \textbf{\textcolor{violet}{0.974}}$^3$ & 0.938\\
% WaveSSM$_\text{CMorletS}$  \\
WaveSSM$_\text{GaussS}$ &  \textbf{\textcolor{violet}{0.945}}$^3$ &  0.936 & 0.930 &  0.904 &  \textbf{\textcolor{violet}{0.974}}$^3$ & 0.938\\
% WaveSSM$_\text{CGaussS}$ \\
WaveSSM$_\text{MexhatS}$  &  \textbf{\textcolor{blue}{0.946}}$^2$ &  0.939  & \textbf{\textcolor{violet}{0.931}}$^3$  &  \textbf{\textcolor{red}{0.914}}$^1$ & 0.970 & \textbf{\textcolor{violet}{0.940}}$^3$\\
% WaveSSM$_\text{ShannonS}$ \\
% WaveSSM$_\text{fbspS}$ \\
WaveSSM$_\text{dpssS}$  & 0.941  &  \textbf{\textcolor{violet}{0.940}}$^3$ &  \textbf{\textcolor{violet}{0.931}}$^3$ & \textbf{\textcolor{blue}{0.913}}$^2$  & 0.969 & 0.939\\
WaveSSM$_\text{db6S}$  & \textbf{\textcolor{violet}{0.945}}$^3$& \textbf{\textcolor{blue}{0.941}}$^2$ & \textbf{\textcolor{blue}{0.932}}$^2$ & \textbf{\textcolor{red}{0.914}}$^1$& 0.972 & \textbf{\textcolor{blue}{0.941}}$^2$\\
\midrule
WaveSSM$_\text{MorletT}$   & \textbf{\textcolor{violet}{0.945}}$^3$  & \textbf{\textcolor{violet}{0.940}}$^3$  & 0.930 &  0.902 & 0.968 & 0.937 \\
% WaveSSM$_\text{CMorletT}$  \\
WaveSSM$_\text{GaussT}$  &  0.938 & 0.932  & 0.926  &  0.902 & 0.973 & 0.934 \\
% WaveSSM$_\text{CGaussT}$ \\
WaveSSM$_\text{MexhatT}$  & 0.944 &  \textbf{\textcolor{violet}{0.940}}$^3$ &  \textbf{\textcolor{blue}{0.932}}$^2$ &  \textbf{\textcolor{violet}{0.909}}$^3$ & 0.970 & 0.939\\
% WaveSSM$_\text{ShannonT}$ \\
% WaveSSM$_\text{fbspT}$ \\
WaveSSM$_\text{dpssT}$  &  \textbf{\textcolor{red}{0.947}}$^1$ &  0.937  &  \textbf{\textcolor{violet}{0.931}}$^3$ &  0.906 & 0.967 & 0.938 \\
% MS-SSM (S6)         & \textbf{0.941} & \textbf{0.936} & \textbf{0.935} & \textbf{0.901} & 0.979 \\
WaveSSM$_\text{db6T}$  &\textbf{\textcolor{blue}{0.946}}$^2$ & \textbf{\textcolor{red}{0.943}}$^1$ &\textbf{\textcolor{red}{0.934}}$^1$ & \textbf{\textcolor{blue}{0.913}}$^2$  & 0.973 &\textbf{\textcolor{red}{0.942}}$^1$\\
\bottomrule
\end{tabular}
\end{table*}

%%%%%%%%%%%%%%%%%%%%%%%%%%%%%%%%%%%%%%
%% Table
%%%%%%%%%%%%%%%%%%%%%%%%%%%%%%%%%%%%%%
\begin{table}[t]
\small
\centering
\caption{Univariate long sequence time-series forecasting results on Informer dataset reported as MSE (↓). Highlighted in \textcolor{red}{red}$^{1}$ (First), \textcolor{blue}{blue}$^{2}$ (Second), and \textcolor{violet}{violet}$^{3}$ (Third).
Full results in Appendix~\ref{extended_experiments}.}
\label{tab:summary_univariate_mse}
\begin{tabular}{lccccc}
\toprule
\textbf{Model} & \texttt{ETTh$_1$} & \texttt{ETTm$_1$} & \texttt{Weather} & \texttt{ECL} \\
\midrule
S4        & \textbf{\textcolor{violet}{0.116}}$^3$  & 0.292 & 0.245 & 0.432 \\
Informer  & 0.269 & 0.512 & 0.359 & 0.582 \\
LogTrans  & 0.273  & 0.598 & 0.388 & 0.624 \\
Reformer  & 2.112  & 1.793 & 2.087 & 7.019 \\
% LSTM$_a$  & 0.683 & 1.064 & 0.866 & 1.545 \\
% DeepAR   & 0.658 & 2.437 & 0.499 & 0.657 \\
% ARIMA    & 0.659 & 0.639 & 1.062 & 1.370 \\
Prophet  & 2.735 & 2.747 & 3.859 & 6.901 \\
\midrule
WaveSSM$_\text{MorletS}$ & \textbf{\textcolor{blue}{0.114}}$^2$ & 0.261& 0.241& 0.283\\
% WaveSSM$_\text{CMorletS}$  \\
WaveSSM$_\text{GaussS}$ & 0.144 & \textbf{\textcolor{blue}{0.245}}$^2$ & \textbf{\textcolor{red}{0.218}}$^1$ & \textbf{\textcolor{blue}{0.273}}$^2$ \\
% WaveSSM$_\text{CGaussS}$ \\
WaveSSM$_\text{MexhatS}$ & 0.128 & 0.247 & 0.233& 0.283\\
% WaveSSM$_\text{ShannonS}$ \\
% WaveSSM$_\text{fbspS}$ \\
WaveSSM$_\text{dpssS}$ & 0.160 & 0.260& 0.232& \textbf{\textcolor{red}{0.268}}$^1$\\
WaveSSM$_\text{db6S}$  & 0.117 & 0.264 &0.225 & 0.301 \\
\midrule
WaveSSM$_\text{MorletT}$  & 0.135  &0.293 & 0.233& 0.316 \\
% WaveSSM$_\text{CMorletT}$  \\
WaveSSM$_\text{GaussT}$ & \textbf{\textcolor{red}{0.102}}$^1$ & 0.256& 0.232& 0.312\\
% WaveSSM$_\text{CGaussT}$ \\
WaveSSM$_\text{MexhatT}$ & 0.130  & \textbf{\textcolor{blue}{0.245}}$^2$ & \textbf{\textcolor{blue}{0.220}}$^2$ & 0.310 \\
% WaveSSM$_\text{ShannonT}$ \\
% WaveSSM$_\text{fbspT}$ \\
WaveSSM$_\text{dpssT}$ & 0.138 &  \textbf{\textcolor{red}{0.231}}$^1$ & \textbf{\textcolor{violet}{0.223}}$^3$ & \textbf{\textcolor{violet}{0.277}}$^3$\\
WaveSSM$_\text{db6T}$  & 0.122 & \textbf{\textcolor{violet}{0.246}}$^3$& 0.229&0.313\\
\bottomrule
\end{tabular}
\end{table}

\section{Experimentation}

Having shown that wavelet frames yield an \emph{addressable}, time-local state representation and that these wavelet-initialized dynamics can be realized efficiently within the S4 DPLR parameterization, we now test whether this inductive bias translates into measurable gains on real sequence modeling problems. In particular, we focus on settings where non-stationary or transient structure is central (e.g., biosignals), while also reporting on more challenging long-context benchmarks as the Long Range Arena.

\begin{remark}
A natural concern is that replacing globally supported bases with highly localized wavelet frames would require a substantially larger state dimension to enough maintain coverage along the input sequence. In our experiments, \textit{this overhead does not arise}. During the experimentation, we use the \textit{same} order as polynomial-based baselines, to perform a fair comparison. Details on the hyperparameters could be found in Table~S\ref{tab:table_10}.
\end{remark}

% \textbf{\textit{Probing} for analyzing temporal locality.}
% To assess whether the learned state preserves separable information about multiple disjoint windows, we train linear probes to decode the content of each marked window from the model state. For each window index $i$, we take as a target the corresponding to the signal that must be reconstructed at that window $w_i$. We then train a linear probe $P_i$ using ridge regression to predict $w_i$ from the final state $c_T$, while keeping all SSM model parameters fixed.

% At inference time, we intervene directly on the state by selectively masking predefined blocks of state coordinates. Specifically, for each block $b$ we construct an ablated state $\tilde c_T^{(b)} = m^{(b)} \odot c_T$, where $m^{(b)} \in \{0,1\}^N$ is a binary mask that zeros the coordinates associated with block $b$. The masked state is then decoded using the same fixed probe, $\hat w_i^{(b)} = P_i \tilde c_T^{(b)}$, and the resulting increase in reconstruction error relative to the unablated state is recorded. Because the probe is held fixed, any degradation in performance can be attributed causally to the removal of the corresponding state components.

% WaveSSM exhibits strongly localized decoding and low cross-window interference, consistent with its frame-based time localization, whereas global SSM baselines show diffuse decoding and increased interference.

%%%%%%%%%%%%%%%%%%%%%%%%%%%%%%%%%%%%%%
%% Table
%%%%%%%%%%%%%%%%%%%%%%%%%%%%%%%%%%%%%%
\begin{table}[t]
\centering
\small
\caption{\textbf{SC35 classification.}
Classification accuracy ($\uparrow$) in unprocessed audio signals ($L=16{,}000$) and test-time frequency shift at half the training frequency. Highlighted in \textcolor{red}{red}$^{1}$ (First), \textcolor{blue}{blue}$^{2}$ (Second), and \textcolor{violet}{violet}$^{3}$ (Third). All results are reproduced in this work for 3 different seeds.}
\begin{tabular}{lcc}
\toprule
 \textbf{Model} & \texttt{Raw(16kHz)} & \texttt{0.5$\times$(8kHz)}\\
\midrule
% Transformer        & $\times$ & $\times$ \\
% Performer          & 30.77 & 30.68 \\
% \midrule
% ODE-RNN            & $\times$ & $\times$ \\
% NRDE               & 16.49 & 15.12 \\
% \midrule
% ExpRNN             & 11.6  & 10.8 \\
% LipschitzRNN       & $\times$ & $\times$ \\
% \midrule
% CKConv             & 71.66 & 65.96 \\
% WaveGAN-D          & \underline{96.25} & $\times$ \\
% \midrule
% LSSL               & $\times$ & $\times$ \\
% S4                 & \textbf{98.32} & \textbf{96.30} \\
S4$_\text{LegS}$             & 96.08\,$\scriptstyle \pm 0.15$  & \textbf{\textcolor{blue}{91.32\,$\scriptstyle \pm 0.17$}}$^2$ \\
S4$_\text{FouT}$              & 95.27\,$\scriptstyle \pm 0.20$ & \textbf{\textcolor{red}{91.59\,$\scriptstyle \pm 0.23$}}$^1$ \\
\midrule
WaveSSM$_\text{MorletS}$ &  \textbf{\textcolor{violet}{96.42\,$\scriptstyle \pm 0.26$}}$^3$ &  90.14$\scriptstyle \pm 0.20$ \\
% WaveSSM$_\text{CMorletS}$  \\
WaveSSM$_\text{GaussS}$ &  \textbf{\textcolor{blue}{96.47\,$\scriptstyle \pm 0.16$}}$^2$ &  \textbf{\textcolor{violet}{90.84\,$\scriptstyle \pm 0.13$}}$^3$ \\
% WaveSSM$_\text{CGaussS}$ \\
WaveSSM$_\text{MexhatS}$ &  96.17\,$\scriptstyle \pm 0.16$ &  89.70\,$\scriptstyle \pm 0.31$ \\
% WaveSSM$_\text{ShannonS}$ \\
% WaveSSM$_\text{fbspS}$ \\
WaveSSM$_\text{dpssS}$ &   96.09\,$\scriptstyle \pm 0.28$ &  89.61\,$\scriptstyle \pm 0.15$ \\
% WaveSSM$_\text{dbS}$  & \\
\midrule
WaveSSM$_\text{MorletT}$ &   96.27\,$\scriptstyle \pm 0.13$ &  86.20\,$\scriptstyle \pm 0.25$ \\
% WaveSSM$_\text{CMorletT}$  \\
WaveSSM$_\text{GaussT}$ &  96.17\,$\scriptstyle \pm 0.21$ & 89.81\,$\scriptstyle \pm 0.31$\\
% WaveSSM$_\text{CGaussT}$ \\
WaveSSM$_\text{MexhatT}$ &  96.27\,$\scriptstyle \pm 0.14$ & 84.14\,$\scriptstyle \pm 0.22$ \\
% WaveSSM$_\text{ShannonT}$ \\
% WaveSSM$_\text{fbspT}$ \\
WaveSSM$_\text{dpssT}$ &  \textbf{\textcolor{red}{96.55\,$\scriptstyle \pm 0.22$}}$^1$ & 90.78\,$\scriptstyle \pm 0.27$ \\
% WaveSSM$_\text{dbT}$  & \\
\bottomrule
\end{tabular}
\label{tab:sc35}
\end{table}

%%%%%%%%%%%%%%%%%%%%%%%%%%%%%%%%%%%%%%
%% Table
%%%%%%%%%%%%%%%%%%%%%%%%%%%%%%%%%%%%%%
\begin{table*}[t]
    \centering
    \vskip-5pt
    % \small
    % \footnotesize
    \small
    \caption{Accuracy ($\uparrow$) on the Long Range Arena (LRA) benchmark. 
“\xmark” indicates infeasible to learn a non-trivial solution. Highlighted in \textcolor{red}{red}$^{1}$ (First), \textcolor{blue}{blue}$^{2}$ (Second), and \textcolor{violet}{violet}$^{3}$ (Third).
Extended comparison can be found in Table~S\ref{tab:table_9} and hyperparameter settings are detailed in Table~S\ref{tab:table_10}. 
}
    \begin{tabular}{l c c c c c c}
\toprule
\textbf{Method} &
\shortstack{\texttt{ListOps} \\ ${\scriptstyle(2048)}$} & 
\shortstack{\texttt{Text} \\ ${\scriptstyle(4096)}$} &
\shortstack{\texttt{Retrieval} \\ ${\scriptstyle(4000)}$} &
\shortstack{\texttt{Image} \\ ${\scriptstyle(1024)}$} &
\shortstack{\texttt{Pathfinder} \\ ${\scriptstyle(1024)}$} &
\shortstack{\texttt{PathX} \\ ${\scriptstyle(16,384)}$}\\
\midrule

Transformer    & 36.37 & 64.27 & 57.46 & 42.44 & 71.40 & \xmark \\
\midrule

S4$_\text{LegS}$   & 59.60 & 86.82 & 90.90 & 88.65 & 94.20 &  \textbf{\textcolor{red}{96.35}}$^1$   \\
S4$_\text{FouT}$   & 57.88 & 86.34 & 89.66 & 89.07 & \textbf{\textcolor{blue}{94.46}}$^2$ & \xmark   \\
% S5       & \textbf{62.15} & 89.31 & 91.40 & 88.00 & \textbf{95.33} & 98.58  \\
\midrule
WaveSSM$_\text{MorletS}$ & 60.88 & \textbf{\textcolor{violet}{89.11}}$^3$ & \textbf{\textcolor{red}{91.97}}$^1$ & 89.26 & \textbf{\textcolor{red}{94.94}}$^1$ &  \xmark\\
% WaveSSM$_\text{CMorletS}$  \\
WaveSSM$_\text{GaussS}$ & 59.62 & 88.06 & 91.19 & \textbf{\textcolor{violet}{89.68}}$^3$ & \xmark &  \xmark\\
% WaveSSM$_\text{CGaussS}$ \\
WaveSSM$_\text{MexhatS}$ & 60.13& \textbf{\textcolor{red}{89.33}}$^1$ & \textbf{\textcolor{blue}{91.88}}$^2$ & 89.35 & \xmark &  \xmark\\
% WaveSSM$_\text{ShannonS}$ \\
% WaveSSM$_\text{fbspS}$ \\
WaveSSM$_\text{dpssS}$ & \textbf{\textcolor{blue}{61.74}}$^2$ & 88.56& 91.52& 89.41& \xmark &  \xmark\\
\midrule
WaveSSM$_\text{MorletT}$ & \textbf{\textcolor{red}{61.79}}$^1$ & \textbf{\textcolor{blue}{89.23}}$^2$ & \textbf{\textcolor{blue}{91.88}}$^2$ & \textbf{\textcolor{red}{89.86}}$^1$ & 73.31 & \xmark\\
% WaveSSM$_\text{CMorletT}$  \\
WaveSSM$_\text{GaussT}$ &60.93 &87.65  & 90.89 & 87.26 & \textbf{\textcolor{violet}{94.33}}$^3$ & \xmark\\
% WaveSSM$_\text{CGaussT}$ \\
WaveSSM$_\text{MexhatT}$ & \textbf{\textcolor{violet}{61.34}}$^3$ & 89.05 & \textbf{\textcolor{violet}{91.68}}$^3$ & \textbf{\textcolor{blue}{89.74}}$^2$  & 73.42 & \xmark\\
% WaveSSM$_\text{ShannonT}$ \\
% WaveSSM$_\text{fbspT}$ \\
WaveSSM$_\text{dpssT}$ &60.98 & 89.01 & 91.48 &  89.04 & 62.51  & \xmark\\

\bottomrule

\vspace{-20pt}
\end{tabular}

    \label{tab:table_2}
\end{table*}

\subsection{ECG Transient Morphology}

Electrocardiograms (ECGs) are commonly used as one of the first examination tools for assessing and diagnosing cardiovascular diseases. However, its interpretation remains a challenging task. We use the publicly available PTB-XL dataset~\cite{wagner2020ptb}, consisting of recordings of 10 seconds each, annotated with 71 diagnostic statements following the SCP-ECG standard. In this setting, WaveSSM is particularly well matched as many clinically relevant ECG findings, such as arrhythmias, conduction blocks, or ischemic patterns, are characterized by localized and transient waveform morphology.  Table~\ref{tab:ptbxl_auroc} reports AUROC results across all PTB-XL classification tasks, where WaveSSM variants consistently achieve the strongest overall performance across nearly all categories. In particular, the Daubiches-derived models perform particularly well, with WaveSSM$_{\text{db6T}}$ attaining the highest overall score. The most pronounced gains appear in the \texttt{Form} and \texttt{Diag} tasks; only on the \texttt{Rhythm} task do state-of-the-art models such as MS-SSM~\cite{karami2025ms} remain marginally superior.

\subsection{Time Series Forecasting}

We also evaluate our model on the standard Informer long-sequence forecasting benchmarks~\cite{DBLP:conf/aaai/ZhouZPZLXZ21}, which include the Electricity Transformer Temperature (ETT), Weather, and Electricity Consumption Load (ECL) benchmarks. We follow
the experimental protocol of~\cite{DBLP:conf/iclr/GuGR22}. Table~2 reports univariate forecasting results measured by mean squared error (MSE) for the longest prediction horizon evaluated on each task (details on the horizons are provided in Appendix~\ref{experimental_details_informer}, with extended results for both univariate and multivariate predictions in Appendix~\ref{extended_experiments}). Across all datasets, WaveSSM variants achieve consistently strong performance. Notably, WaveSSM improves upon its polynomial-based counterpart, S4, on most of the evaluated tasks, with the exception of a few time horizons.

\subsection{Raw Speech Classification}

The Speech Commands (SC35)~\cite{DBLP:journals/corr/abs-1804-03209} is a 35-way spoken‐word classification task using raw audio. Each input is a 1-second signal sampled at 16 kHz. While previous studies typically extract spectral features, state-space models like S4 have demonstrated the ability to learn directly from time-domain inputs. As shown in Table~\ref{tab:sc35}, in the in-distribution setting (Raw, $16\,\mathrm{kHz}$), WaveSSM surpasses S4 baselines, achieving accuracies around $96.5\%$. We also quantify the performance gains under a zero-shot resampling scenario, where each test waveform is uniformly subsampled to 8 kHz without retraining; however, under this frequency shift, S4 still remains more robust.

\subsection{Long Range Arena}

Lastly, we evaluate in the more general Long Range Arena (LRA) benchmark~\cite{DBLP:conf/iclr/Tay0ASBPRYRM21}, which is a widely used suite of challenging long-context tasks designed to evaluate a model’s ability to capture long-range dependencies across diverse modalities, including symbolic reasoning over long texts and serialized images.

As shown in Table~\ref{tab:table_2}, WaveSSM variants achieve consistently strong results across LRA tasks when directly compared to S4. In particular, WaveSSM surpasses it on four out of six tasks: ListOps, Text, Retrieval, and Image classification. However, WaveSSM exhibits limitations in very long-context regimes: some variants do not scale to Pathfinder, and all WaveSSM variants are infeasible on PathX (similarly to most of S4 variants). We view this as a limitation of the current formulation, suggesting a trade-off between the strong locality and multiscale inductive bias that benefits transient signals and the global, very long-range dependencies required by the hardest LRA settings~\cite{DBLP:journals/corr/abs-2508-20441,DBLP:conf/iclr/AmosB024}.

% \subsection{Additional Experiments and Ablations}

\section{Conclusions \& Limitations}

WaveSSM shows that replacing globally supported orthogonal bases with localized wavelet frames yields an interesting state representation for sequence modeling. By inducing continuous-time SSM dynamics directly from frame projections, the hidden state becomes temporally addressable: disjoint regions of the input activate largely separate subsets of state, enabling selective retrieval and manipulation of multiple non-overlapping windows, achieving an \textit{attention-like} behavior.

\clearpage

\section*{Impact Statement}
This paper presents work whose goal is to advance the field of Machine Learning. There are many potential societal consequences of our work, none of which we feel must be specifically highlighted here.

% In the unusual situation where you want a paper to appear in the
% references without citing it in the main text, use \nocite

\bibliography{refs}
\bibliographystyle{icml2026}
%%%%%%%%%%%%%%%%%%%%%%%%%%%%%%%%%%%%%%%%%%%%%%%%%%%%%%%%%%%%%%%%%%%%%%%%%%%%%%%
%%%%%%%%%%%%%%%%%%%%%%%%%%%%%%%%%%%%%%%%%%%%%%%%%%%%%%%%%%%%%%%%%%%%%%%%%%%%%%%
% APPENDIX
%%%%%%%%%%%%%%%%%%%%%%%%%%%%%%%%%%%%%%%%%%%%%%%%%%%%%%%%%%%%%%%%%%%%%%%%%%%%%%%
%%%%%%%%%%%%%%%%%%%%%%%%%%%%%%%%%%%%%%%%%%%%%%%%%%%%%%%%%%%%%%%%%%%%%%%%%%%%%%%
\newpage
\appendix
\onecolumn

\section*{\center \Large\bfseries  \textit{Supplementary Material:} \\Multiscale State-Space Models for Non-stationary Signal Attention}

\section{Frames and wavelets}
\label{app:wavelet-frames-parseval}

We briefly present the framework that we worked on for this paper. The details can be found in any treatise of frames and wavelets, such as \cite{christensen2003introduction}. 
\subsection{Frames}

\label{app:general-wavelets}

We work in the Hilbert space $L^2(0,1)$ with inner product $\langle f,g\rangle:=\int_0^1 f(t)g(t)\,dt$
and norm $\|f\|_2:=\langle f,f\rangle^{1/2}$. 

\paragraph{Frames.}
A countable family $\Phi=\{\phi_\lambda\}_{\lambda\in\Lambda}\subset L^2(0,1)$ is a \emph{frame} if there exist constants
$0<A\le B<\infty$ such that for all $f\in L^2(0,1)$,
\begin{equation}
A\|f\|_2^2 \ \le\ \sum_{\lambda\in\Lambda} |\langle f,\phi_\lambda\rangle|^2 \ \le\ B\|f\|_2^2.
\label{eq:frame-bounds}
\end{equation}
We say that the frame is \emph{tight} if $A=B$, and \emph{Parseval} if $A=B=1$.

Let $U:L^2(0,1)\to \ell^2(\Lambda)$ be the \emph{analysis operator}, defined component-wise as $(Uf)_\lambda:=\langle f,\phi_\lambda\rangle$.
Then the corresponding \emph{frame operator} is given by $S:=U^*U$, or explicitly as
\[
Sf=\sum_{\lambda\in\Lambda} \langle f,\phi_\lambda\rangle\,\phi_\lambda.
\]
Note that for a Parseval frame, $S=I$ and $U$ is an isometry $\|Uf\|_{\ell^2}=\|f\|_2$, while for a tight frame we have a similar reconstruction expression $f=\frac{1}{A}\sum_{\lambda\in \Lambda}\langle f,\phi_\lambda\rangle \phi_\lambda$.

\paragraph{Nonlinear $N$-term approximation in a dictionary/frame.}
For a countable dictionary $\mathcal D=\{\varphi_\lambda\}_{\lambda\in\Lambda}$ we define
\begin{equation}
\sigma_N^{\mathcal D}(f)_2
:=\inf_{\substack{\Gamma\subset\Lambda\\|\Gamma|=N}}
\ \inf_{\{c_\lambda\}_{\lambda\in\Gamma}}
\left\|f-\sum_{\lambda\in\Gamma} c_\lambda \varphi_\lambda\right\|_2 .
\label{eq:sigmaN}
\end{equation}
We deal with nonlinear $N$-approximations in the Appendix \ref{app:main-approx-theorem-parseval}.

\paragraph{A key truncation inequality for Parseval frames.}
The following lemma will be used in the proof of Theorem \ref{thm:parseval-frame-advantage}. In particular, it replaces the ``Parseval tail identity'' used for orthonormal bases.

\begin{lemma}[Truncation bound for Parseval frames]
\label{lem:parseval-frame-trunc}
Assume $\Phi=\{\phi_\lambda\}_{\lambda\in\Lambda}$ is a Parseval frame. For any subset $\Gamma\subset\Lambda$,
define the truncated synthesis (``keep coefficients in $\Gamma$'')
\[
T_\Gamma f \ :=\ \sum_{\lambda\in\Gamma} \langle f,\phi_\lambda\rangle\,\phi_\lambda.
\]
Then for all $f\in L^2(0,1)$,
\begin{equation}
\|f-T_\Gamma f\|_2^2 \ \le\ \sum_{\lambda\notin\Gamma} |\langle f,\phi_\lambda\rangle|^2.
\label{eq:parseval-trunc}
\end{equation}
\end{lemma}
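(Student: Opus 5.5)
Since $\Phi$ is Parseval, the frame operator is the identity, so $f = Sf = \sum_{\lambda\in\Lambda}\langle f,\phi_\lambda\rangle\phi_\lambda$ with unconditional convergence in $L^2(0,1)$. Subtracting $T_\Gamma f$ then gives
\[
f - T_\Gamma f = \sum_{\lambda\notin\Gamma} c_\lambda \phi_\lambda, \qquad c_\lambda := \langle f,\phi_\lambda\rangle .
\]
The plan is to show that the synthesis operator $U^*:\ell^2(\Lambda)\to L^2(0,1)$, $U^* c = \sum_\lambda c_\lambda \phi_\lambda$, has norm at most $1$. Applied to the sequence $c\,\mathbf 1_{\Lambda\setminus\Gamma}$, which lies in $\ell^2$ because $\sum_\lambda |c_\lambda|^2 = \|f\|_2^2 < \infty$, this yields
\[
\|f - T_\Gamma f\|_2^2 = \|U^*(c\,\mathbf 1_{\Lambda\setminus\Gamma})\|_2^2 \le \sum_{\lambda\notin\Gamma}|c_\lambda|^2,
\]
which is exactly \eqref{eq:parseval-trunc}.

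For the norm bound, recall that the upper frame bound $B=1$ means $\|Uf\|_{\ell^2}\le \|f\|_2$, so $\|U\|\le 1$. Since $\|U^*\| = \|U\|$, we get $\|U^*\|\le 1$. To verify directly that $U^*$ is the adjoint, take any finitely supported $d$ and test against $g\in L^2$:
\[
\Big\langle \sum_\lambda d_\lambda\phi_\lambda, g\Big\rangle = \sum_\lambda d_\lambda\langle \phi_\lambda, g\rangle \le \|d\|_{\ell^2}\,\|Ug\|_{\ell^2} \le \|d\|_{\ell^2}\,\|g\|_2
\]
by Cauchy--Schwarz. Taking the supremum over $\|g\|_2=1$ gives $\|\sum_\lambda d_\lambda\phi_\lambda\|_2\le\|d\|_{\ell^2}$.

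The only real obstacle is the infinite-sum bookkeeping when $\Lambda\setminus\Gamma$ is infinite. The bound above shows that the partial sums of $\sum_\lambda d_\lambda\phi_\lambda$ form a Cauchy net whenever $d\in\ell^2$: the norm of a block of terms is controlled by the $\ell^2$ tail of $d$. Hence the series converges unconditionally, $U^*$ extends to all of $\ell^2$ with the same bound, and the tail series $\sum_{\lambda\notin\Gamma} c_\lambda\phi_\lambda$ equals $f - T_\Gamma f$. When $\Gamma$ is finite, $T_\Gamma f$ is a finite sum and the identification is immediate; in general, convergence of $T_\Gamma f$ follows from the same argument applied to $c\,\mathbf 1_\Gamma$.

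Equality in \eqref{eq:parseval-trunc} can fail here, unlike for orthonormal bases, because the atoms $\phi_\lambda$ need not be orthogonal. That is why the lemma is stated as an inequality.
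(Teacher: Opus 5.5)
Your proof is correct and takes essentially the same route as the paper: write $f-T_\Gamma f=U^*(I-P_\Gamma)Uf$ using $U^*U=I$, then bound it via $\|U^*\|\le 1$. Your additional check that the synthesis series converges unconditionally when $\Lambda\setminus\Gamma$ is infinite fills in bookkeeping that the paper leaves implicit.
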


\begin{proof}
Let $U$ be the analysis operator. For a Parseval frame, $S=U^*U=I$.
Let $P_\Gamma:\ell^2(\Lambda)\to \ell^2(\Lambda)$ be the coordinate projection that zeroes out indices not in $\Gamma$.
Then $T_\Gamma=U^*P_\Gamma U$ and
\[
f-T_\Gamma f = U^*(I-P_\Gamma)Uf.
\]
Using $\|U^*\|=1$ for Parseval frames (since $U$ is an isometry), we obtain
\[
\|f-T_\Gamma f\|_2 \le \|(I-P_\Gamma)Uf\|_{\ell^2}.
\]
Finally, squaring the last expression gives us \eqref{eq:parseval-trunc} because $\|(I-P_\Gamma)Uf\|_{\ell^2}^2=\sum_{\lambda\notin\Gamma}|\langle f,\phi_\lambda\rangle|^2$.
\end{proof}

\subsection{Wavelets}
\label{ssec:wavelets}

We briefly recall wavelet constructions through the standard operations of scaling, translation and modulation. A classical reference for a more detailed (and more general) treatment is \cite{daubechies1992ten}. 

\paragraph{Mother wavelet.} A function $\psi:\mathbb{R}\to \mathbb{R}$ is called \textit{mother wavelet} if the the following conditions hold:
\begin{enumerate}
    \item $    \int_\mathbb{R}|\psi(t)|dt <\infty$,
    \item $\int_{\mathbb{R}}|\psi(t)|^2 dt<\infty$,
\end{enumerate}
although sometimes in the practice one substitutes the first condition with $\int_\mathbb{R} \psi(t)dt=0$, i.e. it is required that the function $\psi$ has zero mean and finite energy (it is in $L^2(\mathbb{R})$). 

\paragraph{Scaling and translation.} Given a mother wavelet $\psi$, one constructs a family of wavelets of the form 
\[
\psi_{a,b}(t):=\frac{1}{\sqrt{a}}\psi\big(\frac{t-b}{a}\big),\quad a\neq0,b\in\mathbb{R}. 
\]
The intuition behind this is that the translation allows us to ``focus'' the wavelet to the desired locality in $\mathbb{R}$, while scaling allows us to reduce the ``window of focus'' as desired.

\paragraph{Vanishing moments.}
A (real) mother wavelet $\psi$ on $\mathbb R$ is said to have $p$ vanishing moments if
\begin{equation}
\int_{\mathbb R} t^m \psi(t)\,dt = 0,\qquad m=0,1,\dots,p-1.
\label{eq:vanishing-moments}
\end{equation}
This is the analytic mechanism behind small coefficients for smooth functions, a property we will assume in Theorem \ref{thm:parseval-frame-advantage}, see \cite{daubechies1992ten}.

\paragraph{Modulation.}
In some constructions, one also considers \emph{modulated wavelets}, obtained by multiplying the mother wavelet by a complex exponential (or its trigonometric components). Given a real-valued mother wavelet $\psi$ and a frequency parameter $\xi\in\mathbb{R}$, one defines
\[
\psi^{(\xi)}(t) := e^{i\xi t}\psi(t),
\]
and correspondingly
\[
\psi^{(\xi)}_{a,b}(t)
=
\frac{1}{\sqrt{a}}\,e^{i\xi t}\,
\psi\!\left(\frac{t-b}{a}\right).
\]
While scaling primarily controls the frequency range of the wavelet (with smaller scales corresponding to higher frequencies), modulation allows one to shift the frequency content within a given scale. This leads to a finer localization in the time--frequency plane and is particularly useful in applications where oscillatory behavior at comparable scales but different frequencies needs to be resolved. Such constructions appear, for example, in wavelet packet decompositions and in connections between wavelet analysis and time--frequency methods \cite{mallat1999wavelet}.

% \subsection{Further properties of wavelets}
% \label{ssec:wavelet-properties}

\subsection{The wavelet frames used in WaveSSM}
\label{app:wavessm-frames}

In the main paper, the latent state is defined in the coefficient space of a \emph{time--frequency frame}
$\Phi=\{\phi_n\}_{n=0}^{N-1}$ sampled on a grid of length $L$, giving a discretized frame matrix
$F\in\mathbb R^{N\times L}$ whose rows are sampled atoms. The general atom form is
\begin{equation}
\phi_{k,m,l}(t) \ =\ \psi_k\!\left(\frac{t-\tau_m}{\sigma_k}\right) g_\ell(t),
\label{eq:general-atom}
\end{equation}
where $\sigma_k$ is a temporal scale, $\tau_m$ is a time shift, and $g_\ell$ is a modulation factor. In particular we construct wavelet family with scaling, translation and modulation involved.

\paragraph{Morlet (real) atoms.}
Morlet atoms are real sinusoids modulated by a Gaussian envelope:
\begin{equation}
\phi_{k,m,\ell}(t) \ =\ \exp\!\Big(-\frac{(t-\tau_m)^2}{2\sigma_k^2}\Big)\cos(\omega_\ell t).
\label{eq:morlet}
\end{equation}
They are strongly localized in time and frequency.

\paragraph{Gaussian derivative atoms.}
For integer order $P\ge 1$, Gaussian derivative atoms are successive derivatives of a Gaussian:
\begin{equation}
\phi_{k,m}(t) \ =\ \frac{d^P}{dt^P}\exp\!\Big(-\frac{(t-\tau_m)^2}{2\sigma_k^2}\Big).
\label{eq:gauss-deriv}
\end{equation}
These atoms have at least zero-mean (cancellation) for $P\ge 1$ due to being derivatives of a fast-decaying function.

\paragraph{Mexican hat.}
The Mexican hat wavelet is the second derivative of a Gaussian (a special case of \eqref{eq:gauss-deriv} with $P=2$),
commonly used for broadband, pulse-like transients.

\paragraph{DPSS atoms.}
Discrete prolate spheroidal sequences (DPSS), also known as Slepians, provide an alternative construction optimized for spectral concentration. Given a window length $M$ and bandwidth $W$, DPSS tapers $v_k$~\cite{slepian1978prolate} maximize energy concentration within the specified frequency band. To construct a DPSS frame over a signal of length $L$, we select a set of bandwidths $W_k$ and, for each compute a corresponding set of DPSS tapers. Each taper is translated to multiple center locations $\tau_{k,m}$ across the signal domain.

\begin{equation}
\phi_{k,m}(t) = v_k(t-\tau_{k,m}).
\label{eq:dpss}
\end{equation}

\section{Function approximation via Parseval wavelet frames}
\label{app:main-approx-theorem-parseval}

\subsection{Setting}
We compare nonlinear $N$-term approximation in:
(i) a (tight) wavelet frame $\Phi$ (localized atoms), and
(ii) the global polynomial dictionary $\mathcal L$ (shifted Legendre atoms), both under the same $N$-term budget. Although the following result holds with general polynomial dictionaries, we present the result for Legendre family, as it is more relevant to the SSM setting. We refer to Appendix \ref{app:general-wavelets} for the relevant definition of non-linear $N$-approximation, while for more general approximation results of this type, we refer to \cite{devore1998nonlinear,cohen1993wavelets}. 

\paragraph{Global polynomial dictionary (HiPPO/Legendre).}
Let $\mathcal L=\{\ell_n\}_{n\ge 0}$ be the orthonormal shifted Legendre functions on $[0,1]$:
\[
\ell_n(t) := \sqrt{2n+1}\,P_n(2t-1), \qquad n\ge 0,
\]
where $P_n$ is the $n$-th (standard) Legendre polynomial on $[-1,1]$. Then we have $\langle \ell_n,\ell_m\rangle=\delta_{nm}$.

\paragraph{Piecewise Sobolev class (finite change points).}
To simplify the notation and the argument that follows, we restrain ourselves to Sobolev spaces and in particular, their following straight-forward generalization. Let $s>0$ be fixed and let $M\in\mathbb N$. We say $f\in \mathrm{PW}H^s_M(0,1)$ if there exist breakpoints
$0=t_0<t_1<\cdots<t_M=1$ such that each restriction $f|_{(t_{m-1},t_m)}\in H^s(t_{m-1},t_m)$, where $H^s(t_{m-1},t_m)$ is Sobolev space of functions (see for example \cite{adams2003sobolev}).

This setting clearly allows jump discontinuities at $\{t_1,\dots,t_{M-1}\}$.

\begin{theorem}[Parseval wavelet frames approximations]
\label{thm:parseval-frame-advantage}
Assume $\Phi=\{\phi_\lambda\}_{\lambda\in\Lambda}$ is a Parseval frame in $L^2(0,1)$ such that:
\begin{enumerate}
\item[a)] The functions (atoms) $\phi_\lambda$ are multiscale localized in time, i.e. 
there exists a scale index $j(\lambda)\in\mathbb Z$ such that
\begin{equation}
\mathrm{diam}(\mathrm{supp}(\phi_\lambda)) \ \lesssim\ 2^{-j(\lambda)},
\qquad
\#\{\lambda: j(\lambda)=j\}\ \asymp\ 2^j.
\label{eq:multiscale-count}
\end{equation}
\item[b)] Only $O(1)$ functions $\phi_\lambda$ overlap any point at a given scale, i.e. there exists a constant $C$ such that
\[
\sup_{x\in (0,1)}\#\{\lambda\mid j(\lambda)=j,\, x\in \text{supp}(\phi_\lambda)\}\leq C.
\]

\item[c)] Assume further that $\Phi$ has sufficient cancellation/smoothness so that the
following \emph{coefficient decay on smooth pieces} holds: for each $s>0$ there is $C_s$ such that if
$f\in H^s(a,b)$ and $\mathrm{supp}(\phi_\lambda)\subset (a,b)$ then
\begin{equation}
|\langle f,\phi_\lambda\rangle| \ \le\ C_s\, 2^{-j(\lambda)\,(s+1/2)}\,\|f\|_{H^s(a,b)}.
\label{eq:coef-decay-assump}
\end{equation}
(For standard wavelet-type systems this is the usual vanishing-moment/Taylor-cancellation estimate.)

In addition, assume that $\phi_\lambda$ satisfy the wavelet-type scaling bound $\|\phi_\lambda\|_{L^1(0,1)} \ \le\ C_1\,2^{-j(\lambda)/2}$ for all $(\lambda\in\Lambda)$.
\end{enumerate}

Then:

\begin{enumerate}
\item \textbf{(Wavelet-frame rate on piecewise smooth signals)}  
If $s>\tfrac12$ and $f\in \mathrm{PW}H^s_M(0,1)$, then
\begin{equation}
\sigma_N^{\Phi}(f)_2 \ \le\ C\, N^{-s},
\label{eq:wavelet-frame-rate}
\end{equation}
where $C$ depends on $s,M$ and the frame constants in \eqref{eq:multiscale-count}--\eqref{eq:coef-decay-assump}, but not on $N$.

\item \textbf{(Global Legendre modes cannot beat $N^{-1/2}$ on a jump, even nonlinearly)}  
There exists $f_\star\in \mathrm{PW}H^s_2(0,1)$ (a step function) and $c>0$ such that for all $N\ge 1$,
\begin{equation}
\sigma_N^{\mathcal L}(f_\star)_2 \ \ge\ c\,N^{-1/2}.
\label{eq:legendre-lower-parseval}
\end{equation}
\end{enumerate}

% Consequently, for any $s>\tfrac12$, wavelet-frame $N$-term approximation can achieve $O(N^{-s})$ on $\mathrm{PW}H^s_M$,
% while global polynomial $N$-term approximation has worst-case rate at best $\Omega(N^{-1/2})$ on the same class.
\end{theorem}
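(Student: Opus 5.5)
For part 1, I will approximate by truncated synthesis and apply Lemma~\ref{lem:parseval-frame-trunc}. Since $\sigma_N^\Phi(f)_2\le\|f-T_\Gamma f\|_2$ for any $|\Gamma|\le N$, it suffices to build a set $\Gamma$ of size $O(N)$ with
\[
\sum_{\lambda\notin\Gamma}|\langle f,\phi_\lambda\rangle|^2\lesssim N^{-2s}.
\]
At each scale $j$ I split the atoms into two groups. An atom is \emph{good} if its support lies inside a single smooth piece $(t_{m-1},t_m)$, and \emph{bad} if its support contains a breakpoint. By assumption b) and the support bound in a), at most $C(M-1)$ atoms per scale are bad, since a support of diameter $\lesssim 2^{-j}$ can contain a breakpoint only if the atom overlaps it.

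\emph{Bad coefficients.} For bad atoms I use $|\langle f,\phi_\lambda\rangle|\le\|f\|_{L^\infty}\|\phi_\lambda\|_{L^1}\le C_1\|f\|_\infty 2^{-j/2}$. Here $s>\tfrac12$ guarantees, via Sobolev embedding on each piece, that $f$ is bounded.

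\emph{Good coefficients.} For good atoms, assumption c) gives
\[
|\langle f,\phi_\lambda\rangle|\le C_s2^{-j(s+1/2)}\max_m\|f\|_{H^s(t_{m-1},t_m)}.
\]
There are $\asymp 2^j$ of them at scale $j$. Their squared contribution is therefore $\lesssim 2^j\cdot 2^{-j(2s+1)}=2^{-2js}$.

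\emph{Choice of $\Gamma$.} Fix $J$ and let $\Gamma$ consist of all atoms at scales $j\le J$, which number $\lesssim 2^J$, together with the bad atoms at scales $J<j\le J'$, which number $\lesssim M(J'-J)$. The tail then splits into three parts.
\begin{itemize}
\item Good atoms at $j>J$ contribute $\sum_{j>J}2^{-2js}\lesssim 2^{-2Js}$.
\item Bad atoms at $j>J'$ contribute $\sum_{j>J'}M2^{-j}\lesssim M2^{-J'}$.
\end{itemize}
Taking $J'=2sJ$ makes both tails $\lesssim 2^{-2sJ}$, while $|\Gamma|\lesssim 2^J+M(2s-1)J\lesssim 2^J$. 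Setting $2^J\asymp N$ yields $\sigma_N^\Phi(f)_2\lesssim N^{-s}$.

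\emph{Negative scales.} If $j(\lambda)$ can be negative or small, those finitely many coarse atoms (their count is $\asymp 2^j$) are simply included in $\Gamma$. The constant absorbs $\|f\|_\infty$ and the $H^s$ norms of the pieces; this is the sense in which $C$ depends on $f$'s piecewise norm, normalized as in the statement.

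For part 2, take $f_\star=\mathbf 1_{[1/2,1]}$. For any $N$-term Legendre combination $g$ with indices $\Gamma$, orthonormality shows the best coefficients are the projections, so
\[
\|f_\star-g\|_2^2\ge\sum_{n\notin\Gamma}|\langle f_\star,\ell_n\rangle|^2.
\]
Thus I need a lower bound on the tail of the sequence $|\langle f_\star,\ell_n\rangle|^2$ after deleting its $N$ largest entries. I compute these coefficients explicitly using $(2n+1)P_n=P_{n+1}'-P_{n-1}'$, which gives
\[
\langle f_\star,\ell_n\rangle=\tfrac{\sqrt{2n+1}}{2}\int_0^1P_n(x)\,dx=\tfrac{-\sqrt{2n+1}}{2(2n+1)}\bigl(P_{n+1}(0)-P_{n-1}(0)\bigr).
\]
For odd $n$, $|P_{n\pm1}(0)|\asymp n^{-1/2}$ with alternating signs, so the difference is $\asymp n^{-1/2}$. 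Hence $|\langle f_\star,\ell_n\rangle|^2\asymp n^{-2}$ for odd $n$, and the coefficient vanishes for even $n\ge2$.

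The decreasing rearrangement of this sequence is therefore $\asymp k^{-2}$. Deleting the $N$ largest entries leaves a tail $\gtrsim\sum_{k>N}k^{-2}\asymp N^{-1}$, which gives $\sigma_N^{\mathcal L}(f_\star)_2\ge cN^{-1/2}$.

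The main obstacle is the precise asymptotics $|P_{2k}(0)|=\binom{2k}{k}4^{-k}\asymp k^{-1/2}$ (Stirling), combined with checking that the alternating signs make the difference comparable to each term rather than cancel. For that I would verify that $P_{n+1}(0)$ and $P_{n-1}(0)$ have opposite signs for odd $n$. On the wavelet side, the delicate point is the bookkeeping of bad atoms, namely ensuring that the logarithmic count $J'$ stays $O(N)$.
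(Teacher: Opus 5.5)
Your proposal is correct and follows the paper's proof. Both parts match: in part 1 you use the same regular/singular split with the same per-coefficient bounds (assumption c) on regular atoms, and H\"older with the $L^1$ scaling bound on the at most $C(M-1)$ singular atoms per scale), together with the Parseval truncation lemma; in part 2 you use the same closed form for the Legendre coefficients of a half-interval indicator via $P_{2m}(0)=(-1)^m\binom{2m}{m}4^{-m}$. The only difference is bookkeeping in part 1. You keep an explicit index set (all atoms up to scale $J$ with $2^J\asymp N$, plus singular atoms up to $J'=\lceil 2sJ\rceil$), whereas the paper argues through the top-$N$ rearrangement and reserves singular atoms up to scale $\asymp N/M$, which by itself can exhaust the budget of $N$. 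Your choice avoids that issue and costs only a logarithmic number of extra atoms.
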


\begin{proof}
\textbf{Proof of \textit{1.}}
Let $f\in \mathrm{PW}H^s_M(0,1)$ with breakpoints
$0=t_0<t_1<\dots<t_{M-1}<t_M=1$.
For each scale index $j\in\mathbb Z$, let us define
\[
\Lambda_j:=\{\lambda\in\Lambda:\ j(\lambda)=j\},\qquad
\Lambda_j^{\mathrm{reg}}:=\{\lambda\in\Lambda_j:\ \mathrm{supp}(\phi_\lambda)\subset (t_{m-1},t_m)\text{ for some }m\},
\]
and $\Lambda_j^{\mathrm{sing}}:=\Lambda_j\setminus \Lambda_j^{\mathrm{reg}}$.
By the condition \textit{b)} from the theorem, each breakpoint $t_m$ is contained in the supports of at most $C$ atoms from $\Lambda_j$, hence
\begin{equation}
\#\Lambda_j^{\mathrm{sing}}
\ \le\
C\,(M-1)
\qquad\text{for all }j.
\label{eq:sing-count}
\end{equation}
Also, by \eqref{eq:multiscale-count}, $\#\Lambda_j\asymp 2^j$.

We split the argument in two cases, covering regular coefficients coming from $\Lambda_j^{\text{reg}}$ and singular coming from $\Lambda^{\text{sing}}_j$. 

\medskip

\emph{Regular coefficients.}
For $\lambda\in\Lambda_j^{\mathrm{reg}}$ we can apply \eqref{eq:coef-decay-assump} on the interval $(t_{m-1},t_m)$ containing $\mathrm{supp}(\phi_\lambda)$ to obtain:
\[
|\langle f,\phi_\lambda\rangle|
\ \le\
C_s\,2^{-j(s+\frac12)}\|f\|_{H^s(t_{m-1},t_m)}.
\]
Summing over $\Lambda_j^{\mathrm{reg}}$ and using $\#\Lambda_j^{\mathrm{reg}}\le \#\Lambda_j\lesssim 2^j$ yields
\begin{equation}
\sum_{\lambda\in\Lambda_j^{\mathrm{reg}}}|\langle f,\phi_\lambda\rangle|^2
\ \lesssim\
2^j\cdot 2^{-2j(s+\frac12)}\Big(\sum_{m=1}^M \|f\|_{H^s(t_{m-1},t_m)}^2\Big)
\ \lesssim\
2^{-2js}\,\|f\|_{\mathrm{PW}H^s_M}^2,
\label{eq:reg-energy}
\end{equation}
where $\|f\|_{\mathrm{PW}H^s_M}^2:=\sum_{m=1}^M \|f\|_{H^s(t_{m-1},t_m)}^2$.

\medskip

% \vspace{1cm}
\emph{Singular coefficients.}
We next use the scaling bound
\begin{equation}
\|\phi_\lambda\|_{L^1(0,1)} \ \le\ C_1\,2^{-j(\lambda)/2}\qquad(\lambda\in\Lambda).
\label{eq:L1-scaling}
\end{equation}

Then for any $f\in L^\infty(0,1)$, H\"older inequality implies
\begin{equation}
|\langle f,\phi_\lambda\rangle|
\ \le\
\|f\|_{L^\infty}\,\|\phi_\lambda\|_{L^1}
\ \lesssim\
\|f\|_{L^\infty}\,2^{-j(\lambda)/2}.
\label{eq:sing-coef-bound}
\end{equation}
Note that here we used that since $f\in \mathrm{PW}H^s_M$ and $s>\frac{1}{2}$, the Sobolev embeddings $H^s(I_m)\hookrightarrow L^\infty(I_m)$ for each segment $I_m=(t_{m-1},t_m)$, together with \eqref{eq:L1-scaling} imply the above inequality.

\medskip

Let $c_\lambda:=\langle f,\phi_\lambda\rangle$ and let $(c_k^*)_{k\ge 1}$ be the nonincreasing rearrangement of $(|c_\lambda|)_{\lambda\in\Lambda}$.
If we let
\[
T_N f := \sum_{\lambda\in\Lambda_N} c_\lambda\,\phi_\lambda,
\qquad \Lambda_N:=\{\text{$N$ indices corresponding to the largest $|c_\lambda|$}\},
\]
then since $\Phi$ is Parseval, the canonical best $N$-term approximation Lemma \ref{lem:parseval-frame-trunc} implies 
\begin{equation}
\|f-T_N f\|_2^2
\ \le\
\sum_{k>N} (c_k^*)^2.
\label{eq:parseval-threshold-tail}
\end{equation}

Now consider only the singular coefficients $\{c_\lambda:\lambda\in\Lambda^{\mathrm{sing}}\}$.
By \eqref{eq:sing-count} there are at most $C(M-1)$ of them at each scale $j$, and by \eqref{eq:sing-coef-bound} each such coefficient is $\lesssim 2^{-j/2}$.
Therefore, after selecting the $N$ largest coefficients overall, the singular coefficients that can remain unselected must come from sufficiently fine scales.
More precisely, let
\[
J:=\Big\lfloor \frac{N}{C(M-1)}\Big\rfloor.
\]
Then the total number of singular coefficients with scale $j\le J$ is at most $C(M-1)\,J\le N$, hence the top-$N$ set $\Lambda_N$
may be chosen to include the singular coefficients with $j\le J$.
Consequently,
\begin{equation}
\sum_{\substack{\lambda\in\Lambda^{\mathrm{sing}}\\ \lambda\notin\Lambda_N}} |c_\lambda|^2
\ \le\
\sum_{j>J}\ \sum_{\lambda\in\Lambda_j^{\mathrm{sing}}} |c_\lambda|^2
\ \lesssim\
\sum_{j>J} \#\Lambda_j^{\mathrm{sing}}\cdot 2^{-j}\,\|f\|_\infty^2
\ \lesssim\
\|f\|_\infty^2\sum_{j>J} 2^{-j}
\ \lesssim\
\|f\|_\infty^2\,2^{-J}.
\label{eq:sing-tail-exp}
\end{equation}
In particular, the singular tail is \emph{exponentially small in $N$}:
\begin{equation}
\sum_{\substack{\lambda\in\Lambda^{\mathrm{sing}}\\ \lambda\notin\Lambda_N}} |c_\lambda|^2
\ \lesssim\
\|f\|_\infty^2\,2^{-cN/M}
\ \le\
C_{s,M}\,\|f\|_{\mathrm{PW}H^s_M}^2\,N^{-2s},
\label{eq:sing-tail-absorbed}
\end{equation}
where the last inequality uses that $2^{-cN/M}\le C_{s,M}N^{-2s}$ for all $N\ge 1$ and for sufficiently large constant $C_{s,M}$.

\medskip

\emph{Putting regular and singular parts together.}

The regular coefficients satisfy the scale-wise square-sum bound \eqref{eq:reg-energy}, which implies that the decreasing rearrangement of the regular part obeys
$c_k^*\lesssim k^{-(s+\frac12)}\|f\|_{\mathrm{PW}H^s_M}$, hence
\[
\sum_{k>N}(c_k^*)^2
\ \lesssim\
\|f\|_{\mathrm{PW}H^s_M}^2 \sum_{k>N} k^{-(2s+1)}
\ \lesssim\
\|f\|_{\mathrm{PW}H^s_M}^2\,N^{-2s}.
\]
Adding the singular tail estimate \eqref{eq:sing-tail-absorbed} gives
\[
\|f-T_N f\|_2^2
\ \lesssim\
\|f\|_{\mathrm{PW}H^s_M}^2\,N^{-2s},
\qquad\text{i.e.}\qquad
\|f-T_N f\|_2
\ \lesssim\
\|f\|_{\mathrm{PW}H^s_M}\,N^{-s}.
\]
The inequality \eqref{eq:wavelet-frame-rate} follows.

\medskip

\textbf{Proof of \textit{2.}}

Let
\[
f_\star(t):=\mathbf 1_{[0,1/2]}(t).
\]
Then $f_\star\in \mathrm{PW}H^s_2(0,1)$ for every $s\ge 0$, since $f_\star$ is constant on each of the two subintervals $(0,\tfrac12)$ and $(\tfrac12,1)$.

Let us define the coefficients
\[
a_n:=\langle f_\star,\ell_n\rangle = \int_0^{1/2} \ell_n(t)\,dt.
\]
With the change of variables $x=2t-1$ (so $dt=\tfrac12\,dx$ and $t\in[0,1/2]\iff x\in[-1,0]$), we get
\begin{equation}
a_n = \frac{\sqrt{2n+1}}{2}\int_{-1}^{0} P_n(x)\,dx.
\label{eq:an-int}
\end{equation}
Use the classical antiderivative identity (valid for $n\ge 1$)
\begin{equation}
\int P_n(x)\,dx = \frac{P_{n+1}(x)-P_{n-1}(x)}{2n+1},
\label{eq:legendre-antider}
\end{equation}
to evaluate the definite integral in \eqref{eq:an-int}:
\[
\int_{-1}^{0} P_n(x)\,dx
=
\left.\frac{P_{n+1}(x)-P_{n-1}(x)}{2n+1}\right|_{x=-1}^{x=0}
=
\frac{P_{n+1}(0)-P_{n-1}(0)}{2n+1}
-\frac{P_{n+1}(-1)-P_{n-1}(-1)}{2n+1}.
\]
Since $P_m(-1)=(-1)^m$, the $x=-1$ term vanishes:
$P_{n+1}(-1)-P_{n-1}(-1)=(-1)^{n+1}-(-1)^{n-1}=0$.
Hence
\begin{equation}
\int_{-1}^{0} P_n(x)\,dx = \frac{P_{n+1}(0)-P_{n-1}(0)}{2n+1},
\qquad\text{so}\qquad
a_n=\frac{P_{n+1}(0)-P_{n-1}(0)}{2\sqrt{2n+1}}.
\label{eq:an-closed}
\end{equation}

\smallskip
\emph{Even coefficients vanish.}
If $n=2m$ is even, then $n\pm 1$ are odd and $P_{\text{odd}}(0)=0$, so \eqref{eq:an-closed} gives $a_{2m}=0$.

\smallskip
\emph{Odd coefficients are $\gtrsim 1/n$.}
If $n=2m+1$ is odd, then $n-1=2m$ and $n+1=2m+2$ are even. It is well known (and follows from the explicit formula for $P_{2m}(0)$) that
\begin{equation}
P_{2m}(0)=(-1)^m\,b_m,
\qquad
b_m:=\frac{(2m)!}{2^{2m}(m!)^2}=\frac{\binom{2m}{m}}{4^m}.
\label{eq:P2m0}
\end{equation}
Therefore,
\[
P_{2m+2}(0)-P_{2m}(0)=(-1)^{m+1}b_{m+1}-(-1)^m b_m
= -(-1)^m(b_{m+1}+b_m),
\]
so
\begin{equation}
|P_{2m+2}(0)-P_{2m}(0)|=b_{m+1}+b_m \ \ge\ b_m.
\label{eq:diff-lower}
\end{equation}
Moreover, a standard Stirling bound for the central binomial coefficient implies that there exists an absolute constant $c_0>0$ such that
\begin{equation}
b_m=\frac{\binom{2m}{m}}{4^m}\ \ge\ \frac{c_0}{\sqrt{m}}
\qquad\text{for all }m\ge 1.
\label{eq:bm-lower}
\end{equation}
Combining \eqref{eq:an-closed} (with $n=2m+1$), \eqref{eq:diff-lower}, and \eqref{eq:bm-lower} yields, for $m\ge 1$,
\[
|a_{2m+1}|
=
\frac{|P_{2m+2}(0)-P_{2m}(0)|}{2\sqrt{4m+3}}
\ \ge\
\frac{b_m}{2\sqrt{4m+3}}
\ \ge\
\frac{c}{m+1},
\]
for some absolute constant $c>0$ (using $\sqrt{4m+3}\le 3\sqrt{m+1}$ and $b_m\gtrsim 1/\sqrt{m+1}$).

\smallskip
\emph{Tail lower bound.}
Since all even coefficients are zero and $|a_{2m+1}|\gtrsim (m+1)^{-1}$, the best $N$-term Legendre approximation (which keeps the $N$ largest coefficients) cannot remove the harmonic tail. Indeed, for any $N\ge 1$,
\[
\sigma_N^{\mathcal L}(f_\star)_2^2
=\min_{|S|=N}\ \sum_{n\notin S}|a_n|^2
\ \ge\
\sum_{m>N}\ |a_{2m+1}|^2
\ \gtrsim\
\sum_{m>N}\frac{1}{(m+1)^2}
\ \asymp\
\frac{1}{N}.
\]
Taking square roots gives
\[
\sigma_N^{\mathcal L}(f_\star)_2 \ \gtrsim\ N^{-1/2},
\]
which proves \eqref{eq:legendre-lower-parseval}.
\end{proof}

\subsection{Further comments}

\paragraph{On the structural assumptions (a)--(c).}
Assumptions (a)--(b) formalize the usual \emph{multiresolution geometry} of wavelet systems: at scale $j$ one has atoms supported on intervals of length $\asymp 2^{-j}$, with $\asymp 2^j$ shifts, and with a uniformly bounded number of overlaps at each fixed scale.
For standard compactly supported orthonormal wavelets $\psi_{j,k}(x)=2^{j/2}\psi(2^j x-k)$ these properties follow immediately from the support formula
$\mathrm{supp}(\psi_{j,k}) \subset [(k+a)2^{-j},(k+b)2^{-j}]$ (if $\mathrm{supp}(\psi)\subset[a,b]$), implying a uniform overlap bound independent of $j$.
For interval-adapted wavelets (periodic, boundary-corrected, or extension-based constructions), only $O(1)$ additional boundary atoms appear per scale, so the same bounded-overlap property still holds.

Assumption (c) is the standard \emph{cancellation} (vanishing-moment) mechanism: if $f$ is $H^s$-smooth on an interval containing $\mathrm{supp}(\phi_\lambda)$, then $\langle f,\phi_\lambda\rangle$ decays like $2^{-j(s+1/2)}$.
For classical wavelets this is obtained by Taylor expansion plus vanishing moments (or equivalent Strang--Fix type conditions) and is a cornerstone in wavelet approximation theory.

\paragraph{On the Parseval (tight) frame assumption.}
The Parseval property is mainly a technical convenience: it lets one control $L^2$ error by coefficient tail energy and makes the nonlinear ``keep the $N$ largest coefficients'' argument clean.
Many practically used redundant wavelet systems are \emph{tight} (or nearly tight) by design, and there are systematic construction principles producing tight wavelet frames with good localization and approximation order.
More generally, for a frame with bounds $A,B$ one can rewrite the proof with the same structure, with constants depending on $A,B$ (or by passing to the canonical tight frame via the frame operator), at the expense of heavier notation.

\paragraph{Beyond compact support: rapidly decaying / continuous wavelet frames.}
Compact support is \emph{sufficient} (not necessary) for (a)--(b) and for the $L^1$ scaling estimates used in the ``singular tail'' argument.
For non-compactly supported wavelets (e.g.\ Meyer-type bandlimited wavelets, Morlet-type wavelets, or other smooth frames), analogous localization statements hold in terms of rapid decay rather than strict support, and the same approximation phenomena persist.
However, the proofs typically require replacing support-counting by decay estimates and using more technical machinery (e.g.\ discretization of continuous transforms / coorbit or atomic decomposition theory) to control overlaps and tails quantitatively.

\section{Approximating step functions on a budget}
\label{app:step-budget}

Therem \ref{thm:parseval-frame-advantage} outlines a general principle that wavelets approximate better functions which exhibit irregularities (such as jump discontinuity) than polynomials (our toy example was Legendre orthogonal frame). Here we provide experimental verification of this that accompanies the informal treatment in  Section~\ref{subsec:intuition-approx} to compare $N$-term approximations of simple piecewise-constant signals (the simplest class of functions exhibiting an irregularity in the form of jump discontinuity) under a fixed coefficient budget $N$. 

% The goal is not to optimize any particular implementation, but to provide a transparent and reproducible baseline illustrating the qualitative gap between localized multiscale representations and global polynomial modes on functions with jump discontinuities.

\subsection{Test signals}
\label{app:step-budget:signals}
We consider two canonical targets on $[0,1]$:
\begin{align}
f_{\mathrm{one}}(t) &:= \mathbf 1_{[1/2,\,1]}(t)
\quad\text{(one step)}, \label{eq:one-step}\\
f_{\mathrm{two}}(t) &:= a_0\,\mathbf 1_{[0,t_1)}(t) + a_1\,\mathbf 1_{[t_1,t_2)}(t) + a_2\,\mathbf 1_{[t_2,1]}(t),
\quad 0<t_1<t_2<1 \quad\text{(two steps)}.\label{eq:two-step}
\end{align}
In the experiments that follow we used $t_1=\tfrac13$, $t_2=\tfrac23$ and amplitudes $(a_0,a_1,a_2)=(0,1,0.3)$.

\subsection{Discretization and error metric}
\label{app:step-budget:metric}
All functions are sampled on a uniform grid
\[
t_i = \frac{i}{L-1},\qquad i=0,1,\dots,L-1,
\]
with $L$ sufficiently large (e.g.\ $L=2048$). We approximate the continuous $L^2(0,1)$ error by the discrete quadrature rule
\begin{equation}
\|f-g\|_{L^2(0,1)} \approx \Big(\Delta t\sum_{i=0}^{L-1} |f(t_i)-g(t_i)|^2\Big)^{1/2},
\qquad \Delta t=\frac{1}{L-1}.
\label{eq:discrete-L2}
\end{equation}
For each method and budget $N$, we report the error $e_N:=\|f-\widehat f_N\|_2$ computed by \eqref{eq:discrete-L2}.

\subsection{Global polynomial baseline: best $N$-term Legendre approximation}
\label{app:step-budget:legendre}
We use Let  denote the orthonormal Legendre basis of $L^2(0,1)$, i.e.\ $\ell_n(t)=\sqrt{2n+1}\,P_n(2t-1)$.
Given coefficients $a_n=\langle f,\ell_n\rangle$, the best $N$-term approximation in an orthonormal basis is obtained by keeping the $N$ largest $|a_n|$ and discarding the rest:
\[
\widehat f_N^{\mathcal L}(t)=\sum_{n\in S_N} a_n\,\ell_n(t),
\qquad S_N=\arg\max_{|S|=N}\sum_{n\in S}|a_n|^2.
\]
In practice, we compute $\{a_n\}_{n=0}^{K-1}$ using high-order Gauss--Legendre quadrature on $[0,1]$ (with $K$ and quadrature order chosen large enough for stability), then select the $N$ largest coefficients among $\{0,\dots,K-1\}$ and synthesize $\widehat f_N^{\mathcal L}$ on the sampling grid.

\subsection{Wavelet-based methods}
\label{app:step-budget:wavelets}
We evaluated two wavelet-style approximation families.

\paragraph{(i) Orthonormal DWT thresholding (e.g.\ \texttt{db6}).}
As a stable baseline, we use a standard orthonormal discrete wavelet transform (DWT) with a compactly supported Daubechies wavelet (default \texttt{db6}). We use \texttt{python} library \texttt{pywavelets}. Let $w$ denote the full collection of wavelet coefficients produced by the DWT. We form an $N$-term approximation by hard-thresholding:
keep the $N$ coefficients of largest magnitude and set the rest to zero, then invert the DWT to obtain $\widehat f_N^{\mathrm{DWT}}$.
Since the DWT is orthonormal (up to boundary handling), this procedure is a standard proxy for best $N$-term wavelet approximation.

\paragraph{(ii) Continuous-wavelet-type dictionaries from a mother wavelet (redundant frames).}
To probe non-DWT wavelet families (e.g.\ \texttt{mexh}, \texttt{morl}, \texttt{gaus} derivatives), we build a discrete dictionary by sampling a chosen mother wavelet $\psi$ and applying time-consistent dilation/translation:
\begin{equation}
\phi_{a,b}(t) = a^{-1/2}\,\psi\!\left(\frac{t-b}{a}\right),\qquad a\in\mathcal A,\ b\in\mathcal B,
\label{eq:cwt-atoms}
\end{equation}
followed by discrete $L^2$ normalization on the grid \eqref{eq:discrete-L2}, as was outlined in Appendix \ref{ssec:wavelets}. The scale set $\mathcal A$ is chosen geometrically between a finest scale $a_{\min}$ (a few grid steps) and a coarsest scale $a_{\max}$ (a fraction of the interval length), and $\mathcal B$ is a uniform set of shifts in $[0,1]$.

% \emph{Low-frequency augmentation.} Since many continuous wavelets are bandpass/zero-mean, we augment the dictionary with a small low-frequency block (at minimum a constant atom; optionally a few wide Gaussians) to avoid pathologies on piecewise-constant targets.

\emph{Coefficient selection.} Because \eqref{eq:cwt-atoms} yields a redundant (coherent) dictionary, we do not use naive coefficient thresholding. Instead, we compute an $N$-term approximation via a greedy pursuit (Orthogonal Matching Pursuit with a small ridge refit), producing $\widehat f_N^{\mathrm{CWT}}$ using at most $N$ atoms.

\subsection{Hyperparameters and reproducibility}
\label{app:step-budget:params}
All experiments are determined by the tuple
\[
(L,\ N,\ K,\ \text{wavelet family},\ n_{\mathrm{scales}},\ n_{\mathrm{shifts}},\ a_{\min},\ a_{\max},\ \text{pursuit settings}).
\]
Typical choices in the reported runs were:
\begin{itemize}
\item Grid size: $L\in\{1024,2048,4096\}$.
\item Budgets: $N\in\{32,64,128,256\}$.
\item Legendre truncation ceiling: $K\approx 2048$ with sufficiently high quadrature order.
\item DWT wavelet: \texttt{db6} (with periodization or standard boundary handling).
\item CWT dictionary: $n_{\mathrm{scales}}\in[24,64]$, $n_{\mathrm{shifts}}\in[256,1024]$, $a_{\min}\approx 2\Delta t$--$4\Delta t$, $a_{\max}\approx 0.1$--$0.25$.
\item Pursuit: OMP with ridge parameter in the range $10^{-8}$--$10^{-6}$.
\end{itemize}
% The accompanying code saves plots for individual comparisons and error curves with filenames encoding $(\text{wavelet},N,\text{target})$ for reproducibility.

\subsection{Plots}
\label{app:step-budget:results}

The following plots present the result comparisons of the various approximations of the target functions via Legendre frame and corresponding wavelet frames. 

\begin{figure}[htbp]
\centering

\begin{subfigure}{0.45\textwidth}
\includegraphics[width=\linewidth]{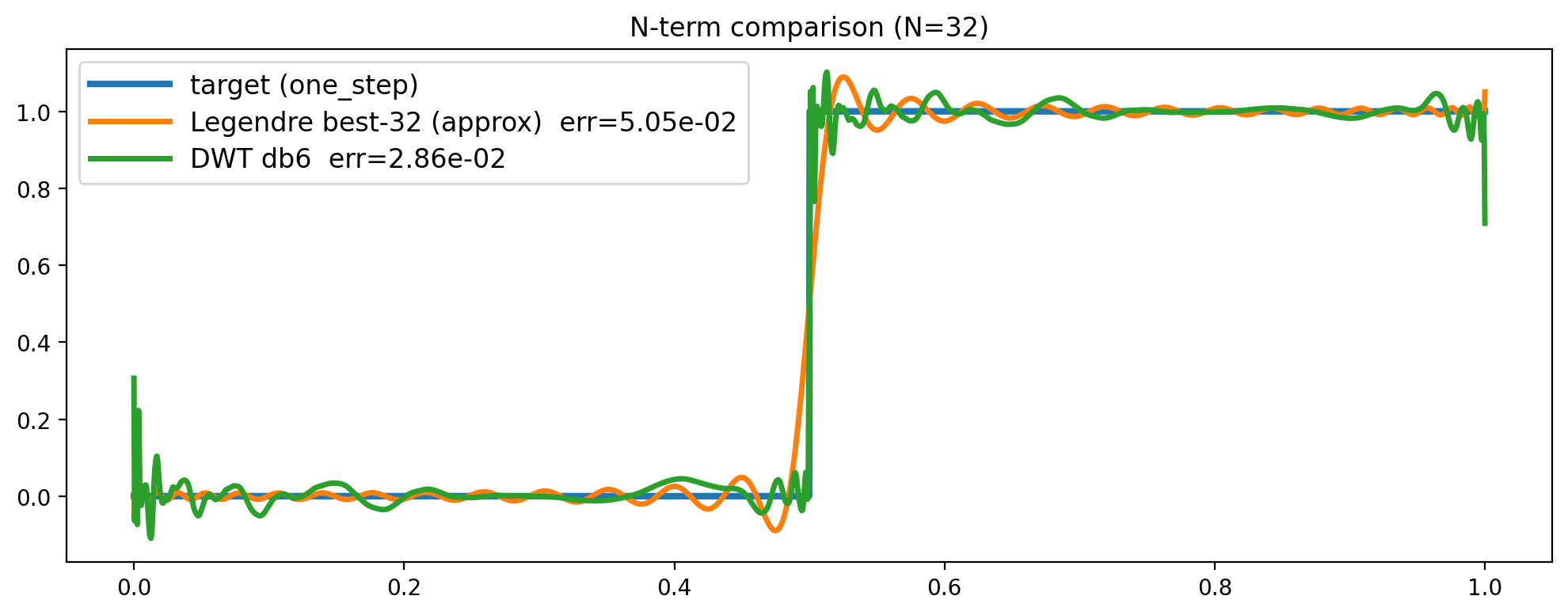}
\end{subfigure}
\begin{subfigure}{0.45\textwidth}
\includegraphics[width=\linewidth]{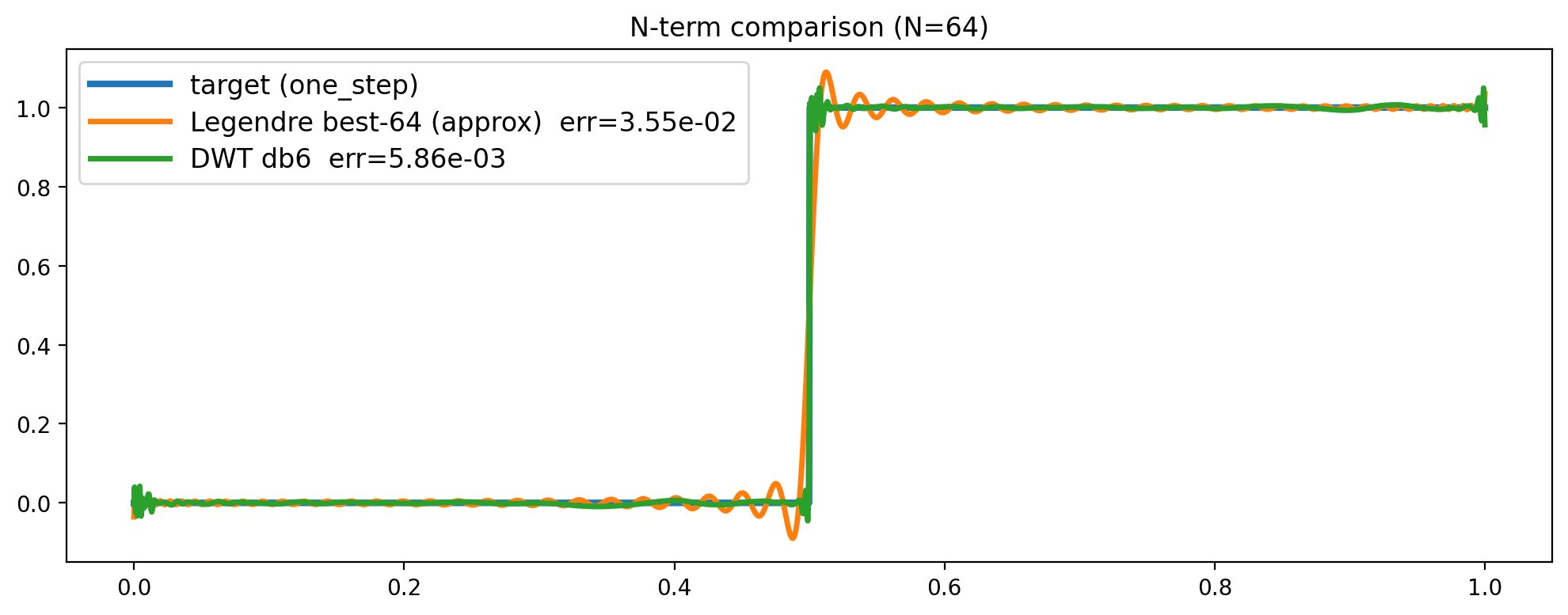}
\end{subfigure}

\vspace{0.3cm}

\begin{subfigure}{0.45\textwidth}
\includegraphics[width=\linewidth]{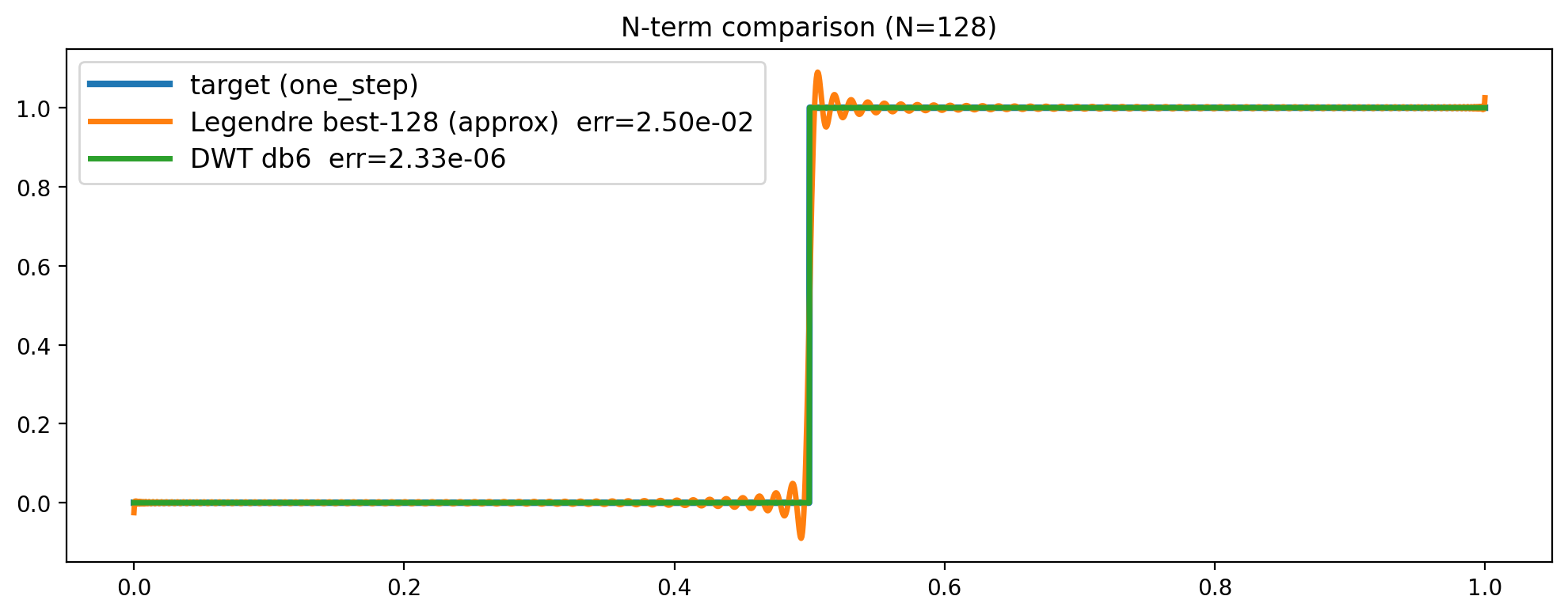}
\end{subfigure}
\begin{subfigure}{0.45\textwidth}
\includegraphics[width=\linewidth]{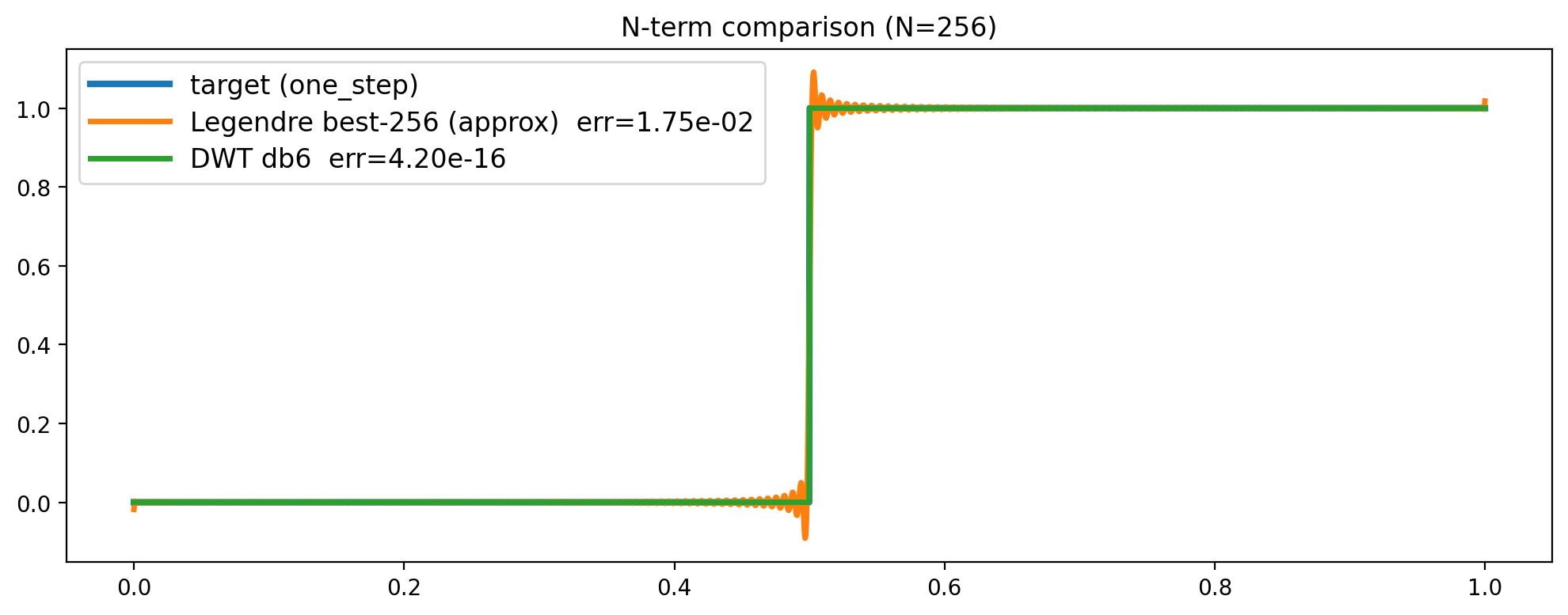}
\end{subfigure}

\caption{Comparison of approximations of $f_{\text{one}}$ target with \texttt{db6} and \texttt{Legendre} frames on various budgets $N$.}
\end{figure}

\begin{figure}[htbp]
\centering

\begin{subfigure}{0.45\textwidth}
\includegraphics[width=\linewidth]{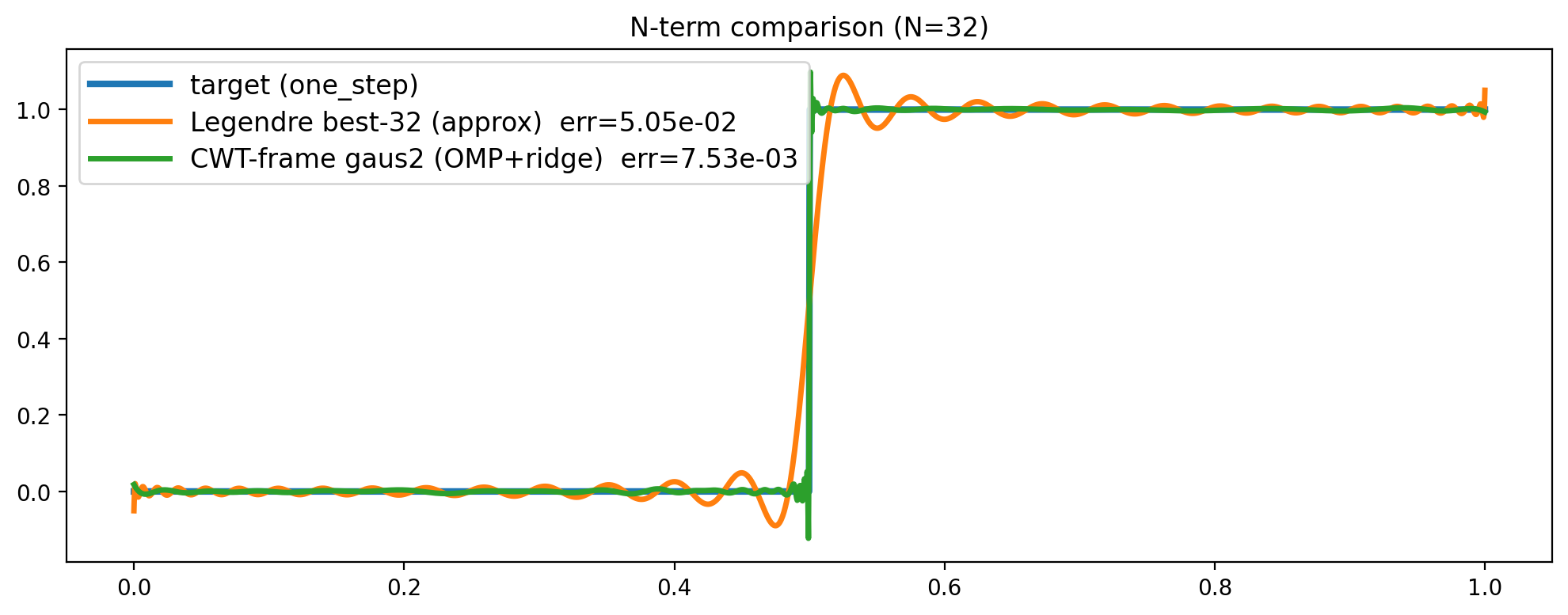}
\end{subfigure}
\begin{subfigure}{0.45\textwidth}
\includegraphics[width=\linewidth]{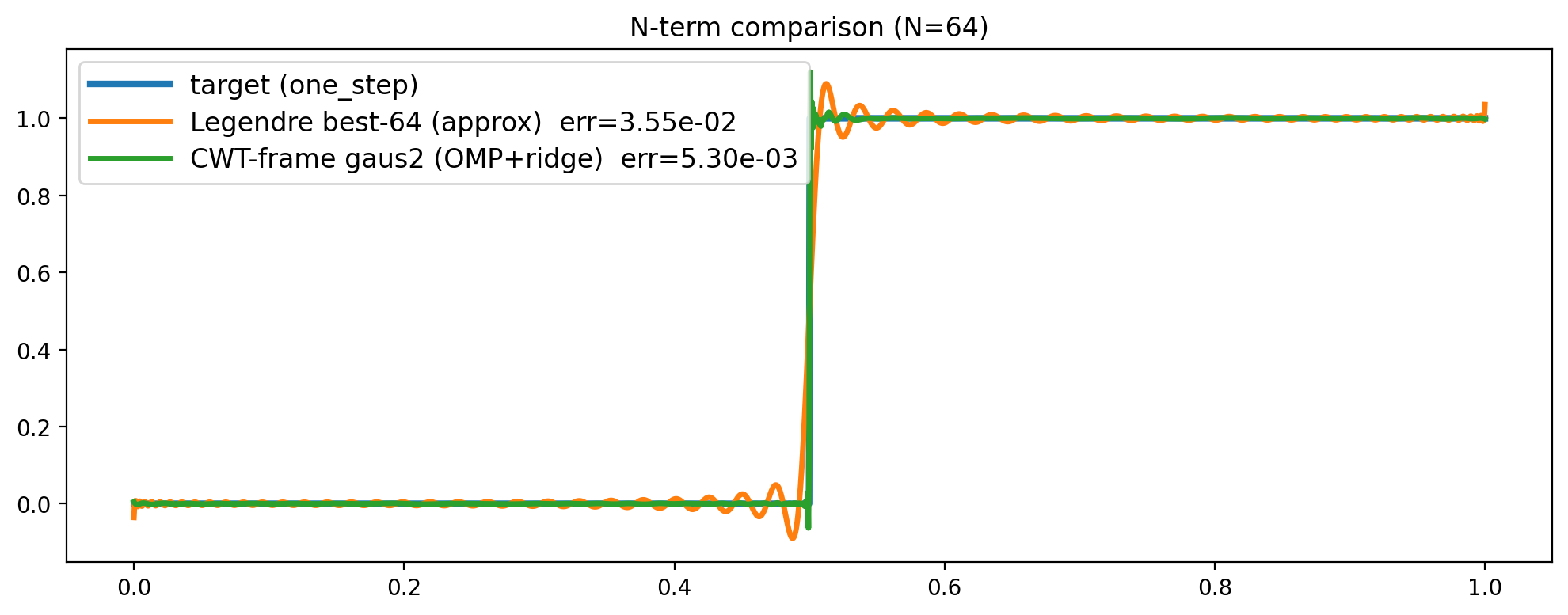}
\end{subfigure}

\vspace{0.3cm}

\begin{subfigure}{0.45\textwidth}
\includegraphics[width=\linewidth]{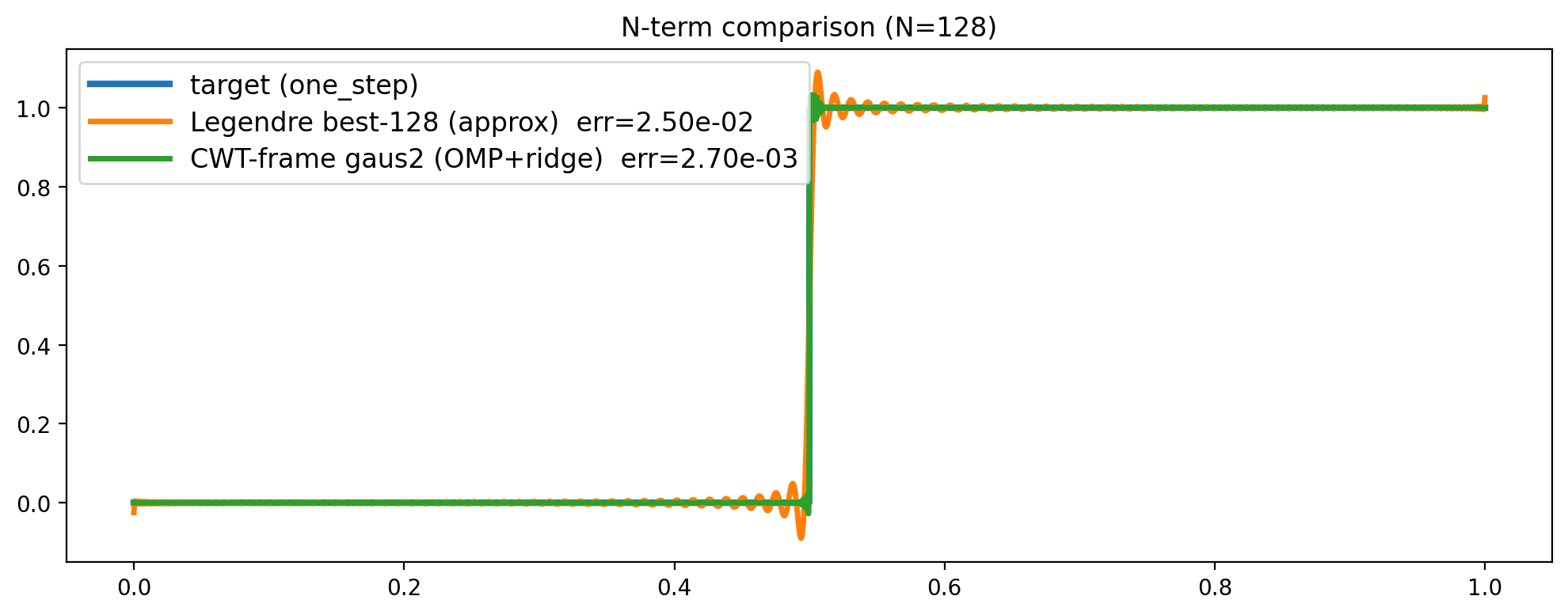}
\end{subfigure}
\begin{subfigure}{0.45\textwidth}
\includegraphics[width=\linewidth]{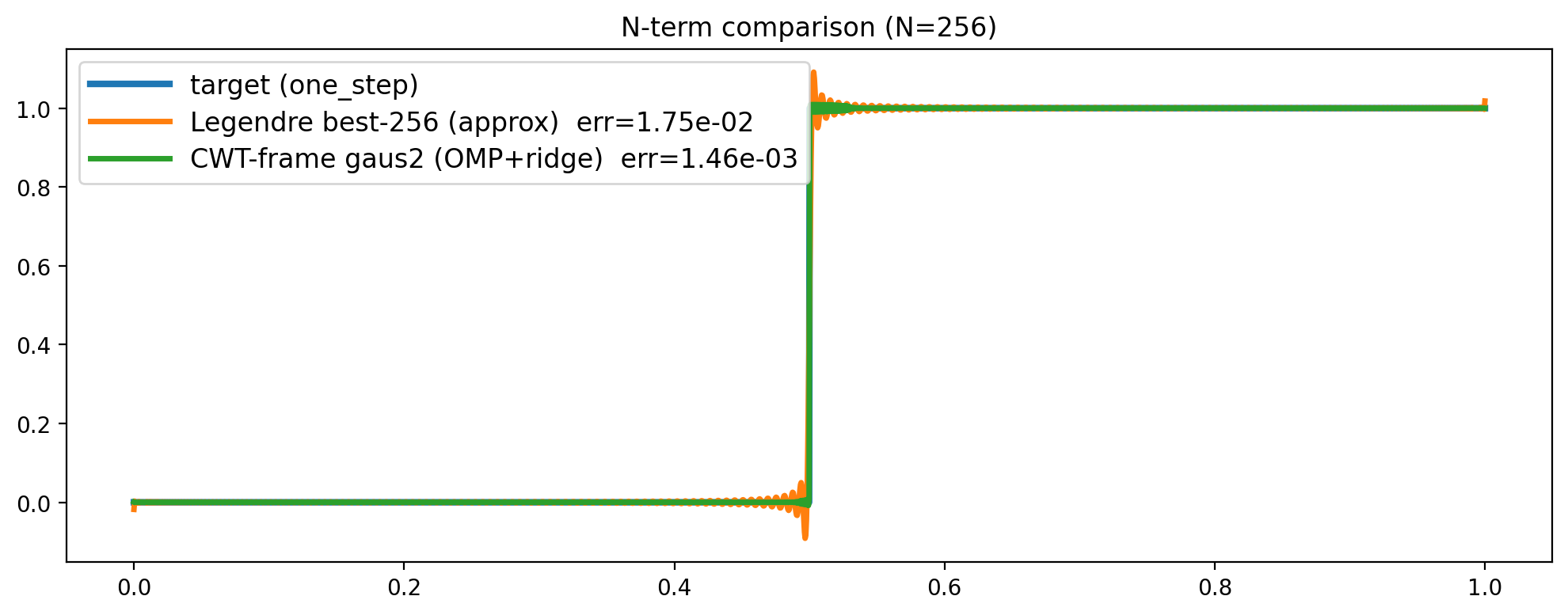}
\end{subfigure}

\caption{Comparison of approximations of $f_{\text{one}}$ target with \texttt{gaus2} and \texttt{Legendre} frames on various budgets $N$.}
\end{figure}

\begin{figure}[htbp]
\centering

\begin{subfigure}{0.45\textwidth}
\includegraphics[width=\linewidth]{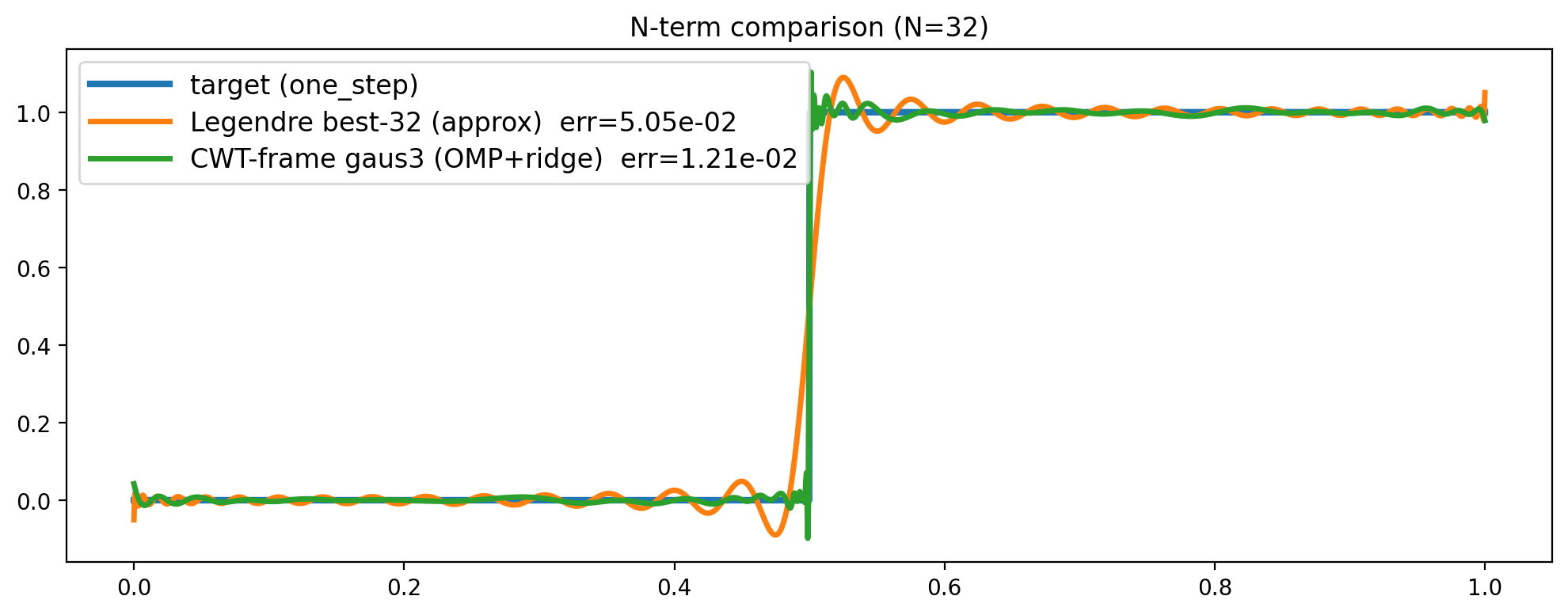}
\end{subfigure}
\begin{subfigure}{0.45\textwidth}
\includegraphics[width=\linewidth]{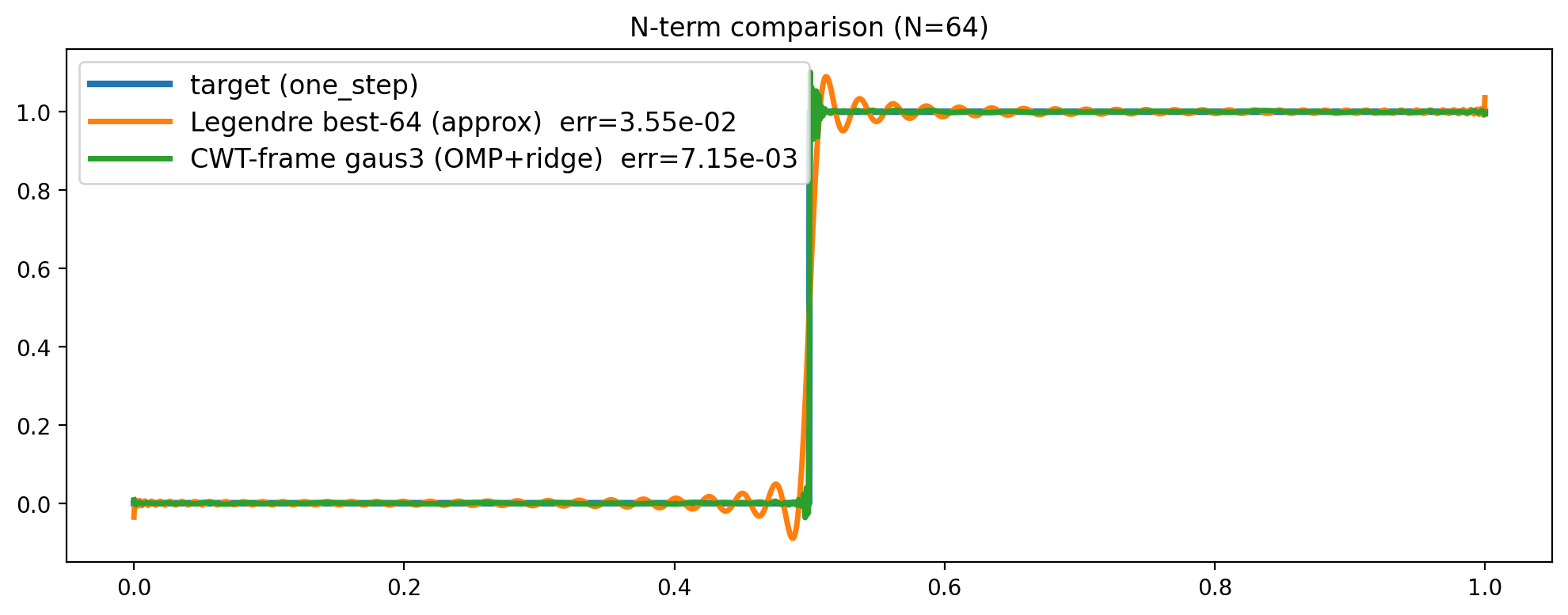}
\end{subfigure}

\vspace{0.3cm}

\begin{subfigure}{0.45\textwidth}
\includegraphics[width=\linewidth]{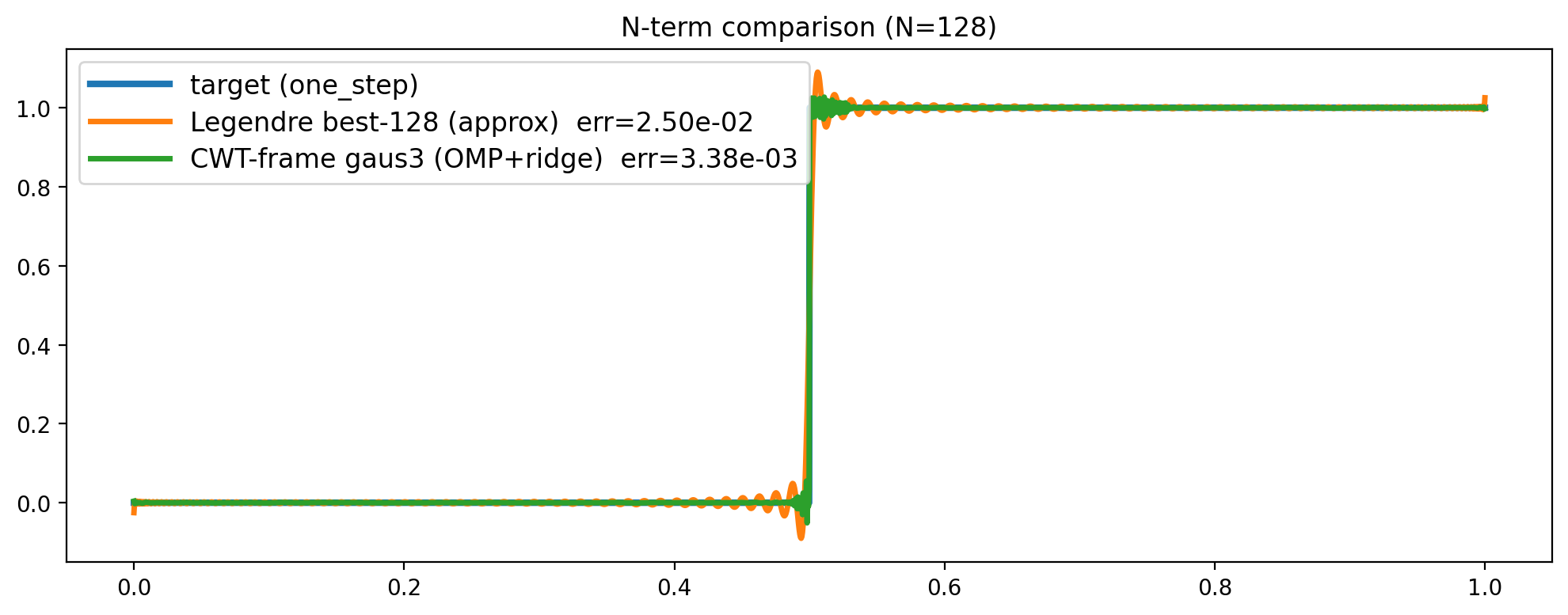}
\end{subfigure}
\begin{subfigure}{0.45\textwidth}
\includegraphics[width=\linewidth]{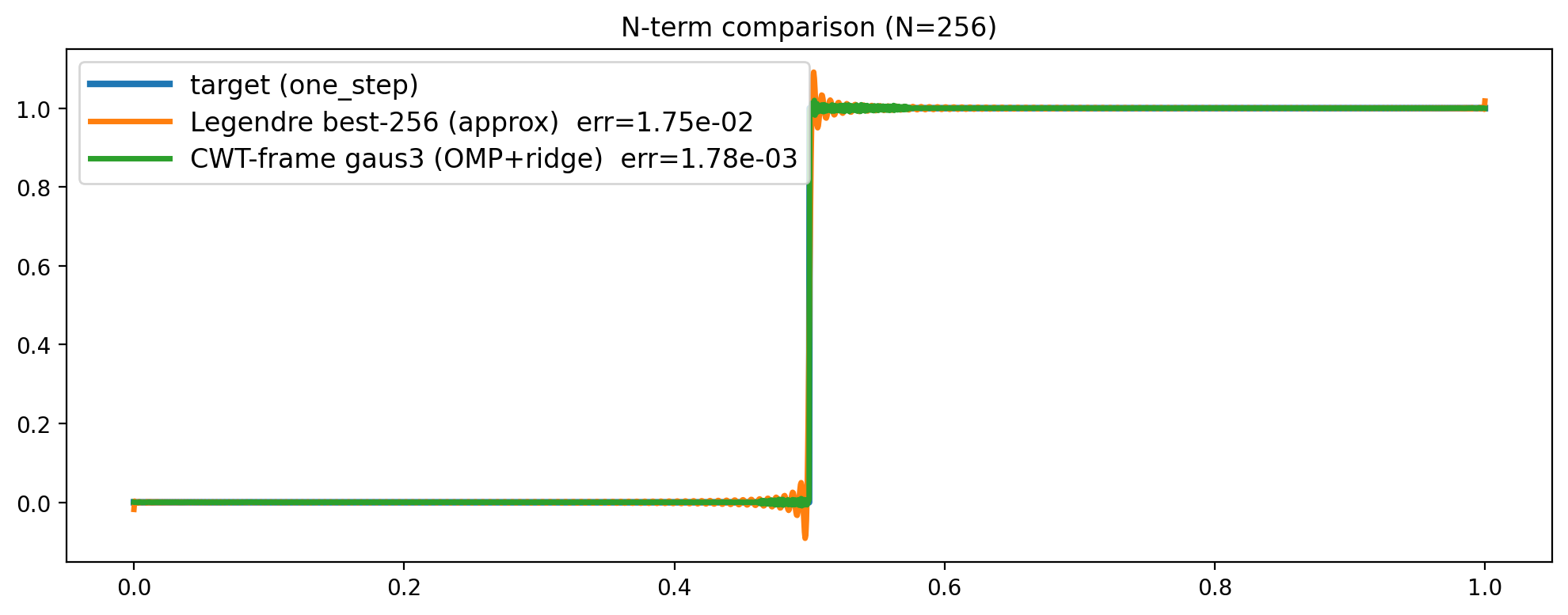}
\end{subfigure}

\caption{Comparison of approximations of $f_{\text{one}}$ target with \texttt{gaus3} and \texttt{Legendre} frames on various budgets $N$.}
\end{figure}

\begin{figure}[htbp]
\centering

\begin{subfigure}{0.45\textwidth}
\includegraphics[width=\linewidth]{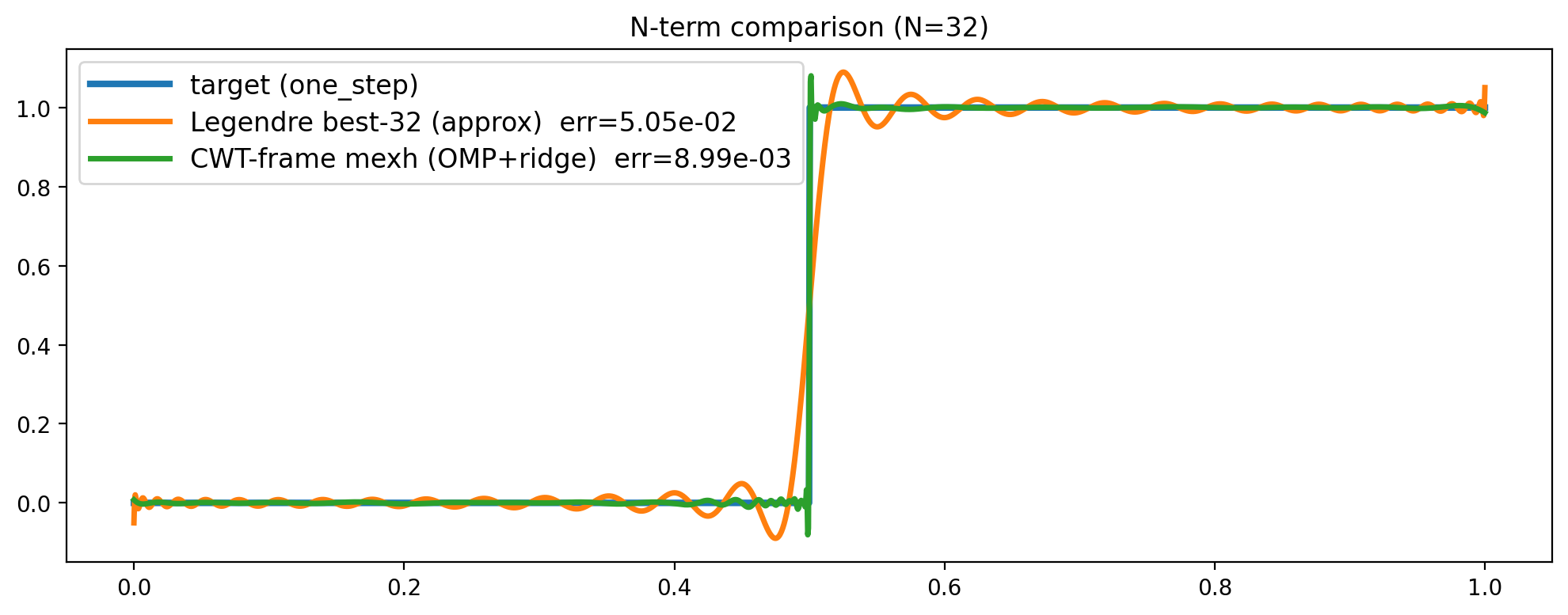}
\end{subfigure}
\begin{subfigure}{0.45\textwidth}
\includegraphics[width=\linewidth]{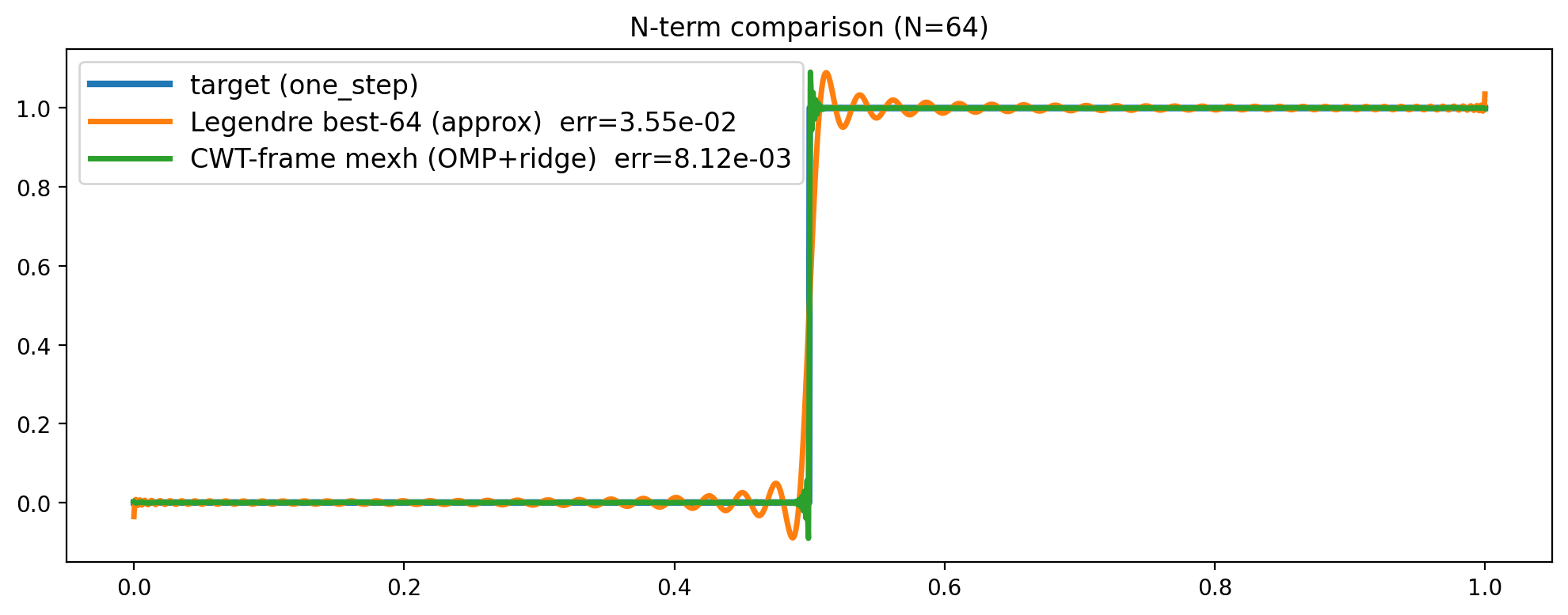}
\end{subfigure}

\vspace{0.3cm}

\begin{subfigure}{0.45\textwidth}
\includegraphics[width=\linewidth]{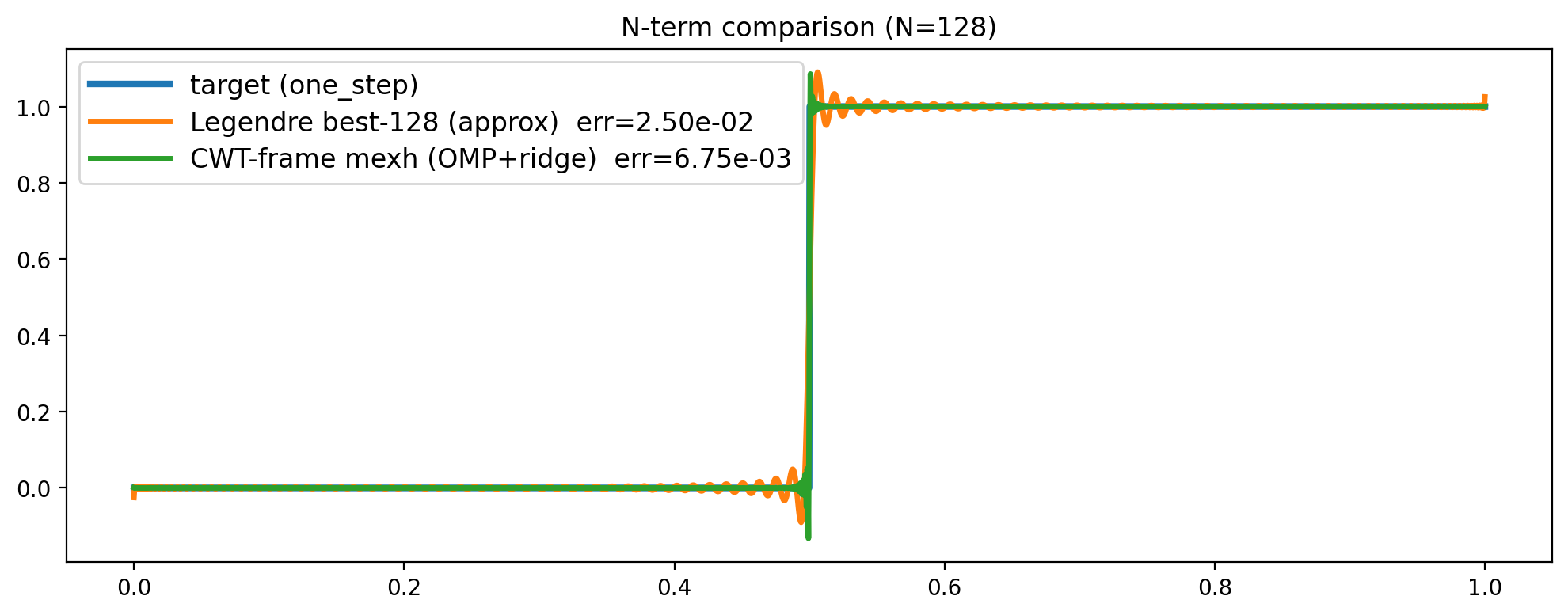}
\end{subfigure}
\begin{subfigure}{0.45\textwidth}
\includegraphics[width=\linewidth]{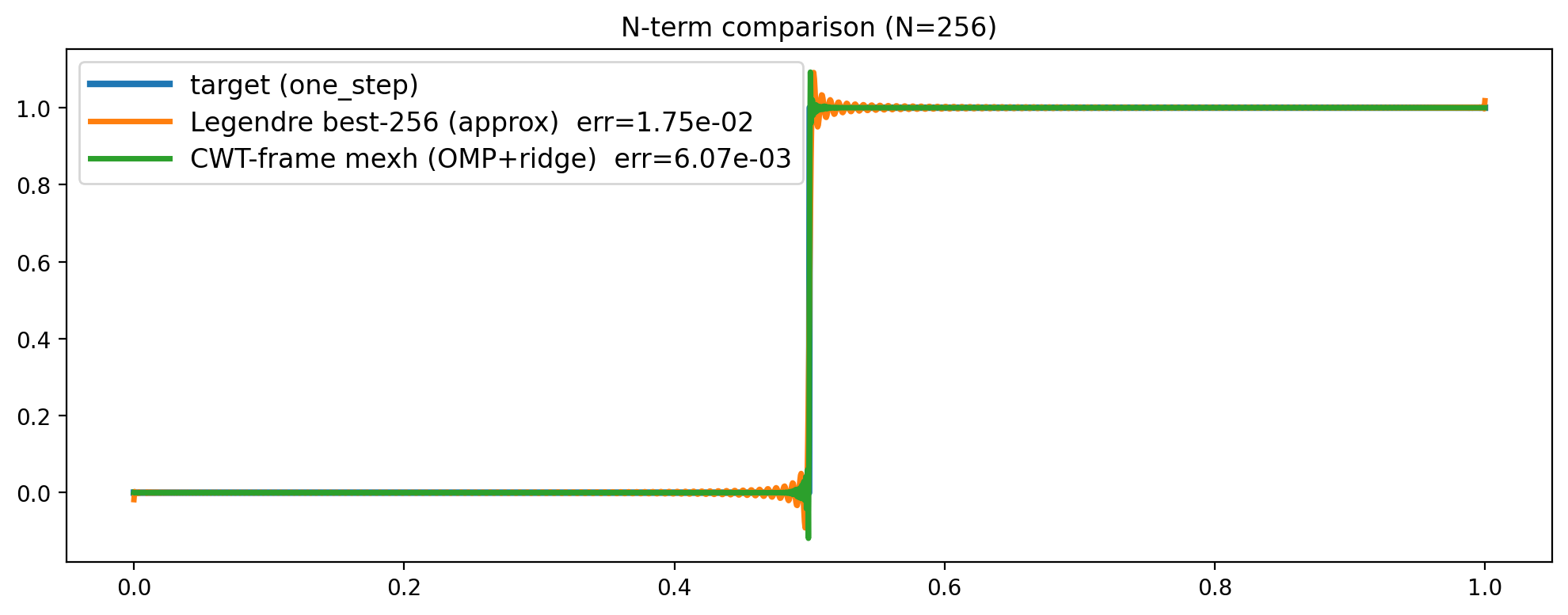}
\end{subfigure}

\caption{Comparison of approximations of $f_{\text{one}}$ target with \texttt{mexh} and \texttt{Legendre} frames on various budgets $N$.}
\end{figure}

\begin{figure}[htbp]
\centering

\begin{subfigure}{0.45\textwidth}
\includegraphics[width=\linewidth]{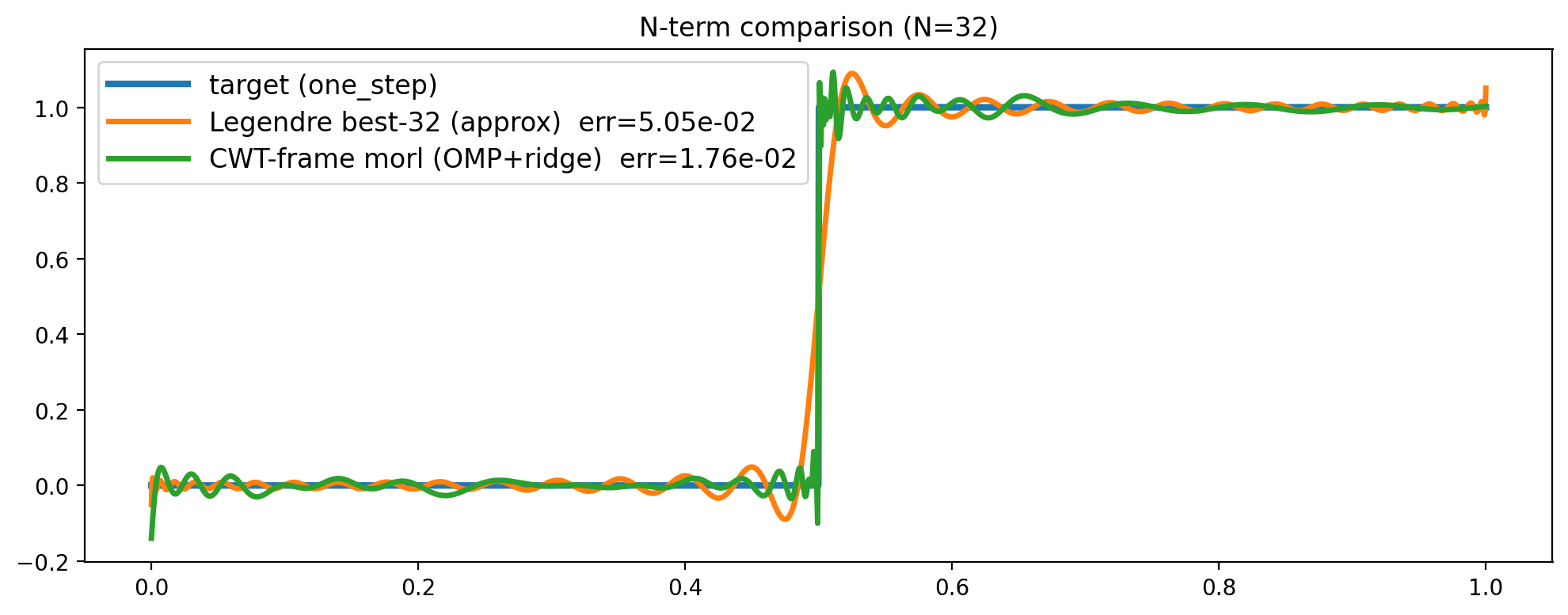}
\end{subfigure}
\begin{subfigure}{0.45\textwidth}
\includegraphics[width=\linewidth]{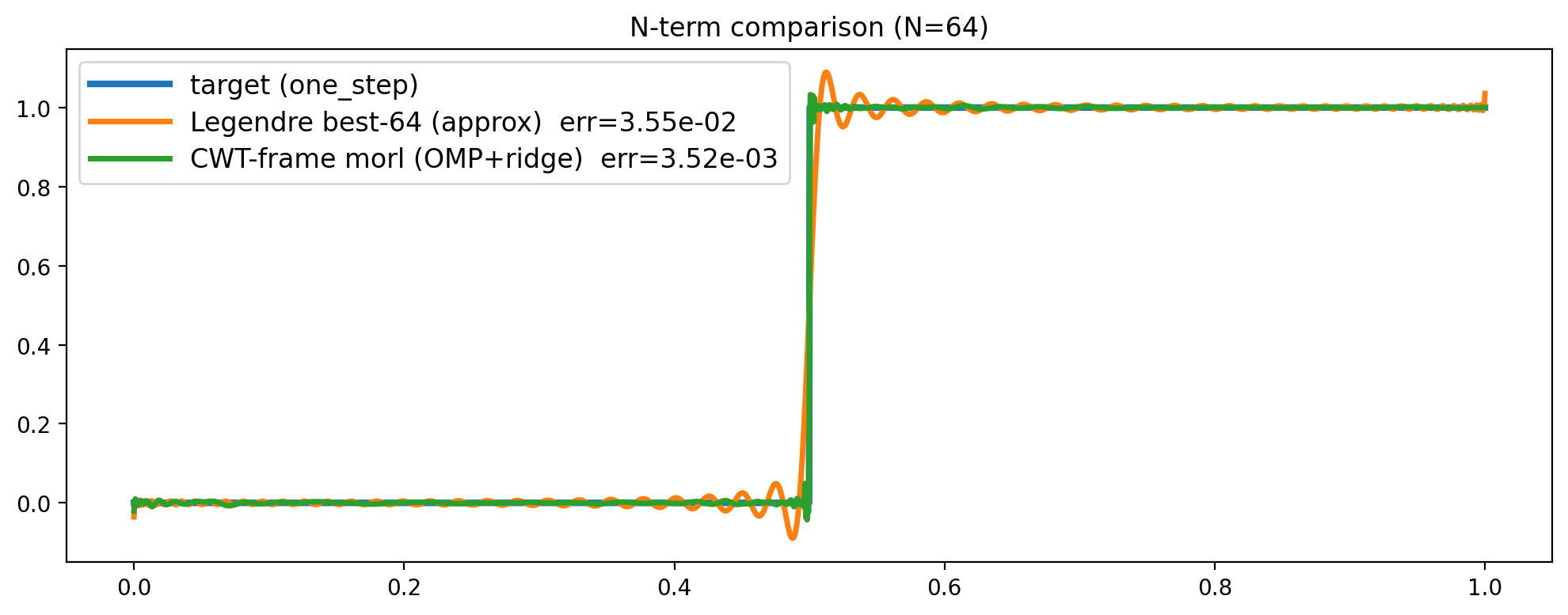}
\end{subfigure}

\vspace{0.3cm}

\begin{subfigure}{0.45\textwidth}
\includegraphics[width=\linewidth]{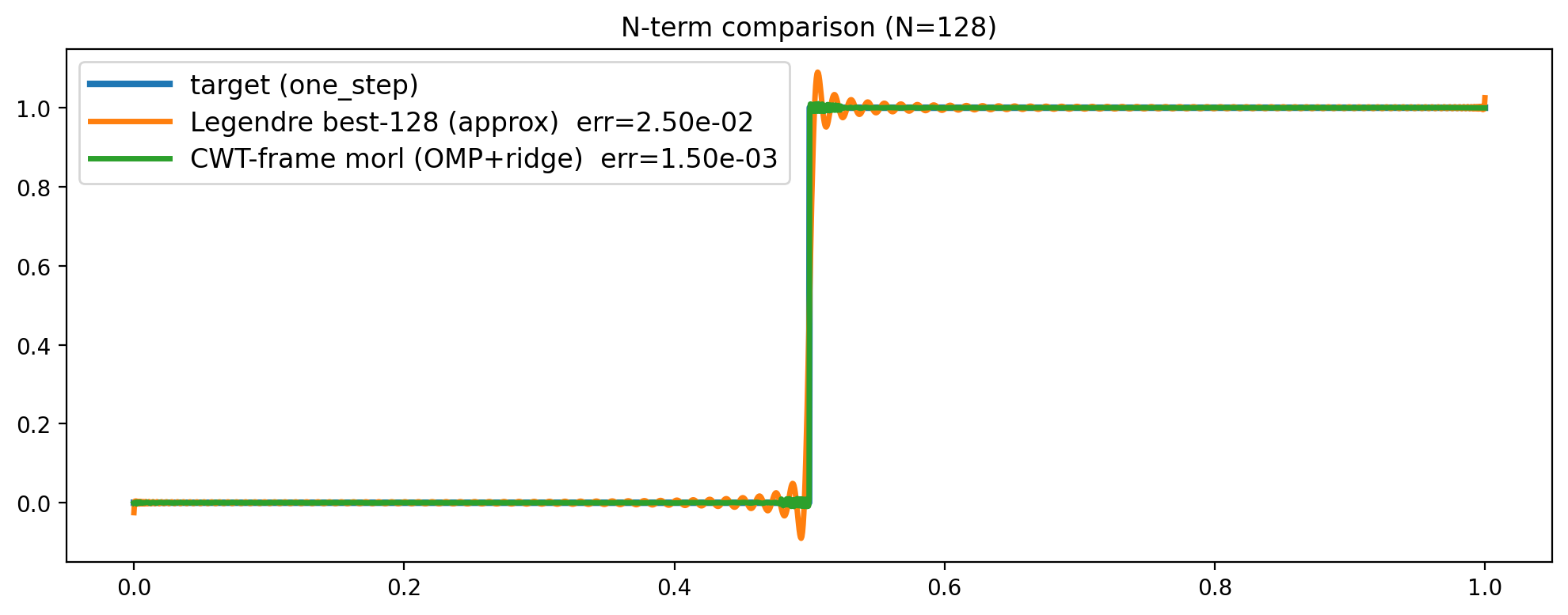}
\end{subfigure}
\begin{subfigure}{0.45\textwidth}
\includegraphics[width=\linewidth]{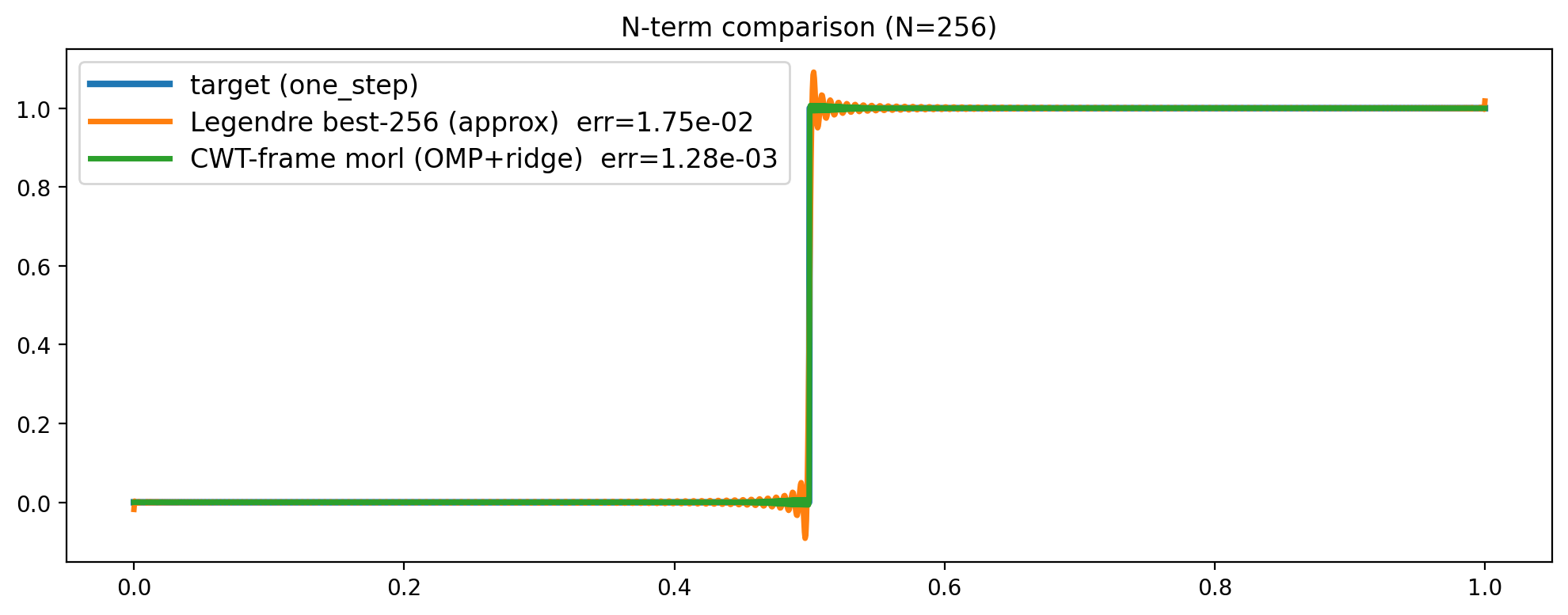}
\end{subfigure}

\caption{Comparison of approximations of $f_{\text{one}}$ target with \texttt{morl} and \texttt{Legendre} frames on various budgets $N$.}
\end{figure}

\begin{figure}[htbp]
\centering

\begin{subfigure}{0.45\textwidth}
\includegraphics[width=\linewidth]{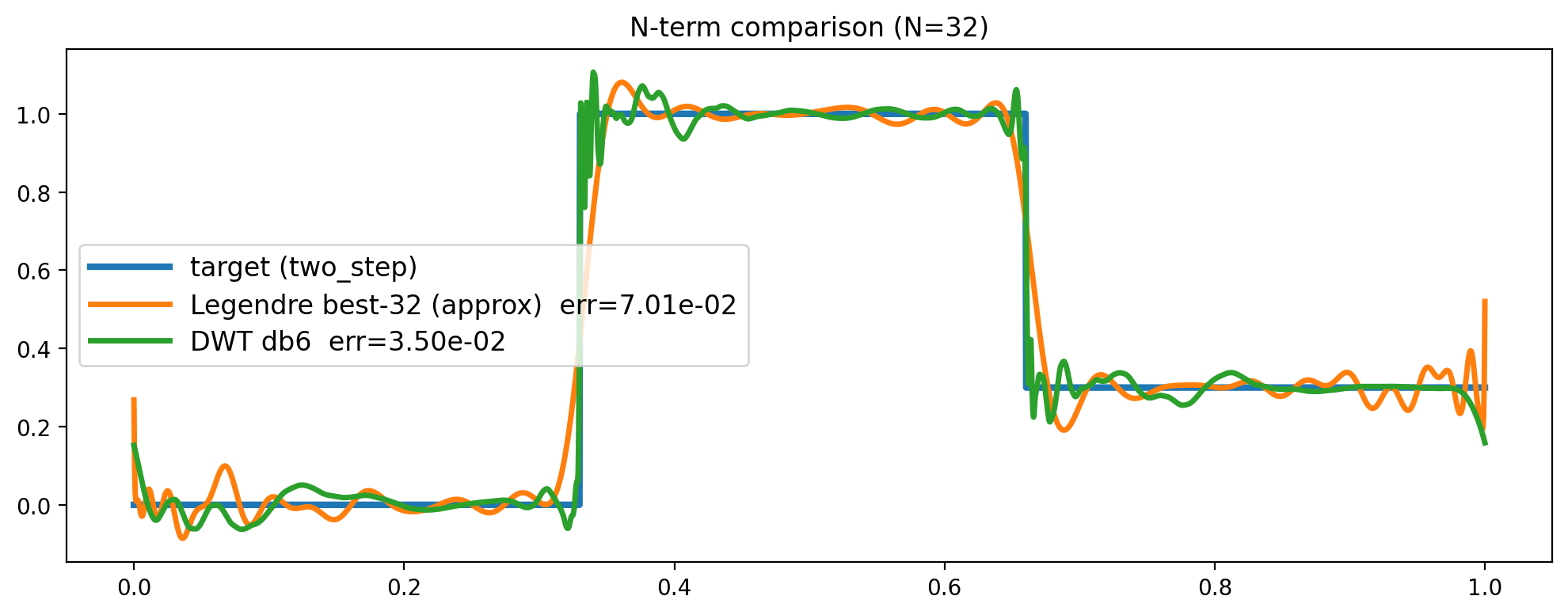}
\end{subfigure}
\begin{subfigure}{0.45\textwidth}
\includegraphics[width=\linewidth]{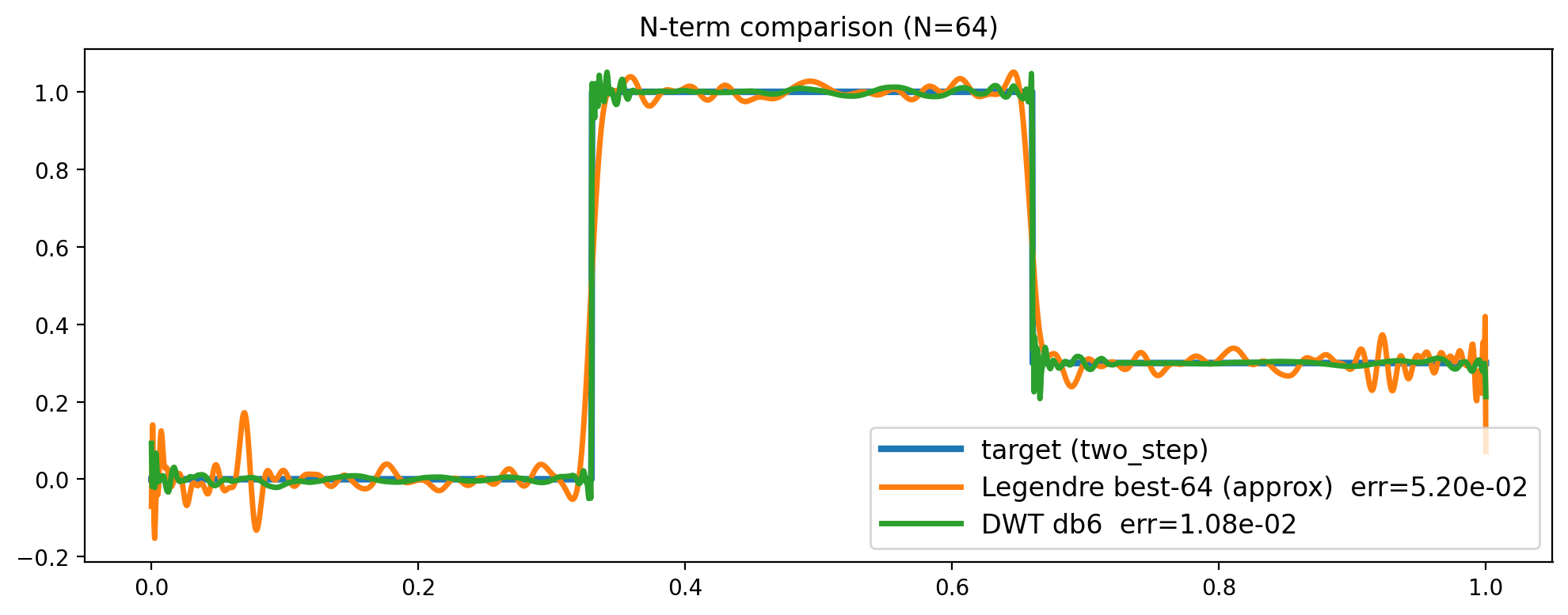}
\end{subfigure}

\vspace{0.3cm}

\begin{subfigure}{0.45\textwidth}
\includegraphics[width=\linewidth]{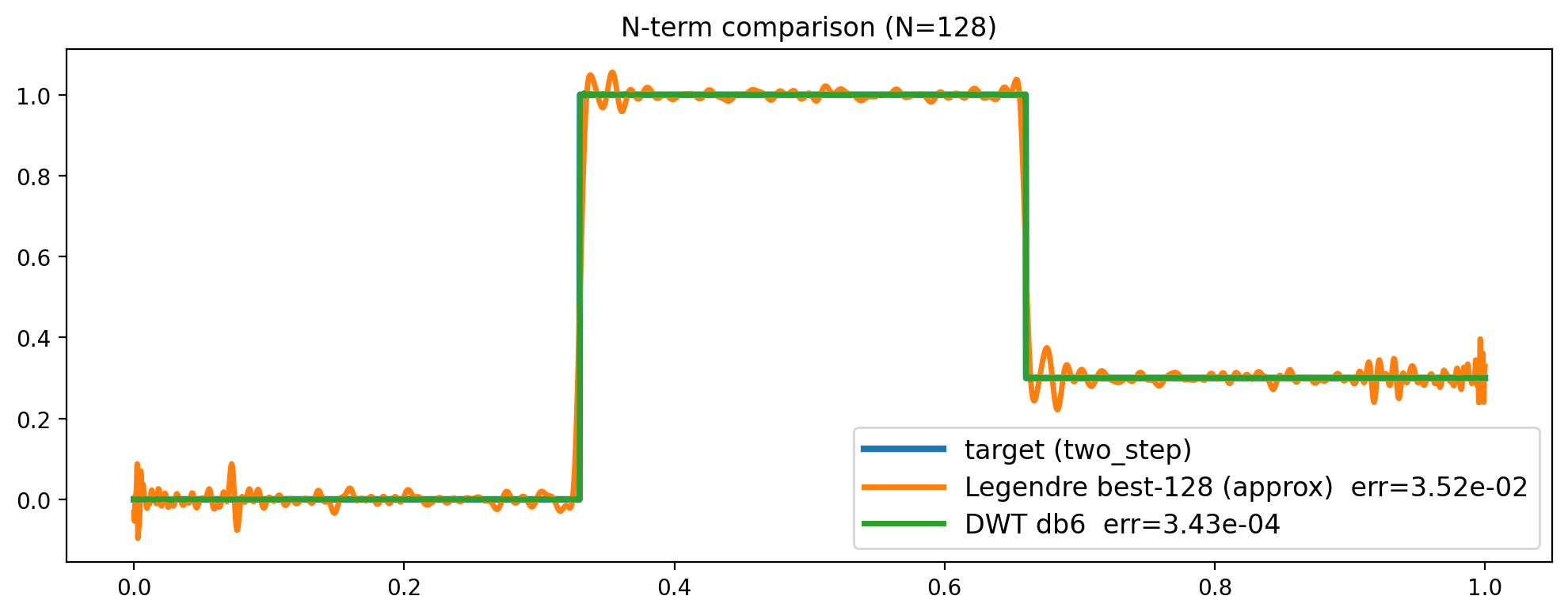}
\end{subfigure}
\begin{subfigure}{0.45\textwidth}
\includegraphics[width=\linewidth]{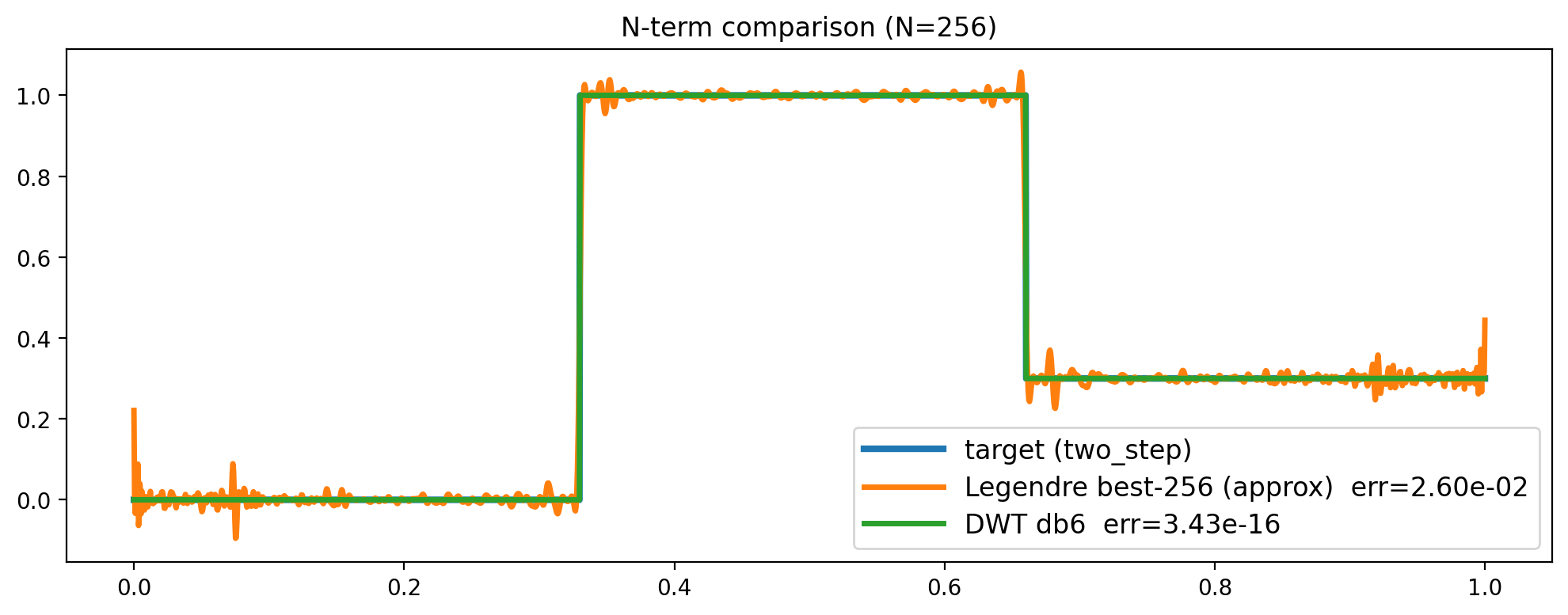}
\end{subfigure}

\caption{Comparison of approximations of $f_{\text{two}}$ target with \texttt{db6} and \texttt{Legendre} frames on various budgets $N$.}
\end{figure}

\begin{figure}[htbp]
\centering

\begin{subfigure}{0.45\textwidth}
\includegraphics[width=\linewidth]{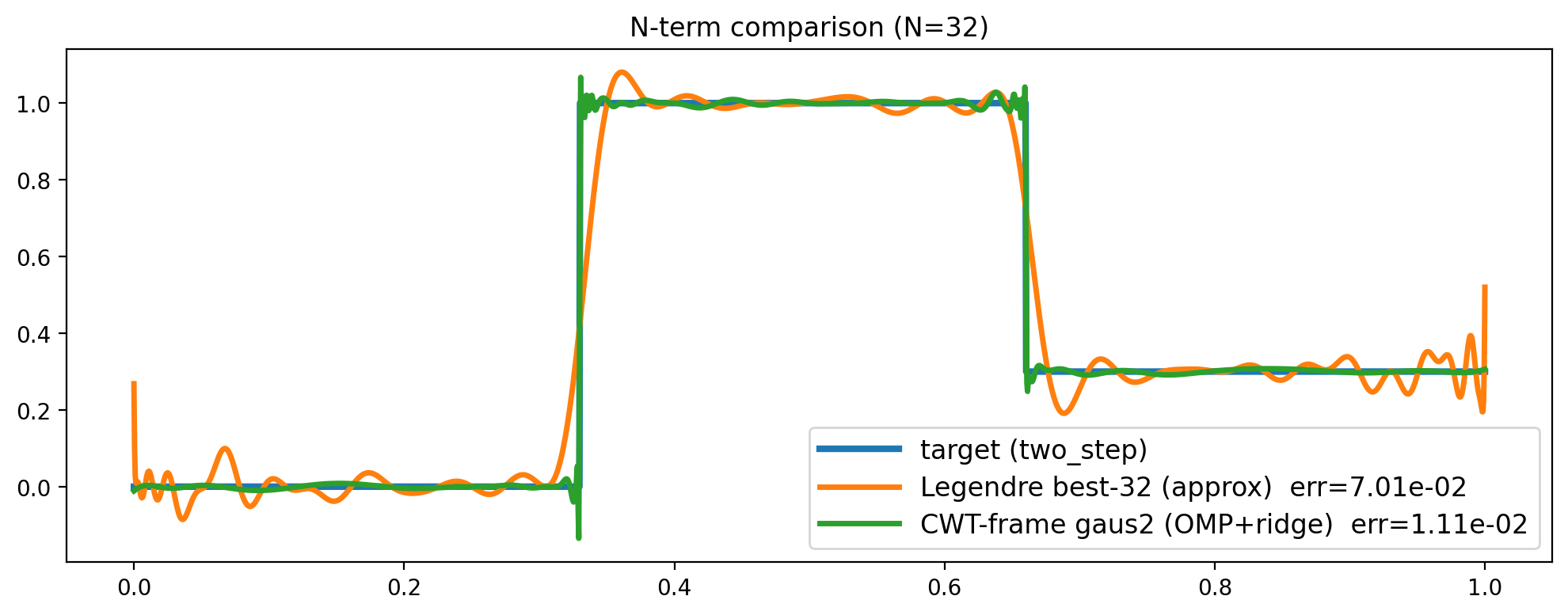}
\end{subfigure}
\begin{subfigure}{0.45\textwidth}
\includegraphics[width=\linewidth]{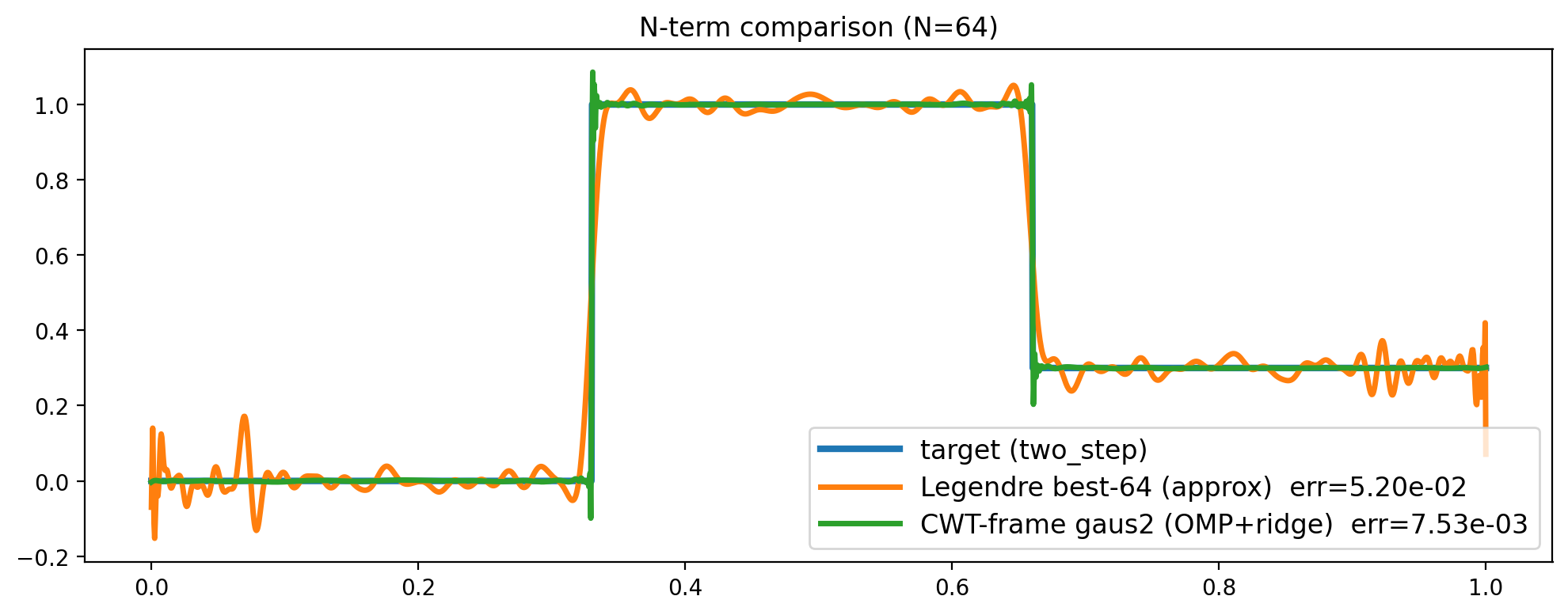}
\end{subfigure}

\vspace{0.3cm}

\begin{subfigure}{0.45\textwidth}
\includegraphics[width=\linewidth]{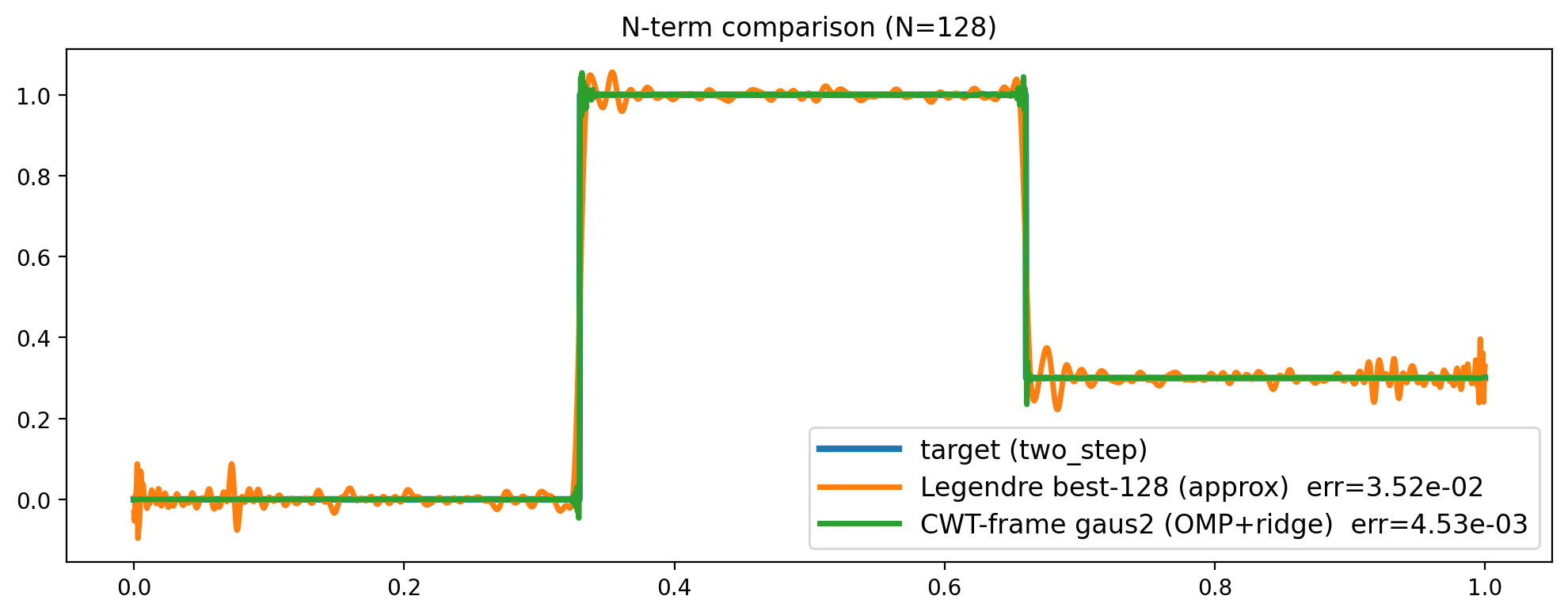}
\end{subfigure}
\begin{subfigure}{0.45\textwidth}
\includegraphics[width=\linewidth]{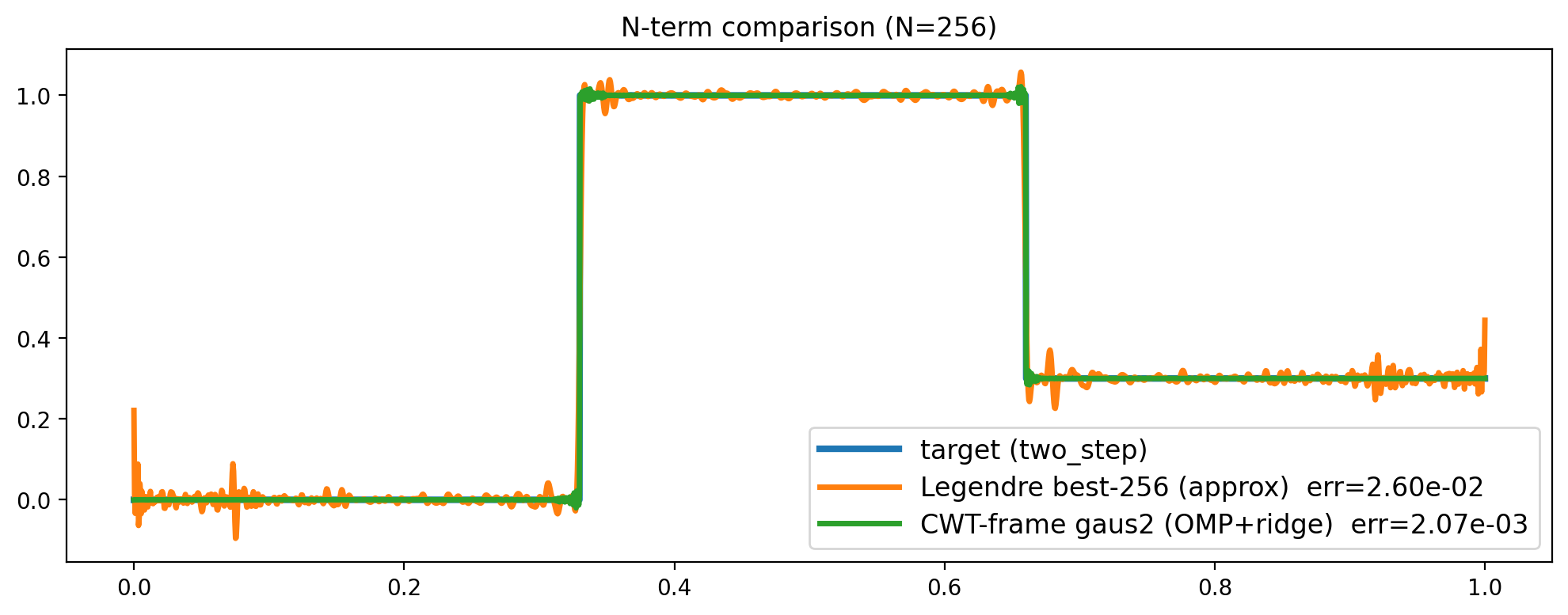}
\end{subfigure}

\caption{Comparison of approximations of $f_{\text{two}}$ target with \texttt{gaus2} and \texttt{Legendre} frames on various budgets $N$.}
\end{figure}

\begin{figure}[htbp]
\centering

\begin{subfigure}{0.45\textwidth}
\includegraphics[width=\linewidth]{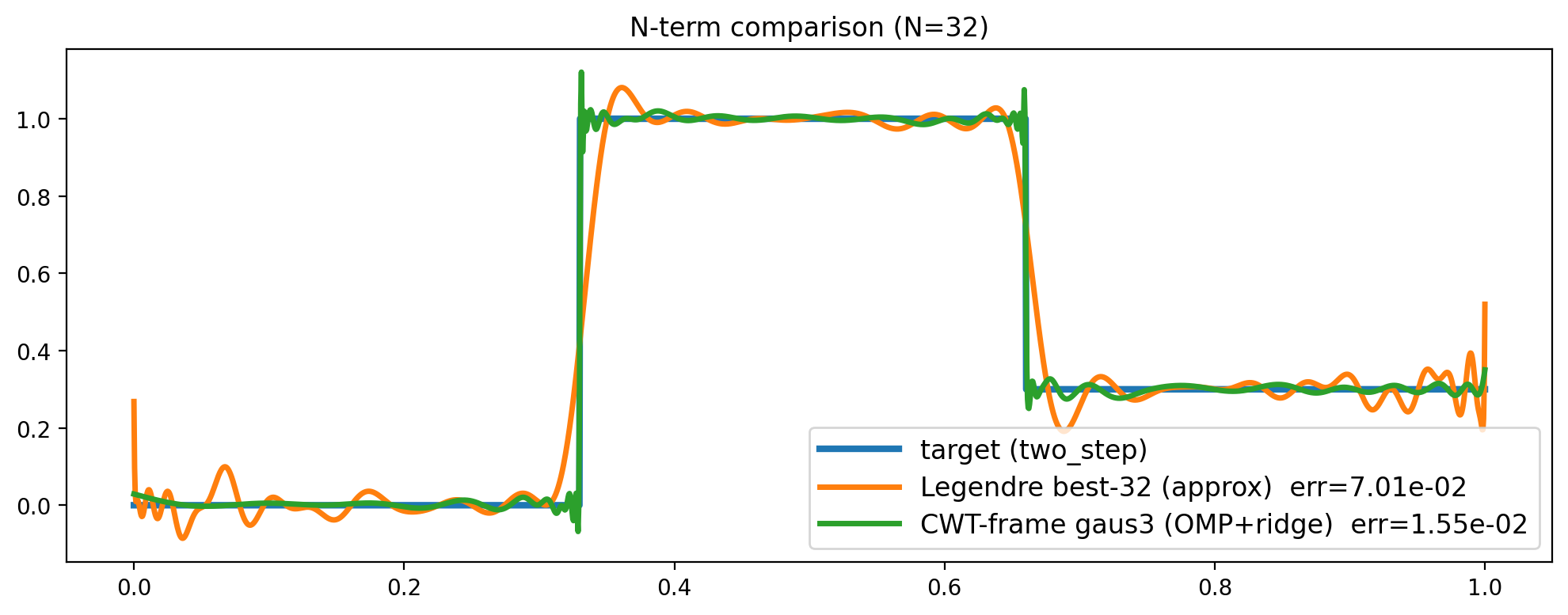}
\end{subfigure}
\begin{subfigure}{0.45\textwidth}
\includegraphics[width=\linewidth]{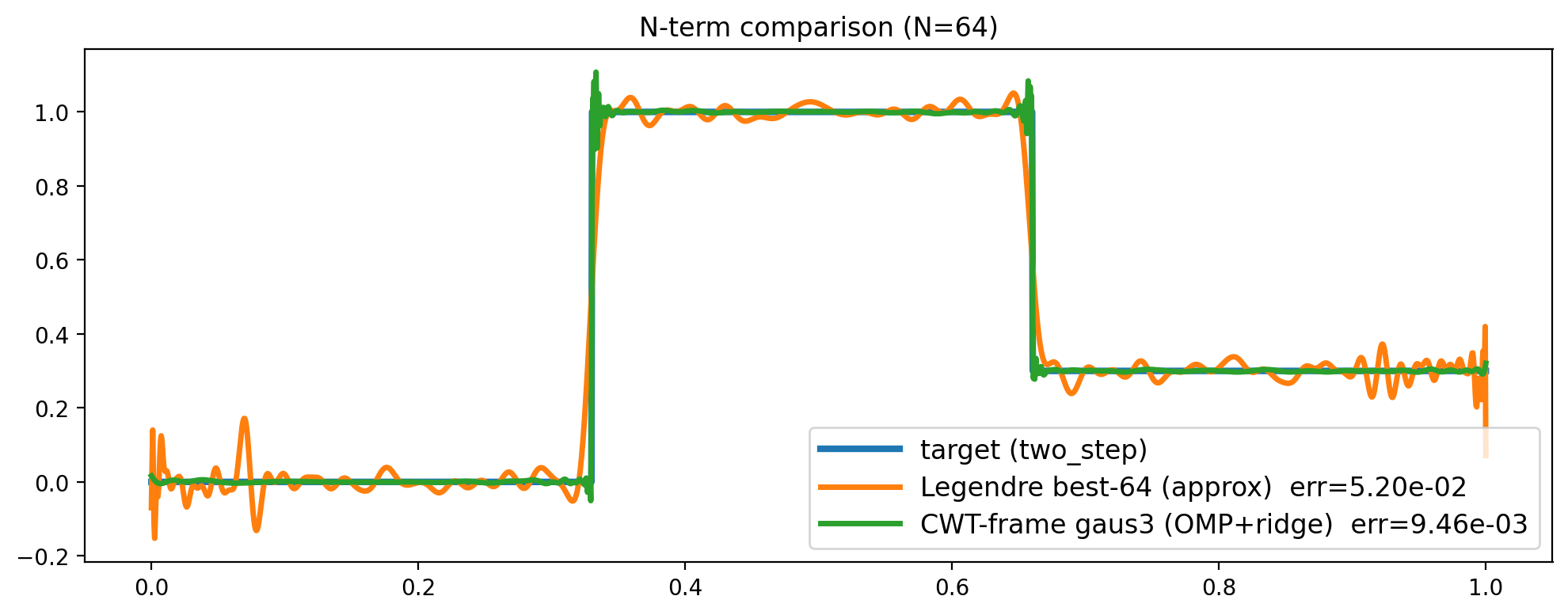}
\end{subfigure}

\vspace{0.3cm}

\begin{subfigure}{0.45\textwidth}
\includegraphics[width=\linewidth]{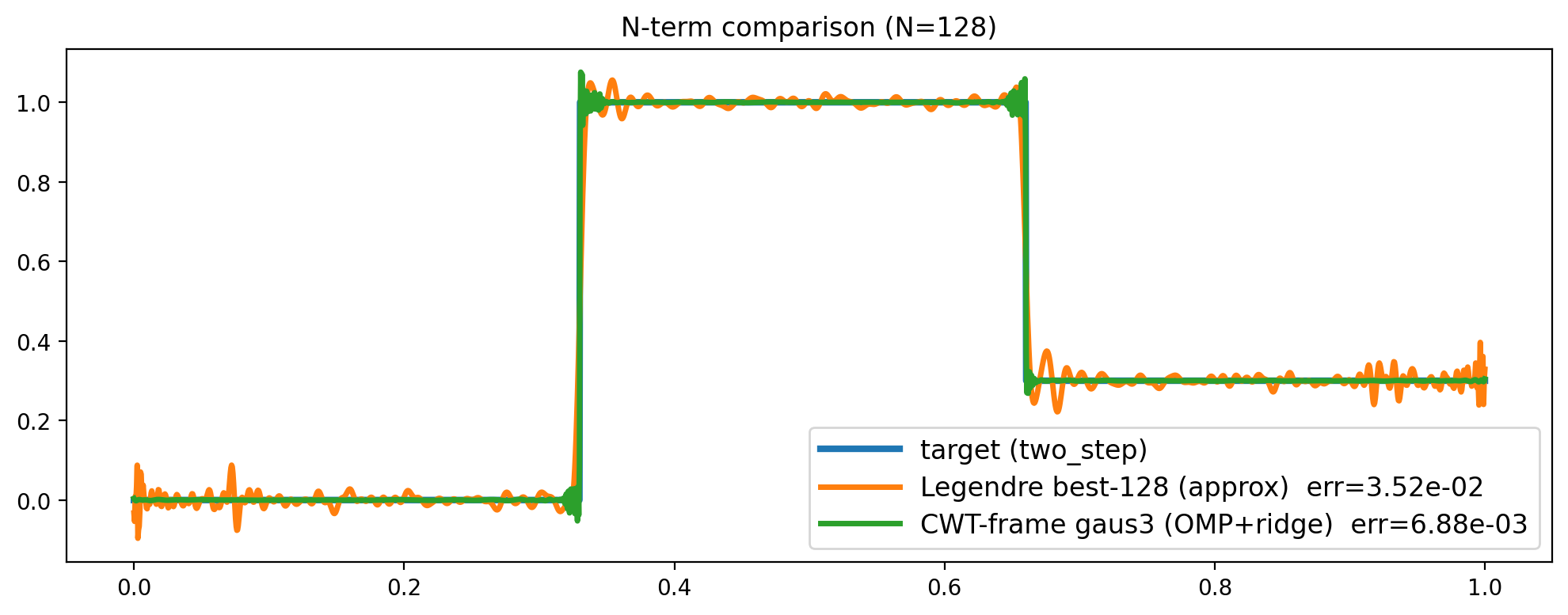}
\end{subfigure}
\begin{subfigure}{0.45\textwidth}
\includegraphics[width=\linewidth]{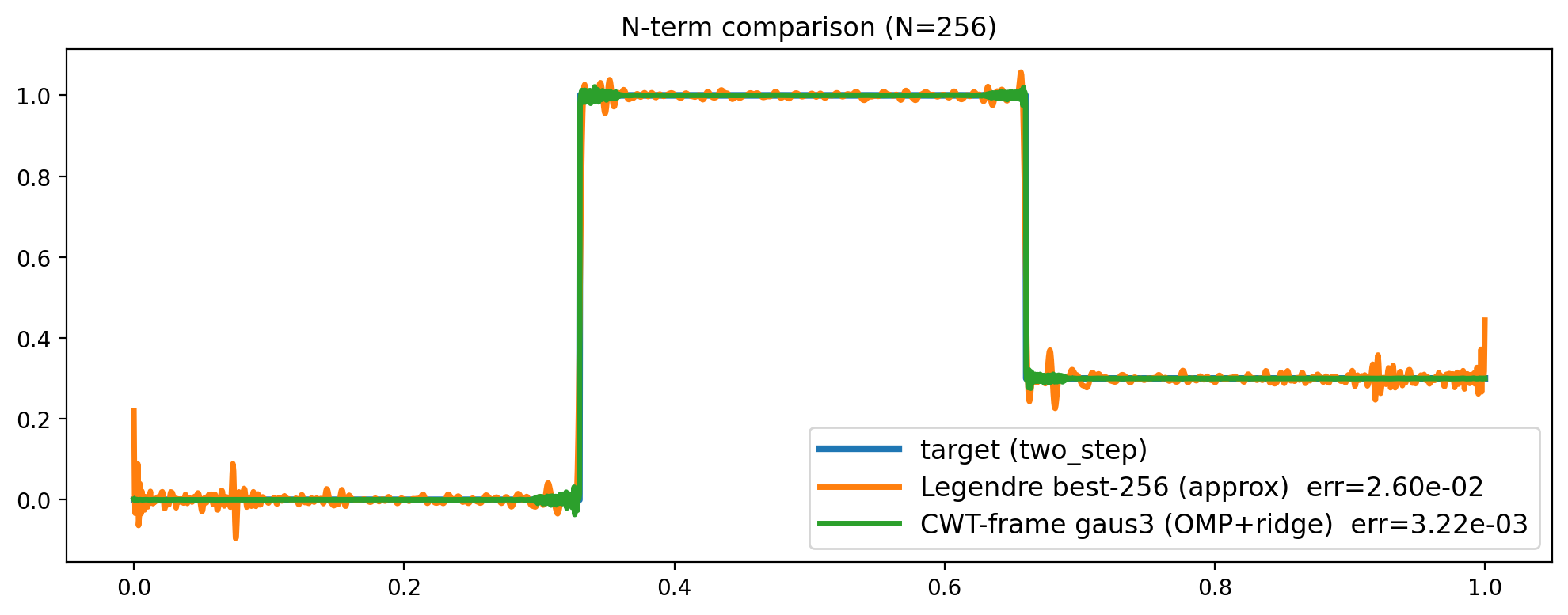}
\end{subfigure}

\caption{Comparison of approximations of $f_{\text{two}}$ target with \texttt{gaus3} and \texttt{Legendre} frames on various budgets $N$.}
\end{figure}

\begin{figure}[htbp]
\centering

\begin{subfigure}{0.45\textwidth}
\includegraphics[width=\linewidth]{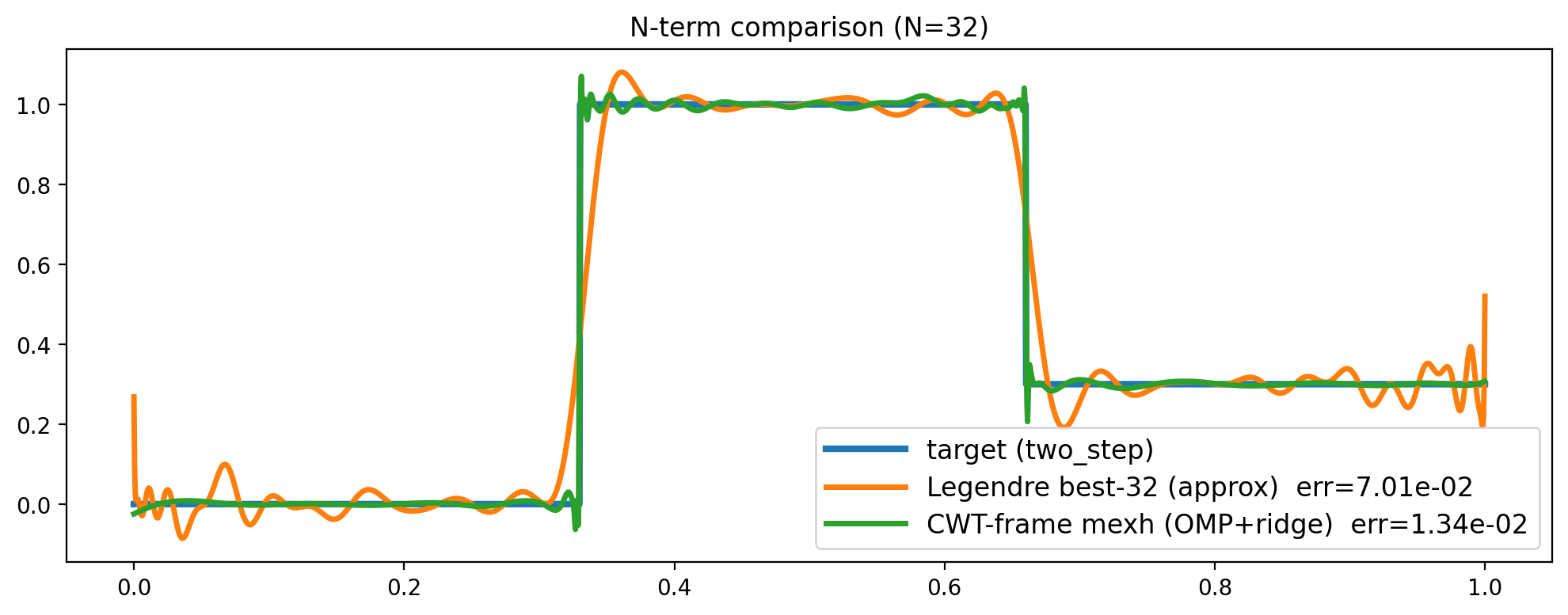}
\end{subfigure}
\begin{subfigure}{0.45\textwidth}
\includegraphics[width=\linewidth]{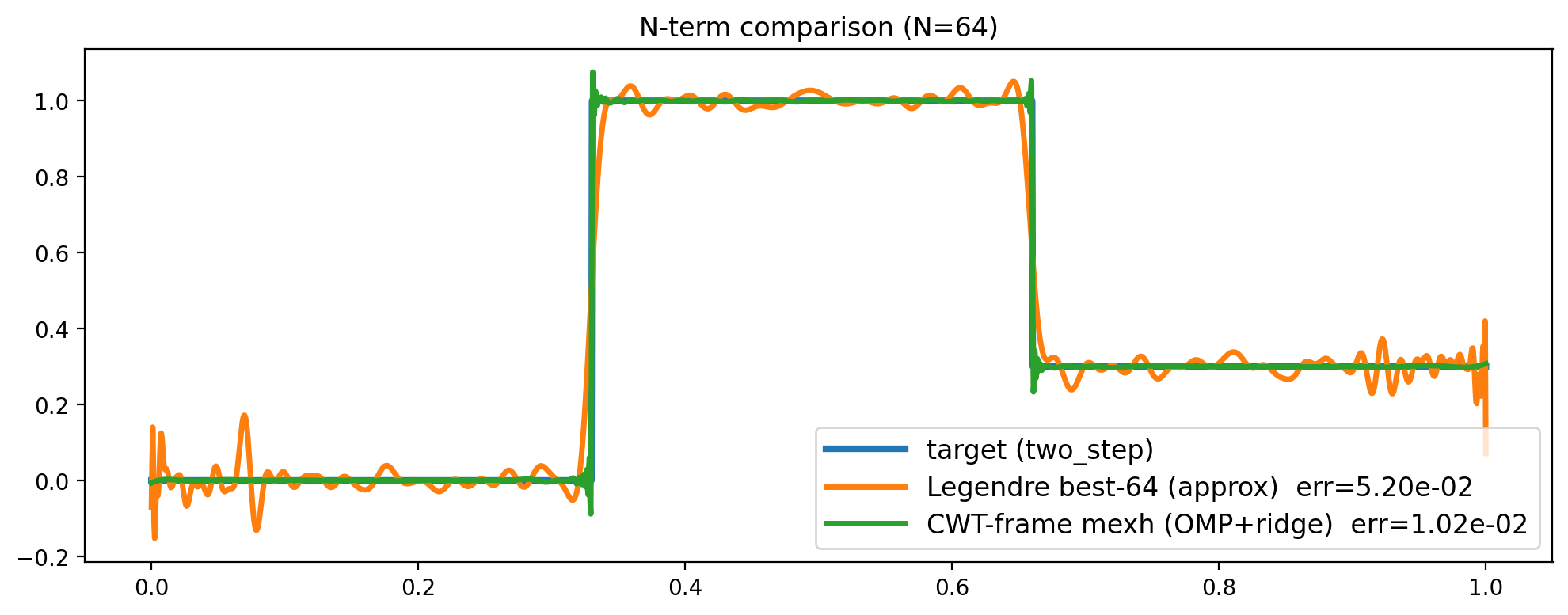}
\end{subfigure}

\vspace{0.3cm}

\begin{subfigure}{0.45\textwidth}
\includegraphics[width=\linewidth]{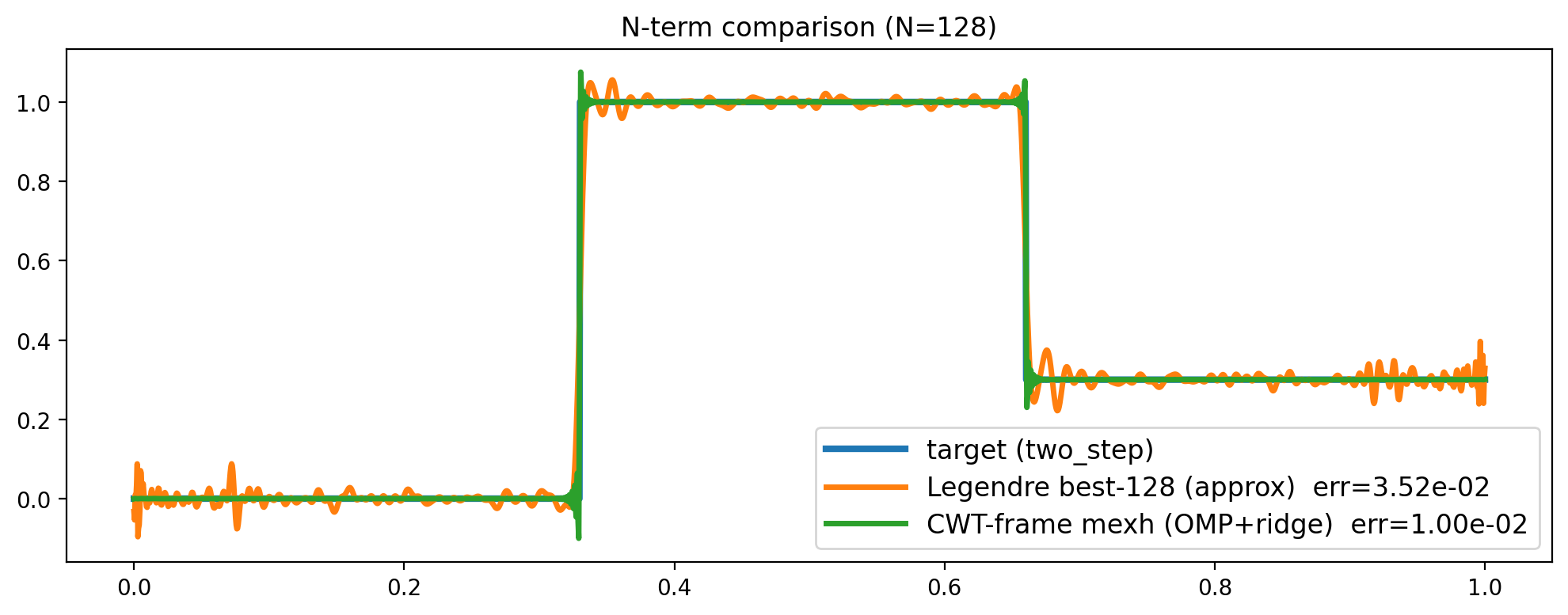}
\end{subfigure}
\begin{subfigure}{0.45\textwidth}
\includegraphics[width=\linewidth]{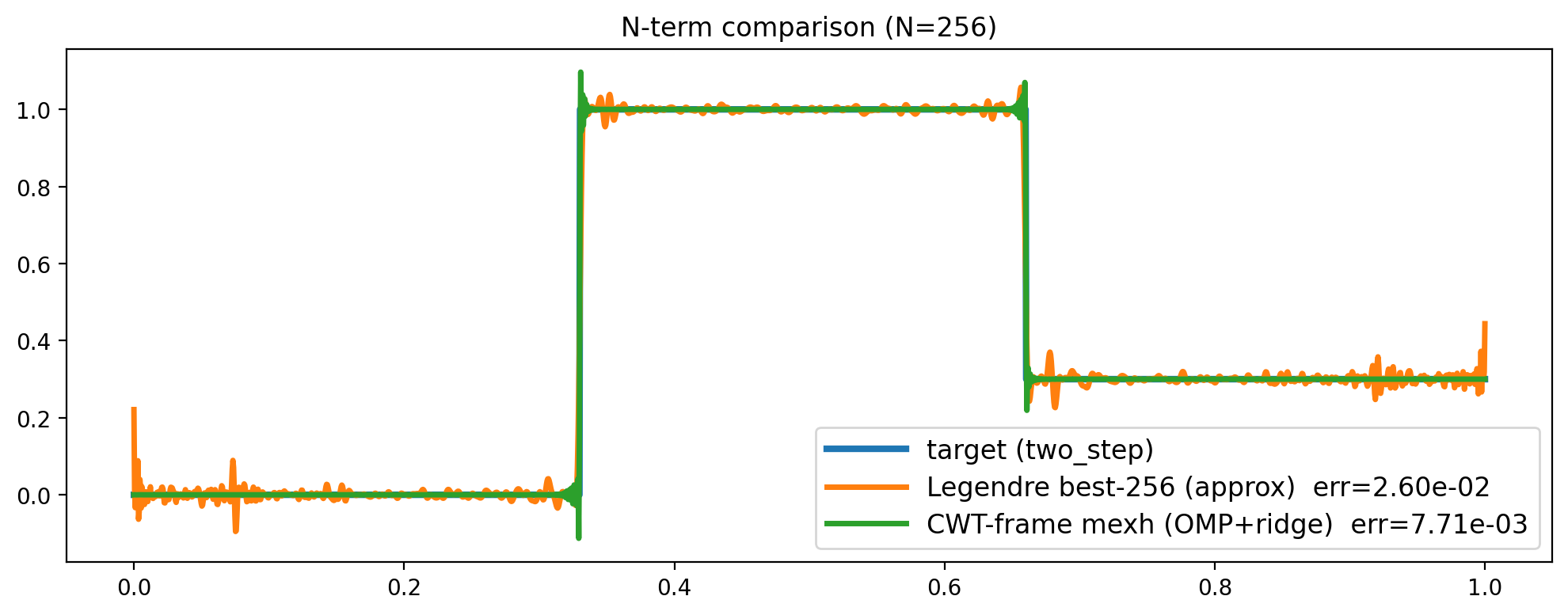}
\end{subfigure}

\caption{Comparison of approximations of $f_{\text{two}}$ target with \texttt{mexh} and \texttt{Legendre} frames on various budgets $N$.}
\end{figure}

\begin{figure}[htbp]
\centering

\begin{subfigure}{0.45\textwidth}
\includegraphics[width=\linewidth]{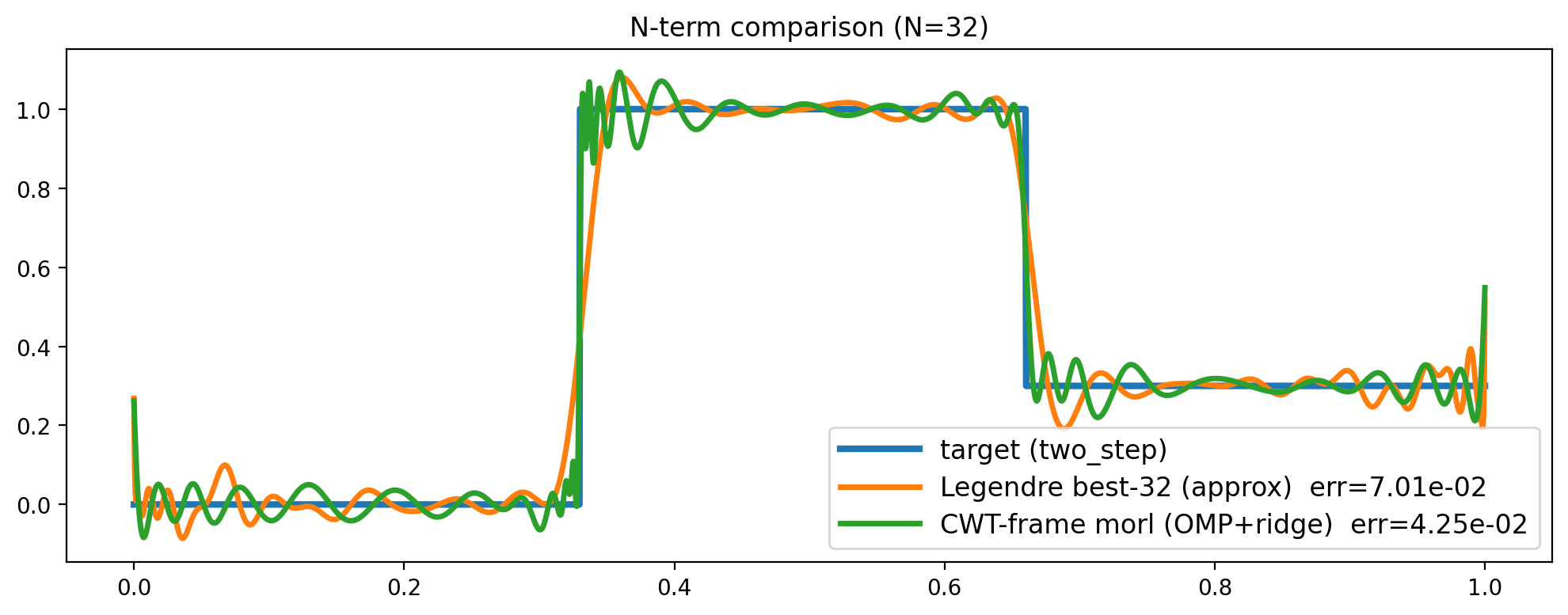}
\end{subfigure}
\begin{subfigure}{0.45\textwidth}
\includegraphics[width=\linewidth]{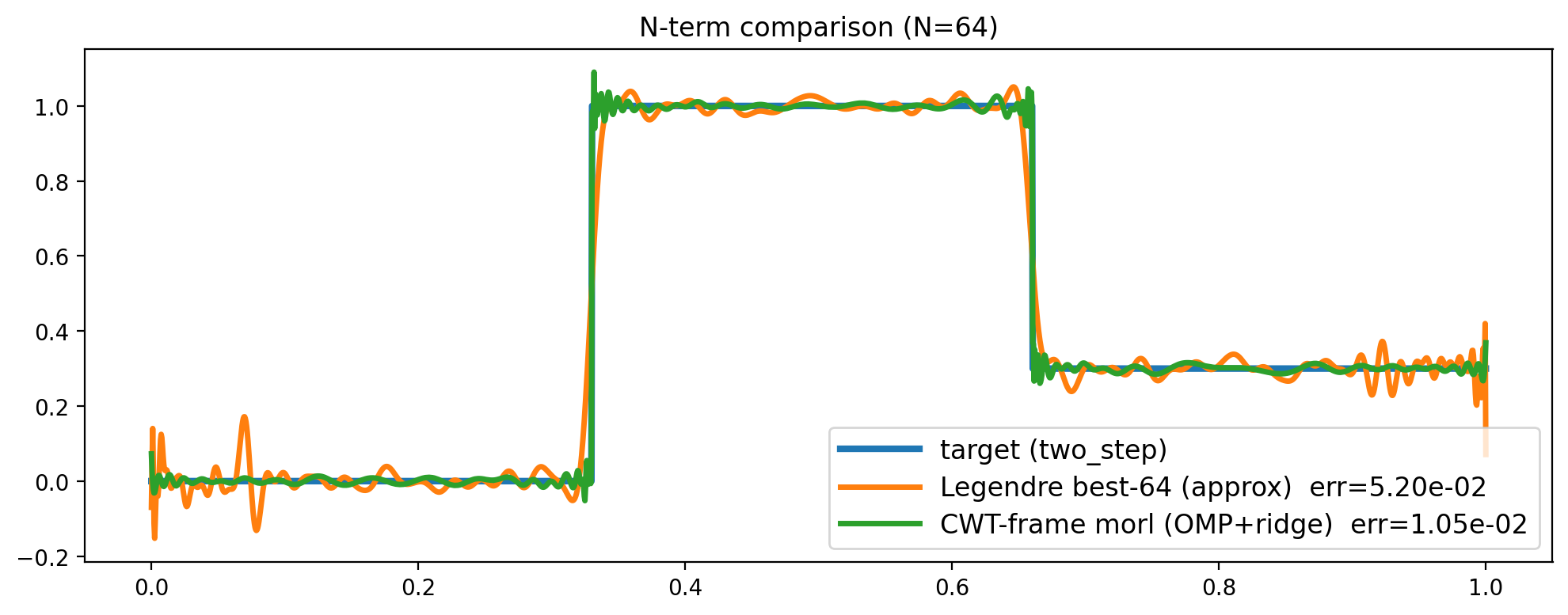}
\end{subfigure}

\vspace{0.3cm}

\begin{subfigure}{0.45\textwidth}
\includegraphics[width=\linewidth]{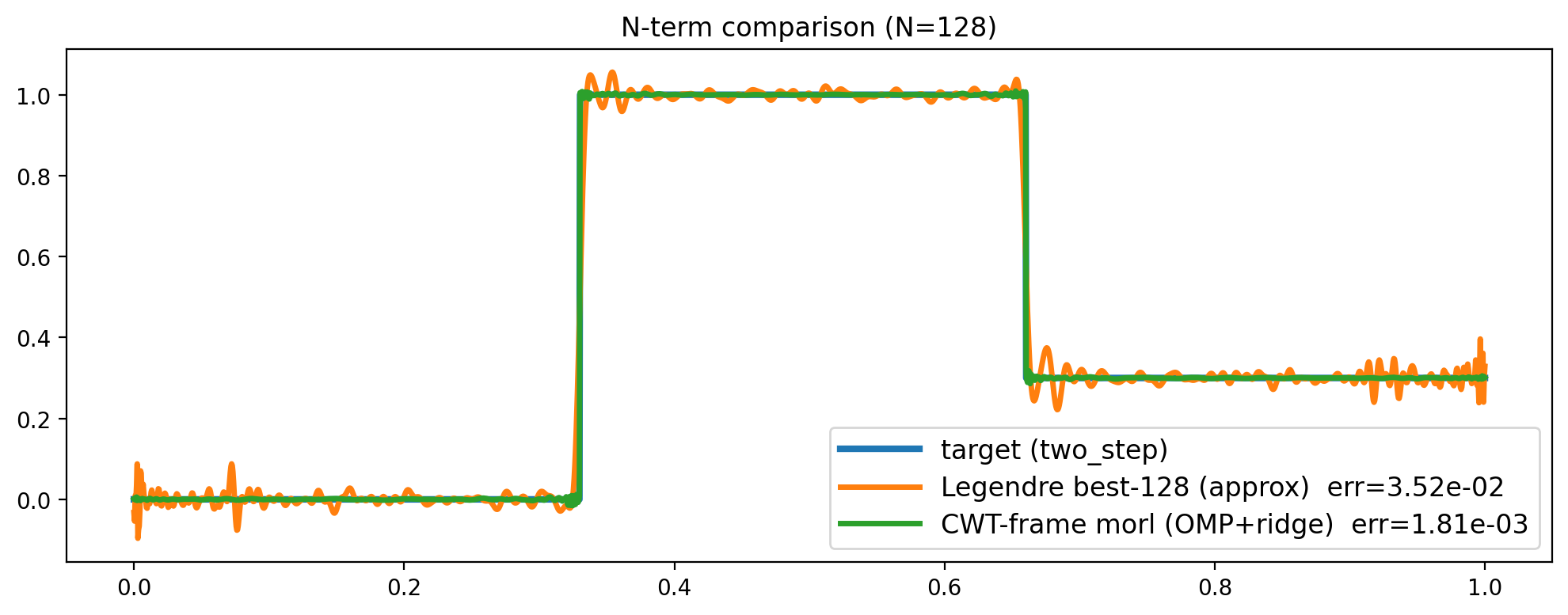}
\end{subfigure}
\begin{subfigure}{0.45\textwidth}
\includegraphics[width=\linewidth]{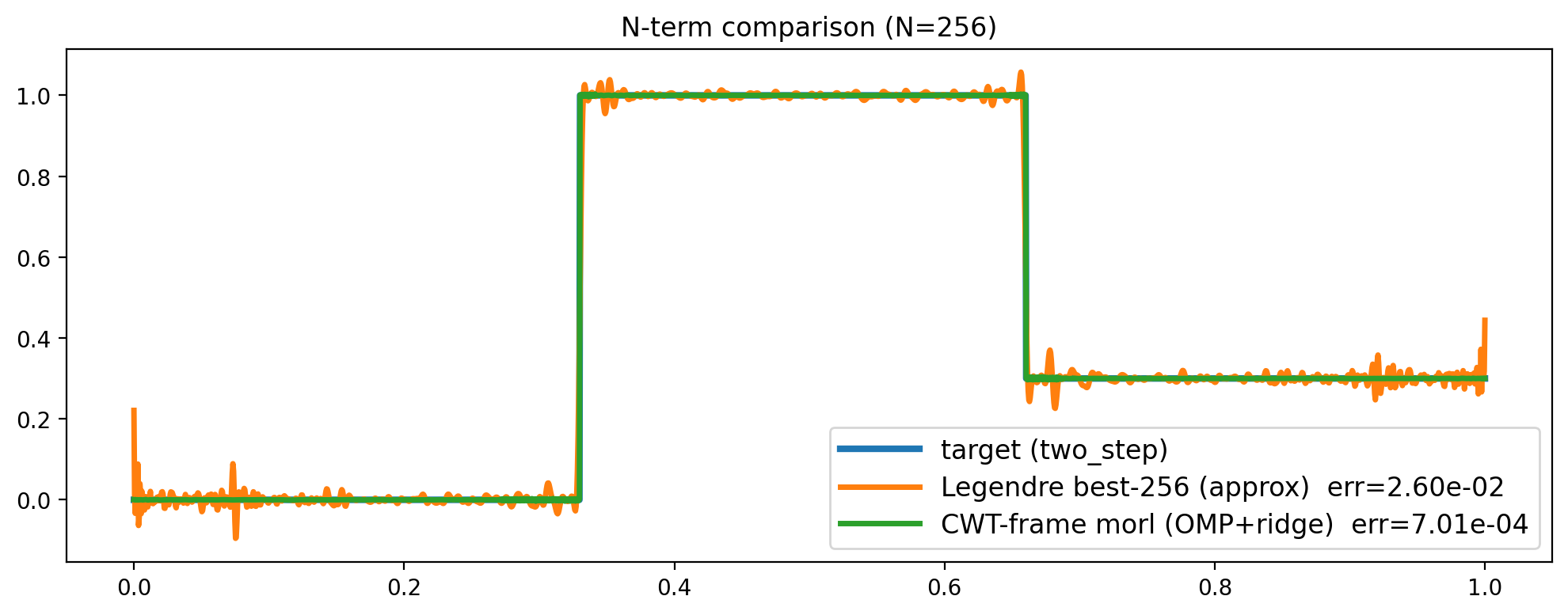}
\end{subfigure}

\caption{Comparison of approximations of $f_{\text{two}}$ target with \texttt{morl} and \texttt{Legendre} frames on various budgets $N$.}
\end{figure}

\begin{figure*}
    \centering
    % (a)
    \begin{subfigure}[b]{0.45\textwidth}
        \centering
        \includegraphics[width=\textwidth]{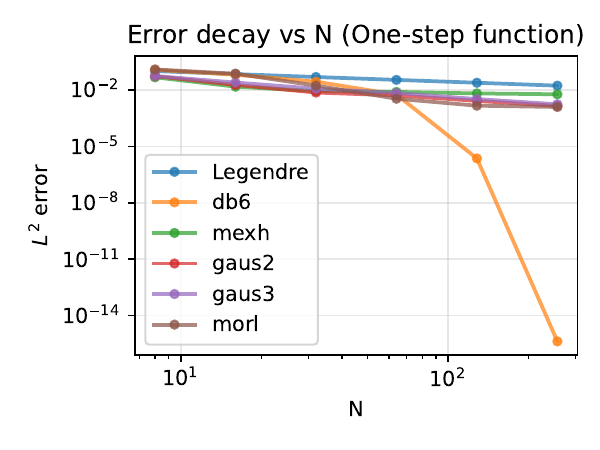}
        \caption{}
    \end{subfigure}%
    % (b)
    \begin{subfigure}[b]{0.45\textwidth}
        \centering
        \small
        \includegraphics[width=\textwidth]{figures/error_plot_two_step_0.pdf}
        \caption{}
    \end{subfigure}
   \caption{
Approximation errors on various frames for piecewise constant functions.
The $L^2$ approximation error is shown as a function of the number of retained
coefficients $N$ on a log--log scale for a one-step function (left) and a
two-step function (right). Results are reported for Legendre polynomials and
several wavelet-based frames, illustrating differences in convergence behavior
and the effect of increasing discontinuity complexity.
}
    \label{fig:error_1_2}
\end{figure*}

% \paragraph{Single-budget comparisons.}
% % Leave space for representative plots.
% \begin{figure}[t]
%   \centering
%   % \includegraphics[width=0.85\linewidth]{<path-to-figure>}
%   \fbox{\parbox{0.85\linewidth}{\centering Placeholder: wavelet vs.\ Legendre reconstruction at a fixed budget $N$ (one-step / two-step).}}
%   \caption{Representative reconstructions at fixed $N$ (to be added).}
%   \label{fig:step-budget:fixedN}
% \end{figure}

% \paragraph{Error decay as a function of $N$.}
% % Leave space for log-log error curves.
% \begin{figure}[t]
%   \centering
%   % \includegraphics[width=0.75\linewidth]{<path-to-figure>}
%   \fbox{\parbox{0.75\linewidth}{\centering Placeholder: log-log plot of $L^2$ error vs.\ $N$ for Legendre and wavelet methods.}}
%   \caption{Error decay curves $e_N$ versus coefficient budget $N$ (to be added).}
%   \label{fig:step-budget:errors}
% \end{figure}

% \paragraph{Tabulated summary.}
% % Leave space for a table of final errors/slopes.
% \begin{table}[t]
% \centering
% \caption{Summary of approximation errors (to be added).}
% \label{tab:step-budget:summary}
% \begin{tabular}{lcccc}
% \hline
% Method & Target & $N$ range & Error trend (fit) & Notes \\
% \hline
% Legendre best-$N$ & one-step &  &  &  \\
% Legendre best-$N$ & two-step &  &  &  \\
% DWT (\texttt{db6}) & one-step &  &  &  \\
% DWT (\texttt{db6}) & two-step &  &  &  \\
% CWT dictionary (\texttt{mexh}) & one-step &  &  &  \\
% CWT dictionary (\texttt{mexh}) & two-step &  &  &  \\
% \hline
% \end{tabular}
% \end{table}

\section{Other technical results relevant to the paper}

% \subsection{Wavelet}
% \begin{definition}[Orthonormal Wavelet Basis]
% A function $\psi \in L^2(\mathbb{R})$ is called a \textbf{mother wavelet} if the collection of functions

% \end{definition}

\subsection{Jacobian of the Final State in LTI State-Space Models}
\label{jacobian}
We consider the linear discrete-time state-space system
\begin{equation}
h_{t+1} = \bar A h_t + \bar B x_t,
\qquad t = 0,\dots,T-1,
\end{equation}
where $h_t \in \mathbb{R}^N$ denotes the hidden state, $x_t \in \mathbb{R}^D$ the input, and
$\bar A \in \mathbb{R}^{N \times N}$, $\bar B \in \mathbb{R}^{N \times D}$ are fixed matrices.
Unrolling the recurrence yields the closed-form expression
\begin{equation}
h_T
=
\bar A^{T} h_0
+
\sum_{t=0}^{T-1} \bar A^{T-1-t} \bar B x_t.
\end{equation}
Then the Jacobian of the final hidden state with respect to the input at time $t$ is given by
\begin{equation}
\frac{\partial h_T}{\partial x_t}
=
\bar A^{T-1-t} \bar B,
\qquad t = 0,\dots,T-1.
\end{equation}

Collecting these Jacobian blocks yields a third-order tensor
\begin{equation}
\mathcal{J}
=
\left( J_0,\dots,J_{T-1} \right)
\in
\mathbb{R}^{T \times N \times D},
\end{equation}
where $\mathcal{J}_{t,:,:} = \partial h_T / \partial x_t$ denotes the sensitivity of the final hidden state to the input at time $t$.

In the special case of scalar input ($D=1$), $\bar B \in \mathbb{R}^{N \times 1}$ and each $J_t$ reduces to a vector in $\mathbb{R}^N$.
Stacking these vectors along the temporal dimension yields the matrix
\begin{equation}
G
=
\left[
\bar A^{T-1}\bar B,\;
\bar A^{T-2}\bar B,\;
\dots,\;
\bar B
\right]
\in
\mathbb{R}^{N \times T},
\end{equation}
whose $t$-th column corresponds to the gradient $\partial h_T / \partial x_t$. In other words, each column of G is the impulse response of the SSM at a different time lag. 

\subsection{Proof of Lemma~\ref{lem:conditioning-controls-A}}
\begin{proof}
Let $F \in \mathbb{R}^{N \times L}$ have full row rank and define $S := F F^\ast \in
\mathbb{R}^{N\times N}$. Let $\dot F \in \mathbb{R}^{N\times L}$ be fixed, and define
\[
A \in \arg\min_{X \in \mathbb{R}^{N\times N}} \|\dot F - X F\|_F^2 .
\]
We prove (i) uniqueness and the closed form $A = \dot F F^\ast S^{-1}$, and (ii) the
operator-norm bound.

\textbf{Step 1: $S$ is invertible.}
Since $F$ has full row rank, $\mathrm{rank}(F)=N$. Hence $S = F F^\ast$ is symmetric
positive definite. Indeed, for any nonzero $x \in \mathbb{R}^N$,
\[
x^\ast S x \;=\; x^\ast F F^\ast x \;=\; \|F^\ast x\|_2^2 \;>\; 0,
\]
because $F^\ast x = 0$ would imply $x$ lies in the left nullspace of $F$, which is
trivial when $F$ has full row rank. Therefore $S$ is invertible.

\textbf{Step 2: Normal equations and closed form.}
Define the objective
\[
J(X) := \|\dot F - X F\|_F^2
= \mathrm{tr}\!\left((\dot F - X F)(\dot F - X F)^\ast\right).
\]
Expanding the product and using linearity and cyclicity of the trace,
\[
J(X)
= \mathrm{tr}(\dot F \dot F^\ast)
- 2\,\mathrm{tr}(X F \dot F^\ast)
+ \mathrm{tr}(X F F^\ast X^\ast).
\]
Since $F F^\ast$ is symmetric, we may differentiate $J$ with respect to $X$ using
standard matrix calculus identities:
\[
\nabla_X \,\mathrm{tr}(X M) = M^\ast,
\qquad
\nabla_X \,\mathrm{tr}(X M X^\ast) = 2 X M
\quad\text{when } M = M^\ast.
\]
This yields
\[
\nabla_X J(X) = -2\,\dot F F^\ast + 2\,X(F F^\ast).
\]
Setting $\nabla_X J(X)=0$ gives the normal equation
\[
X(F F^\ast) = \dot F F^\ast .
\]
By Step 1, $F F^\ast = S$ is invertible, so the minimizer is unique and equals
\[
A = \dot F F^\ast (F F^\ast)^{-1} = \dot F F^\ast S^{-1}.
\]

\textbf{Step 3: Operator-norm bound.}
By submultiplicativity of the spectral norm,
\[
\|A\|_2
= \|\dot F F^\ast S^{-1}\|_2
\le \|\dot F F^\ast\|_2 \,\|S^{-1}\|_2.
\]
Since $S$ is symmetric positive definite, $\|S^{-1}\|_2 = 1/\lambda_{\min}(S)$, hence
\[
\|A\|_2 \le \frac{\|\dot F F^\ast\|_2}{\lambda_{\min}(S)}.
\]
\end{proof}

%%%%%%%%%%%%%%%%%%%%%%%%%%%%%%%%%%%%%%
%% Frames
%%%%%%%%%%%%%%%%%%%%%%%%%%%%%%%%%%%%%%
\section{Design Choices of the Wavelet Frames}
\label{construction_frames}
We build the frame matrices $F\in\mathbb{R}^{N\times L}$ ($N$ atoms of $L$ = sequence length) for Morlet, Gaussian-derivative, Mexican-hat, Daubechies and DPSS following the same high-level recipe: 

\indent \paragraph{(1) Frequency grid $\rightarrow$ scales.}
We start choosing pseudo-frequencies $f_k \in [f_{\min}, f_{\max}]$ (log-spaced) as hyperparameters. For each wavelet family, we use its \emph{central frequency} $f_c$ to convert such pseudo-frequencies of choice into a dilation scale,
\(
s_k = \frac{f_c}{f_k},
\)
so larger $f_k$ produces a smaller $s_k$ and hence a narrower atom.

\paragraph{(2) Scale of the prototype wavelet.}
To generate an atom at scale $s_k$ and center $c$, we sample a \emph{shifted and dilated} copy of the mother wavelet $\psi$ on the length-$L$ grid. Specifically, using the wavelet coordinate grid $x$, we recenter at $x(c)$ and rescale by $s_k$, and then interpolate the resulting waveform onto the discrete scales:
\begin{equation}
\varphi_{k,c}[t] \;\propto\; \psi\!\left(\frac{x(t)-x(c)}{s_k}\right).
\end{equation}

\paragraph{(3) Bandwidth-aware shift density.}
Different scales have different temporal widths, so we place narrow atoms more densely than wide atoms. In order to do it, we first estimate the prototype time spread $\sigma_0$ (energy-weighted standard deviation of the prototype). This implies a scale-dependent width $\sigma_k \propto s_k\,\sigma_0$. We then choose a hop size proportional to this width:
\begin{equation}
\mathrm{hop}_k \approx \alpha\,\sigma_k \qquad \text{with} \qquad \alpha \approx 0.75.
\end{equation}
In order to have exactly $N$ atoms overall, we interpret $L/\mathrm{hop}_k$ as a \emph{desired} number of centers per scale and convert these values into integer allocations $\{n_k\}$ that satisfy $\sum_k n_k = N$. We first normalize the desired counts into proportions, set $n_k$ to the integer part of the resulting value, and then distribute the remaining atoms to the scales with the largest remainders. Finally, we place the $n_k$ centers uniformly over $[0,L-1]$ for each scale.

\paragraph{(4) Normalization.} Each atom is normalized to unit energy, $\|\varphi\|_2 = 1$.

%%%%%%%%%%%%%%%%%%%%%%%%%%%%%%%%%%%%%%%%%
%% FIGURE
%%%%%%%%%%%%%%%%%%%%%%%%%%%%%%%%%%%%%%%%%
\begin{figure*}[t]
    \centering
    \includegraphics[width = 0.7\textwidth]{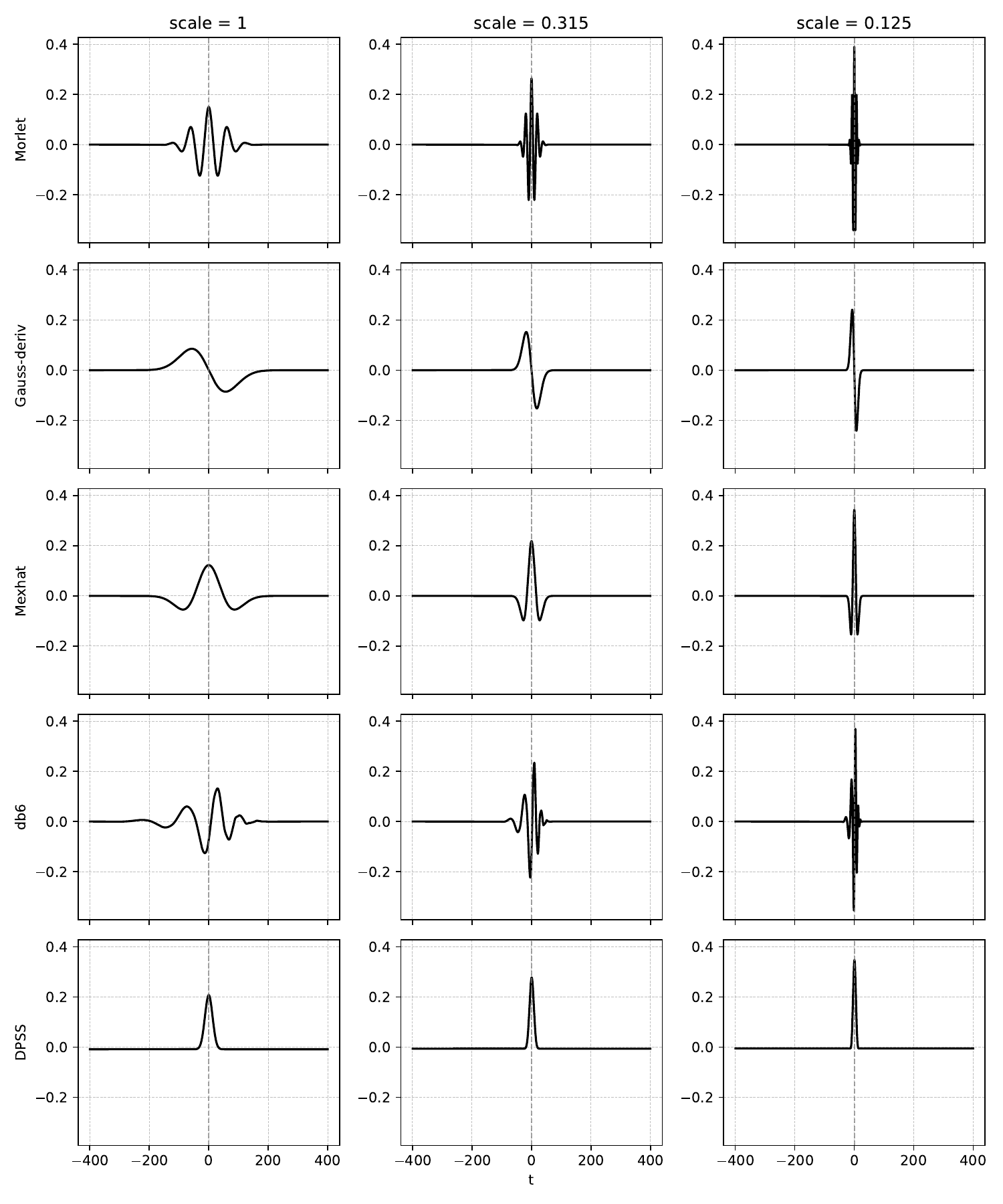}
    \caption{Representative atoms for Morlet, Gaussian-derivative, Mexican hat, Daubechies, and DPSS (rows) at fixed scales $s\in\{1,\,0.315,\,0.125\}$ (columns), which control temporal width. All atoms are $\ell_2$-normalized; therefore, as a consequence of energy normalization, peak magnitudes vary across scales.}
    \label{fig:probe}
\end{figure*}

%%%%%%%%%%%%%%%%%%%%%%%%%%%%%%%%%%%%%%
%% Experimental Details
%%%%%%%%%%%%%%%%%%%%%%%%%%%%%%%%%%%%%%
\section{Experimental Details}
\label{experimental_details}

\subsection{ECG Classification}

\textbf{Dataset details.} We conduct experiments on the publicly available PTB-XL benchmark dataset~\cite{wagner2020ptb}, which consists of 21,837 ten-second recordings from standard 12-lead ECGs acquired from 18,885 unique patients. Each recording is independently reviewed by up to two cardiologists and assigned one or more labels from a vocabulary of 71 ECG statements defined under the SCP-ECG standard. The annotations are grouped into three high-level types: diagnostic, form, and rhythm. Diagnostic labels additionally follow a hierarchical structure with 5 coarse-grained superclasses and 24 fine-grained subclasses. Leveraging this structure, there are six prediction tasks at different semantic resolutions: \textbf{all} (all available statements), \textbf{diag} (all diagnostic labels across hierarchy levels), \textbf{sub-diag} (diagnostic subclasses), \textbf{super-diag} (diagnostic superclasses), \textbf{form}, and \textbf{rhythm}. The super-diag task is posed as a multi-class classification problem, whereas the remaining tasks are formulated as multi-label classification.

\textbf{Training details.} We train WaveSSM for 100 epochs using the AdamW optimizer. The model comprises 6 layers, each containing 256 SSMs per layer of order $N=64$. We follow the training pipeline from https://github.com/state-spaces/s4/tree/main.

\subsection{Informer Time-series Forecasting}
\label{experimental_details_informer}

\textbf{Dataset details.} Our experiments follow the standard Informer time-series forecasting benchmark~\cite{DBLP:conf/aaai/ZhouZPZLXZ21}, which include electricity transformer temperature datasets (ETTh1, ETTh2, ETTm1) with both hourly and minutely measurements. We evaluate performance under long-horizon forecasting regimes with prediction lengths of 24, 48, 168, 336, 720, and 960 time steps. Experiments are conducted in both univariate and multivariate settings: univariate forecasting predicts a single target series, while multivariate forecasting leverages all available variables as inputs. In addition to the ETT datasets, we also consider other real-world multivariate time series from the Informer benchmark, including weather and electricity load (ECL) data.

\textbf{Training details.} We train WaveSSM on both univariate and multivariatete variants using the AdamW optimizer and a shallow architecture with two layers. Consistent with prior work, the sequence length, prediction length, and label length hyperparameters are taken from https://github.com/zhouhaoyi/Informer2020/tree/main/scripts. As the model is quite simple to overfit, we perform early stopping.

\subsection{Speech Commands}
\textbf{Dataset details.} We use the Speech Commands dataset~\cite{DBLP:journals/corr/abs-1804-03209}, a benchmark corpus for limited-vocabulary keyword spotting. The dataset comprises 105,829 short audio recordings from 35 spoken-word classes, collected from 2,618 distinct speakers. Each recording is provided as a single-channel file up to 1 second in duration, encoded at a 16 kHz sampling rate. Additionally, as introduced in~\cite{DBLP:conf/nips/GuG0R22}, we test zero-shot performance when the signal is undersampled to 8kHz.

\textbf{Training details.} We train WaveSSM using the same overall training protocol as S4, with 128 SSMs per layer across 6 layers. The reported results are obtained by retraining both models under an identical configuration, changing only the initialization of the $A$ and $B$ matrices.

\subsection{Long Range Arena}
\label{experimental_details_lra}

\textbf{Dataset details.}  
The Long Range Arena (LRA) benchmark is a collection of sequence classification tasks designed to evaluate a model's ability to capture long-range dependencies across diverse input modalities. It comprises the following six tasks:

\begin{itemize}
    \item \textbf{ListOps.} This dataset consists of synthetically generated mathematical expressions formed by recursively applying operators such as $\min$ and $\max$ to single-digit integers ($0$–$9$). Expressions are written in prefix notation with explicit parentheses. Inputs are tokenized at the character level and represented using one-hot vectors over a vocabulary of 17 symbols, where all operators and opening brackets are mapped to a shared token. Since expression lengths vary, sequences are padded with a special symbol to a maximum length of 2048. The objective is to predict the final integer result, which yields a classification problem of 10 classes.

    \item \textbf{Text.} This task corresponds to sentiment classification on the IMDb dataset. Each movie review is represented as a sequence of character tokens, and the goal is to determine whether the sentiment is positive or negative. Characters are one-hot encoded using a vocabulary of 129 symbols, and sequences are padded to a fixed length of 4096.

    \item \textbf{Retrieval.} Based on the ACL Anthology Network corpus, this task presents pairs of citation strings encoded as character sequences. The model must decide whether the two citations refer to the same underlying publication. Characters are represented with one-hot vectors over 97 tokens. Variable-length sequences are padded to a maximum of 4000 tokens, and a dedicated end-of-sequence marker is appended. The output is binary: matching or non-matching.

    \item \textbf{Image.} This task reformulates the CIFAR-10 image classification problem as a one-dimensional sequence modeling task. RGB images are first converted to grayscale and normalized to have zero mean and unit variance across the dataset. The resulting $32 \times 32$ images are flattened into sequences of length 1024, and the model predicts one of the 10 CIFAR-10 classes.

    \item \textbf{Pathfinder.} Each example is a $32 \times 32$ grayscale image containing two marked points and a collection of dashed line segments. Pixel values are normalized to the range $[-1, 1]$. The task is to determine whether a continuous dashed path exists between the two marked points.

    \item \textbf{PathX.} This task is a more challenging extension of Pathfinder, operating on higher-resolution images. In the \texttt{PathX-128} setting, inputs are $128 \times 128$ grayscale images, resulting in substantially longer input sequences and increased difficulty.
\end{itemize}

\textbf{Training details.} For all Long Range Arena tasks, we train WaveSSM with 6 layers, using task-dependent hidden and state dimensions. Depending on the task, we use batch normalization or layer normalization, enable pre-normalization when required, and apply dropout only where specified; the learning rate, batch size, weight decay, and number of training epochs are tuned per task. We follow the training pipeline from https://github.com/state-spaces/s4/tree/main.

All architectural choices and optimization hyperparameters are summarized in Table~\ref{tab:table_10}.

%%%%%%%%%%%%%%%%%%%%%%%%%%%%%%%%%%%%%%
%% Table
%%%%%%%%%%%%%%%%%%%%%%%%%%%%%%%%%%%%%%
\begin{table*}[h]
    \centering
    \small
    % \footnotesize
    % \scriptsize
    \caption{Hyperparameters used for the reported results, shared across WaveSSM and S4. L is the number of layers; H, the embedding size; N, the order of the SSM; Dropout, the dropout rate; LR, the learning rate; BS, the batch size; Epochs, the training epochs; WD, weight decay; and $(\Delta_{\min}$,$\Delta_{\max})$, the range for the discretization step-size.}
    
\begin{tabular}{l c c c c c c c c c c c}
\toprule
Task & $L$ & $H$ & $N$ & Norm & Pre-norm & Dropout & LR & BS & Epoch & WD & $(\Delta_{\min},\Delta_{\max})$ \\
% \midrule
% \texttt{sCIFAR}     & 6 & 128 & 64 & LN & False & 0.1    & 0.01              & 64 & 100 & 0.05 & (0.001, 0.1)  \\
% \texttt{psCIFAR}     & 6 & 128 & 64 & LN & False & 0.1    & 0.01              & 64 & 100 & 0.05 & (0.001, 0.1)  \\
\midrule
\texttt{ListOps}    & 6 & 256 & 64 & BN & False & 0      & 0.01              & 50 &  40 & 0.05 & (0.001, 0.1)  \\
\texttt{Text}       & 6 & 256 & 64 & BN & True  & 0      & 0.01              & 16 &  32 & 0.05 & (0.001, 0.1)  \\
\texttt{Retrieval}  & 6 & 256 & 64 & BN & True  & 0      & 0.01              & 64 &  20 & 0.05 & (0.001, 0.1)  \\
\texttt{Image}      & 6 & 512 & 64 & LN & False & 0.1    & 0.01              & 50 & 200 & 0.05 & (0.001, 0.1)  \\
\texttt{Pathfinder} & 6 & 256 & 128 & BN & True  & 0      & 0.001             & 64 & 200 & 0.03 & (0.001, 0.1)  \\
\texttt{PathX}
           & 6 & 256 & 128 & BN & True  & 0      & 0.0005 & 32 &  50 & 0.05 & (0.001, 0.1) \\
\midrule
\texttt{SC35}     & 6 & 128 & 64 & BN & True  & 0      & 0.001              & 16 &  40 & 0.05 & (0.001, 0.1)  \\
\midrule
% \texttt{BIDMC}     & 6 & 128 & 256 & BN & True  & 0      & 0.01              & 32 &  500 & 0.05 & (0.001, 0.1)  \\
\texttt{ECL} & 2 & 128 &  64&LN  & True  & 0.25     & 0.01              & 50 &  10 & 0 & (0.001, 0.1) \\  
\texttt{ETTH} & 2 & 128 & 64 &LN  & True  & 0.25     & 0.01    & 50 &  10 & 0 & (0.001, 0.1) \\ 
\texttt{ETTM} & 2 & 128 & 64 & LN  & True  & 0.25     & 0.01    & 50 &  10 & 0 & (0.001, 0.1) \\ 
\texttt{Weather} & 2 & 128 & 64 &LN  & True  & 0.25     & 0.01    & 50 &  10 & 0 & (0.001, 0.1) \\ 
\midrule
\texttt{PTB-XL} &6 & 256&64 & BN & True & 0.1 & 0.01 & 16 & 100 & 0.05 & (0.001, 0.1)\\
\bottomrule
\end{tabular}

    \label{tab:table_10}
\end{table*}

%%%%%%%%%%%%%%%%%%%%%%%%%%%%%%%%%%%%%%
%% Extended Experimental Results
%%%%%%%%%%%%%%%%%%%%%%%%%%%%%%%%%%%%%%
\section{Extended Experimental Results}
\label{extended_experiments}

% \subsection{BIDMC}
% %%%%%%%%%%%%%%%%%%%%%%%%%%%%%%%%%%%%%%
% %% Table
% %%%%%%%%%%%%%%%%%%%%%%%%%%%%%%%%%%%%%%
% \begin{table}[h]
% \centering
% \small
% \caption{
% Results for the BIDMC dataset. RMSE (↓)}
% \label{tab:ablation_bidmc}
% \input{tables/table4}
% \end{table}

% \subsection{Informer Time-series Forecasting}
% \label{extended_results_informer}

% \textbf{Univariate signal.} Table~\ref{tab:univariate_mse}

% \textbf{Multivariate signals.} Table~\ref{tab:multivariate_mse}

%%%%%%%%%%%%%%%%%%%%%%%%%%%%%%%%%%%%%%
%% Table
%%%%%%%%%%%%%%%%%%%%%%%%%%%%%%%%%%%%%%
\begin{table}[h]

\centering
\setlength{\tabcolsep}{4pt} 
\caption{Univariate Informer's long sequence time-series forecasting results (MSE ↓).}
\label{tab:univariate_mse}
\scriptsize
% \footnotesize
% 3 datasets per row (ETTh1 + ETTh2 + ETTm1), then (Weather + ECL)
% (Assumes you have \usepackage{booktabs} in the preamble)

\begin{tabular}{l || c c c c c || c c c c c || c c c c c}
\toprule
& \multicolumn{5}{c||}{\textbf{ETTh$_1$}} 
& \multicolumn{5}{c||}{\textbf{ETTh$_2$}} 
& \multicolumn{5}{c}{\textbf{ETTm$_1$}} \\
\midrule
Model 
& 24 & 48 & 168 & 336 & 720
& 24 & 48 & 168 & 336 & 720
& 24 & 48 & 96  & 288 & 672 \\
\midrule
S4       & \textbf{\textcolor{red}{0.061}}$^1$ & \textbf{\textcolor{red}{0.079}}$^1$ & \textbf{\textcolor{blue}{0.104}}$^2$ & \textbf{\textcolor{red}{0.080}}$^1$ & \textbf{\textcolor{violet}{0.116}}$^3$ & \textbf{\textcolor{violet}{0.095}}$^3$ & 0.191 & 0.167 & 0.189 & 0.187 & 0.024 & 0.051 & 0.086 & \textbf{\textcolor{blue}{0.160}}$^2$ & 0.292 \\
Informer & 0.098 & 0.158 & 0.183 & 0.222 & 0.269 & \textbf{\textcolor{red}{0.093}}$^1$ & 0.155 & 0.232 & 0.263 & 0.277 & 0.030 & 0.069 & 0.194 & 0.401 & 0.512 \\
LogTrans & 0.103 & 0.167 & 0.207 & 0.230 & 0.273 & 0.102 & 0.169 & 0.246 & 0.267 & 0.303 & 0.065 & 0.078 & 0.199 & 0.411 & 0.598 \\
Reformer & 0.222 & 0.284 & 1.522 & 1.860 & 2.112 & 0.263 & 0.458 & 1.029 & 1.668 & 2.030 & 0.095 & 0.249 & 0.920 & 1.108 & 1.793 \\
LSTMa    & 0.114 & 0.193 & 0.236 & 0.590 & 0.683 & 0.155 & 0.190 & 0.385 & 0.558 & 0.640 & 0.121 & 0.305 & 0.287 & 0.524 & 1.064 \\
DeepAR   & 0.107 & 0.162 & 0.239 & 0.445 & 0.658 & 0.098 & 0.163 & 0.255 & 0.604 & 0.429 & 0.091 & 0.219 & 0.364 & 0.948 & 2.437 \\
ARIMA    & 0.108 & 0.175 & 0.396 & 0.468 & 0.659 & 3.554 & 3.190 & 2.800 & 2.753 & 2.878 & 0.090 & 0.179 & 0.272 & 0.462 & 0.639 \\
Prophet  & 0.115 & 0.168 & 1.224 & 1.549 & 2.735 & 0.199 & 0.304 & 2.145 & 2.096 & 3.355 & 0.120 & 0.133 & 0.194 & 0.452 & 2.747 \\
\midrule
WaveSSM$_\text{MorletS}$      & 0.077 & 0.148 & 0.134 & 0.153 & \textbf{\textcolor{blue}{0.114}}$^2$ & \textbf{\textcolor{red}{0.093}}$^1$ & \textbf{\textcolor{violet}{0.139}}$^3$ & \textbf{\textcolor{violet}{0.165}}$^3$ & 0.172 & 0.171 & 0.017& \textbf{\textcolor{violet}{0.027}}$^3$ & 0.093 & 0.190 & 0.261\\
WaveSSM$_\text{GaussS}$& 0.089 & 0.146 & 0.128 & 0.135 & 0.144 & 0.105 & 0.143 & 0.169 & 0.186 & 0.169& \textbf{\textcolor{blue}{0.015}}$^2$ & \textbf{\textcolor{violet}{0.027}}$^3$ & 0.102 & 0.183 & \textbf{\textcolor{blue}{0.245}}$^2$ \\
WaveSSM$_\text{MexhatS}$ & 0.077 & 0.134 & 0.134 & 0.149 & 0.128 & \textbf{\textcolor{blue}{0.094}}$^2$ & 0.140& 0.173 & \textbf{\textcolor{violet}{0.171}}$^3$ & 0.180& \textbf{\textcolor{blue}{0.015}}$^2$ & 0.028 & \textbf{\textcolor{blue}{0.076}}$^2$ & 0.175 & 0.247\\
WaveSSM$_\text{dpssS}$ & 0.083 & \textbf{\textcolor{violet}{0.121}}$^3$ & \textbf{\textcolor{red}{0.102}}$^1$ & 0.128 & 0.154 & 0.097 & \textbf{\textcolor{violet}{0.139}}$^3$ & 0.168 & 0.190 & 0.167& 0.018 & 0.029 & 0.095 & \textbf{\textcolor{violet}{0.168}}$^3$ & 0.260 \\

WaveSSM$_\text{dpS}$ & 0.076 & 0.139 & 0.141 & 0.141 & 0.130 & 0.095 & 0.141 & 0.167 & 0.200 & 0.183& \textbf{\textcolor{violet}{0.016 }}$^3$& 0.027 & 0.102 & 0.205 & 0.264\\

WaveSSM$_\text{MorletT}$  & 0.083 & 0.138 & 0.141 & 0.131 & 0.135 & 0.097 & 0.141 & 0.173 & 0.191 & \textbf{\textcolor{blue}{0.156}}$^2$ & \textbf{\textcolor{blue}{0.015}}$^2$ & \textbf{\textcolor{red}{0.024}}$^1$ & 0.092 & \textbf{\textcolor{violet}{0.168}}$^3$ & 0.293\\
WaveSSM$_\text{GaussT}$ & 0.084 & \textbf{\textcolor{blue}{0.111}}$^2$ & \textbf{\textcolor{violet}{0.112}}$^3$ & \textbf{\textcolor{blue}{0.112}}$^2$ & \textbf{\textcolor{red}{0.102}}$^1$ & 0.124 & 0.151 & 0.174 & 0.212 & 0.196& \textbf{\textcolor{blue}{0.015}}$^2$ & 0.032 & 0.107 & 0.189 & 0.256  \\
WaveSSM$_\text{MexhatT}$ & \textbf{\textcolor{violet}{0.065}}$^3$ & \textbf{\textcolor{blue}{0.111}}$^2$ & 0.145 & \textbf{\textcolor{violet}{0.126}}$^3$ & 0.136 & \textbf{\textcolor{red}{0.093}}$^1$ & \textbf{\textcolor{blue}{0.136}}$^2$ & \textbf{\textcolor{blue}{0.162}}$^2$ & \textbf{\textcolor{red}{0.162}}$^1$ & \textbf{\textcolor{violet}{0.163}}$^3$ & \textbf{\textcolor{blue}{0.015}}$^2$ & 0.030 & \textbf{\textcolor{red}{0.074}}$^1$ & 0.198 & \textbf{\textcolor{blue}{0.245}}$^2$ \\
WaveSSM$_\text{dpssT}$ & \textbf{\textcolor{blue}{0.063}}$^2$ & 0.146 & 0.124 & 0.145 & 0.138 & 0.100 & \textbf{\textcolor{blue}{0.136}}$^2$ & \textbf{\textcolor{red}{0.160}}$^1$ & \textbf{\textcolor{blue}{0.167}}$^2$ & \textbf{\textcolor{red}{0.155}}$^1$ & \textbf{\textcolor{red}{0.014}}$^1$ & 0.028 & \textbf{\textcolor{violet}{0.081}}$^3$ & \textbf{\textcolor{red}{0.159}}$^1$ & \textbf{\textcolor{red}{0.231}}$^1$ \\
WaveSSM$_\text{dpT}$ &0.076 & 0.147 & 0.141 & 0.153 & 0.119 & \textbf{\textcolor{red}{0.093}}$^1$ & \textbf{\textcolor{red}{0.134}}$^1$ & 0.172 & 0.189 & 0.179& \textbf{\textcolor{red}{0.014}}$^1$ & \textbf{\textcolor{blue}{0.025}}$^2$ & 0.086 & 0.191 & \textbf{\textcolor{violet}{0.246 }}$^3$\\

\bottomrule
\end{tabular}

\vspace{0.8em}

\begin{tabular}{l || c c c c c || c c c c c}
\toprule
& \multicolumn{5}{c||}{\textbf{Weather}} 
& \multicolumn{5}{c}{\textbf{ECL}} \\
\midrule
Model 
& 24 & 48 & 168 & 336 & 720
& 48 & 168 & 336 & 720 & 960 \\
\midrule
S4       & 0.125 & 0.181 & 0.198 & 0.300 & 0.245 & 0.222 & 0.331 & 0.328 & 0.428 & 0.432 \\
Informer & 0.117 & 0.178 & 0.266 & 0.297 & 0.359 & 0.239 & 0.447 & 0.489 & 0.540 & 0.582 \\
LogTrans & 0.136 & 0.206 & 0.309 & 0.359 & 0.388 & 0.280 & 0.454 & 0.514 & 0.558 & 0.624 \\
Reformer & 0.231 & 0.328 & 0.654 & 1.792 & 2.087 & 0.971 & 1.671 & 3.528 & 4.891 & 7.019 \\
LSTMa    & 0.131 & 0.190 & 0.341 & 0.456 & 0.866 & 0.493 & 0.723 & 1.212 & 1.511 & 1.545 \\
DeepAR   & 0.128 & 0.203 & 0.293 & 0.585 & 0.499 & 0.204 & 0.315 & 0.414 & 0.563 & 0.657 \\
ARIMA    & 0.219 & 0.273 & 0.503 & 0.728 & 1.062 & 0.879 & 1.032 & 1.136 & 1.251 & 1.370 \\
Prophet  & 0.302 & 0.445 & 2.441 & 1.987 & 3.859 & 0.524 & 2.725 & 2.246 & 4.243 & 6.901 \\
\midrule
WaveSSM$_\text{MorletS}$ & \textbf{\textcolor{violet}{0.097}}$^3$ & 0.146 & \textbf{\textcolor{violet}{0.188}}$^3$ & 0.209 & 0.241 & 0.191 & 0.244 & \textbf{\textcolor{red}{0.267}}$^1$ & 0.274&0.283\\ 
WaveSSM$_\text{GaussS}$& 0.103 & 0.151 & 0.190 & 0.217 & \textbf{\textcolor{red}{0.218}}$^1$ & 0.188 & 0.249 & 0.282 & \textbf{\textcolor{red}{0.252}}$^1$ & \textbf{\textcolor{blue}{0.273}}$^2$\\
WaveSSM$_\text{MexhatS}$ & \textbf{\textcolor{violet}{0.097}}$^3$ & 0.147 & 0.196 & \textbf{\textcolor{blue}{0.206}}$^2$ & 0.233 & 0.189 & 0.254 & \textbf{\textcolor{violet}{0.278}}$^3$ & 0.287&0.283\\
WaveSSM$_\text{dpssS}$ & 0.102 & 0.149 & 0.199 & 0.215 & 0.232 & 0.189 & \textbf{\textcolor{blue}{0.238}}$^2$ & \textbf{\textcolor{blue}{0.269}}$^2$ & 0.286 & \textbf{\textcolor{red}{0.268}}$^1$\\
WaveSSM$_\text{dpS}$ & \textbf{\textcolor{blue}{0.096 }}$^2$& 0.147 & 0.195 & \textbf{\textcolor{blue}{0.206}}$^2$ & 0.225 & 0.186 & \textbf{\textcolor{red}{0.236}}$^1$ & 0.292 & 0.330 & 0.301\\
WaveSSM$_\text{MorletT}$ & \textbf{\textcolor{red}{0.095}}$^1$ & \textbf{\textcolor{red}{0.139}}$^1$ & 0.192 & 0.228 & 0.233 & \textbf{\textcolor{violet}{0.182}}$^3$ & 0.244& 0.298 & \textbf{\textcolor{blue}{0.267}}$^2$ &0.316\\
WaveSSM$_\text{GaussT}$ & 0.123 & 0.181 & 0.201 & 0.219 & 0.231  & 0.213 & 0.278 & 0.314 & 0.295&0.312 \\
WaveSSM$_\text{MexhatT}$ & \textbf{\textcolor{red}{0.095}}$^1$ & \textbf{\textcolor{blue}{0.140}}$^2$ & \textbf{\textcolor{red}{0.182}}$^1$ & 0.213 & \textbf{\textcolor{blue}{0.220}}$^2$  & 0.196 & \textbf{\textcolor{violet}{0.242}}$^3$ & 0.281 & \textbf{\textcolor{violet}{0.270}}$^3$ &0.310\\
WaveSSM$_\text{dpssT}$ & 0.100 & 0.144 & 0.190 & \textbf{\textcolor{violet}{0.208}}$^3$ & \textbf{\textcolor{violet}{0.223}}$^3$ & \textbf{\textcolor{red}{0.173}}$^1$ & \textbf{\textcolor{blue}{0.238}}$^2$ & 0.282 & 0.297 & \textbf{\textcolor{violet}{0.277}}$^3$ \\

WaveSSM$_\text{dpT}$ & \textbf{\textcolor{red}{0.095}}$^1$ & \textbf{\textcolor{violet}{0.142}}$^3$ & \textbf{\textcolor{blue}{0.185}}$^2$ & \textbf{\textcolor{red}{0.202}}$^1$ & 0.229 & \textbf{\textcolor{blue}{0.174}}$^2$ & \textbf{\textcolor{blue}{0.238}}$^2$ & 0.283 & 0.279 & 0.313 \\
\bottomrule
\end{tabular}

\end{table}

%%%%%%%%%%%%%%%%%%%%%%%%%%%%%%%%%%%%%%
%% Table
%%%%%%%%%%%%%%%%%%%%%%%%%%%%%%%%%%%%%%
\begin{table}[h]

\centering
\setlength{\tabcolsep}{4pt} 
\caption{Multivariate Informer's long sequence time-series forecasting results (MSE ↓).}
\label{tab:multivariate_mse}
% \small
\scriptsize
% Same layout as before: 3 datasets in the first row (ETTh1 + ETTh2 + ETTm1),
% then 2 datasets (Weather + ECL). No bolding.
% (Assumes \usepackage{booktabs})

\begin{tabular}{l || c c c c c || c c c c c || c c c c c}
\toprule
& \multicolumn{5}{c||}{\textbf{ETTh$_1$}} 
& \multicolumn{5}{c||}{\textbf{ETTh$_2$}} 
& \multicolumn{5}{c}{\textbf{ETTm$_1$}} \\
\midrule
Model 
& 24 & 48 & 168 & 336 & 720  
& 24 & 48 & 168 & 336 & 720 
& 24 & 48 & 96  & 288 & 672  \\
\midrule
S4       & 0.525 & 0.641 & 0.980 & 1.407 & 1.162 
         & 0.871 & 1.240 & 2.580 & \textbf{\textcolor{red}{1.980}}$^1$ & \textbf{\textcolor{violet}{2.650}}$^3$ 
         & 0.426 & 0.580 & 0.699 & 0.824 & 0.846  \\
Informer & 0.577 & 0.685 & 0.931 & 1.128 & 1.215 
         & 0.720 & 1.457 & 3.489 & 2.723 & 3.467 
         & \textbf{\textcolor{red}{0.323}}$^1$ & 0.494 & 0.678 & 1.056 & 1.192  \\
LogTrans & 0.686 & 0.766 & 1.002 & 1.362 & 1.397 
         & 0.828 & 1.806 & 4.070 & 3.875 & 3.913 
         & 0.419 & 0.507 & 0.768 & 1.462 & 1.669  \\
Reformer & 0.991 & 1.313 & 1.824 & 2.117 & 2.415 
         & 1.531 & 1.871 & 4.660 & 4.028 & 5.381 
         & 0.724 & 1.098 & 1.433 & 1.820 & 2.187  \\
LSTMa    & 0.650 & 0.702 & 1.212 & 1.424 & 1.960 
         & 1.143 & 1.671 & 4.117 & 3.434 & 3.963 
         & 0.621 & 1.392 & 1.339 & 1.740 & 2.736  \\
LSTnet   & 1.293 & 1.456 & 1.997 & 2.655 & 2.143 
         & 2.742 & 3.567 & 3.242 & 2.544 & 4.625 
         & 1.968 & 1.999 & 2.762 & 1.257 & 1.917  \\
\midrule
WaveSSM$_\text{MorletS}$   & 0.432 & \textbf{\textcolor{red}{0.431}}$^1$ & \textbf{\textcolor{violet}{0.830}}$^3$ & 1.057 & 1.062& 0.578 & 1.254 & 3.121 & 2.575 & 2.894 & \textbf{\textcolor{violet}{0.355}}$^3$ & \textbf{\textcolor{violet}{0.442}}$^3$ & \textbf{\textcolor{blue}{0.413}}$^2$ & \textbf{\textcolor{red}{0.505}}$^1$ & \textbf{\textcolor{blue}{0.640}}$^2$ \\
WaveSSM$_\text{GaussS}$& 0.490 & 0.528 & 0.892 & \textbf{\textcolor{blue}{0.956}}$^2$ & 1.078 & 0.574 & 1.469 & 2.811 & 2.554 & 2.856& 0.356 & 0.444 & 0.433 & \textbf{\textcolor{blue}{0.507}}$^2$ & \textbf{\textcolor{red}{0.616}}$^1$ \\
WaveSSM$_\text{MexhatS}$ & \textbf{\textcolor{red}{0.408}}$^1$ & \textbf{\textcolor{blue}{0.432}}$^2$ & \textbf{\textcolor{red}{0.763}}$^1$ & 1.078 & 1.089 & \textbf{\textcolor{violet}{0.567}}$^3$ & \textbf{\textcolor{blue}{1.179}}$^2$ & 2.752 & \textbf{\textcolor{violet}{2.530}}$^3$ & 2.873& \textbf{\textcolor{blue}{0.350}}$^2$ & \textbf{\textcolor{blue}{0.434}} & \textbf{\textcolor{violet}{0.425}}$^3$ & 0.516 & 0.674 \\
WaveSSM$_\text{dpssS}$ & 0.474 & 0.495 & 0.835 & \textbf{\textcolor{violet}{0.964}}$^3$ & 1.068 & 0.615 & \textbf{\textcolor{violet}{1.186}}$^3$ & \textbf{\textcolor{violet}{2.549}}$^3$ & 2.635 & 2.735& 0.367 & 0.461 & 0.428 & 0.536 & 
\textbf{\textcolor{violet}{0.641}}$^3$ \\

WaveSSM$_\text{MorletT}$  & 0.442 & \textbf{\textcolor{violet}{0.445}}$^3$ & 0.838 & 1.056 & \textbf{\textcolor{blue}{1.038}}$^2$ & \textbf{\textcolor{red}{0.435}}$^1$ & 1.231 & 2.753 & 2.677 & 2.724& 0.360 & \textbf{\textcolor{red}{0.416}}$^1$ & \textbf{\textcolor{violet}{0.425}}$^3$ & \textbf{\textcolor{violet}{0.508}}$^3$ & 0.667\\
WaveSSM$_\text{GaussT}$ & 0.714 & 0.815 & 1.112 & 1.123 & 1.109 & 1.741 & 2.075& \textbf{\textcolor{red}{2.251}}$^1$ & \textbf{\textcolor{blue}{2.459}}$^2$ & \textbf{\textcolor{red}{2.481}}$^1$ & 0.367 & 0.448 & 0.427 & 0.600 & 0.749  \\
WaveSSM$_\text{MexhatT}$ & \textbf{\textcolor{violet}{0.429}}$^3$ & \textbf{\textcolor{violet}{0.445}}$^3$ & \textbf{\textcolor{blue}{0.806}}$^2$ & 1.072 & \textbf{\textcolor{violet}{1.069}}$^3$ & \textbf{\textcolor{blue}{0.505}}$^2$ & \textbf{\textcolor{red}{0.989}}$^1$ & 2.633 & 2.698 & \textbf{\textcolor{blue}{2.611}}$^2$ & \textbf{\textcolor{blue}{0.350}}$^2$ & 0.480 & 0.438 & 0.526 & 0.751 \\
WaveSSM$_\text{dpssT}$ & \textbf{\textcolor{blue}{0.412}}$^2$ & 0.463 & 0.854 & \textbf{\textcolor{red}{0.928}}$^1$ & \textbf{\textcolor{red}{0.975}}$^1$ & 0.702 & 1.276 & \textbf{\textcolor{blue}{2.457}}$^2$ & 2.679 & 2.703& 0.359 & 0.461 & \textbf{\textcolor{red}{0.393}}$^1$ & 0.529 & 0.722 \\
\bottomrule
\end{tabular}

\vspace{0.8em}

\begin{tabular}{l || c c c c c || c c c c c}
\toprule
& \multicolumn{5}{c||}{\textbf{Weather}} 
& \multicolumn{5}{c}{\textbf{ECL}} \\
\midrule
Model
& 24 & 48 & 168 & 336 & 720 
& 48 & 168 & 336 & 720 & 960 \\
\midrule
S4       & 0.334 & 0.406 & 0.525 & \textbf{\textcolor{violet}{0.531}}$^3$ & 0.578 
         & \textbf{\textcolor{blue}{0.255}}$^2$ & 0.283 & 0.292 & \textbf{\textcolor{blue}{0.289}}$^2$ & \textbf{\textcolor{violet}{0.299}}$^3$  \\
Informer & 0.335 & 0.395 & 0.608 & 0.702 & 0.831 
         & 0.344 & 0.368 & 0.381 & 0.406 & 0.460  \\
LogTrans & 0.435 & 0.426 & 0.727 & 0.754 & 0.885 
         & 0.355 & 0.368 & 0.373 & 0.409 & 0.477  \\
Reformer & 0.655 & 0.729 & 1.318 & 1.930 & 2.726 
         & 1.404 & 1.515 & 1.601 & 2.009 & 2.141  \\
LSTMa    & 0.546 & 0.829 & 1.038 & 1.657 & 1.536 
         & 0.486 & 0.574 & 0.886 & 1.676 & 1.591  \\
LSTnet   & 0.615 & 0.660 & 0.748 & 0.782 & 0.851 
         & 0.369 & 0.394 & 0.419 & 0.556 & 0.605  \\
\midrule
WaveSSM$_\text{MorletS}$ & 0.320 & 0.396 & 0.487 & 0.537 & \textbf{\textcolor{violet}{0.539}}$^3$  & 0.262 & 0.288 & \textbf{\textcolor{blue}{0.287}}$^2$ & 0.292&0.310\\ 
WaveSSM$_\text{GaussS}$& 0.335 & 0.405 & 0.490 & 0.537 & 0.581  & 0.265 & \textbf{\textcolor{red}{0.276}}$^1$ & 0.295 & \textbf{\textcolor{violet}{0.290}}$^3$ &0.297 \\
WaveSSM$_\text{MexhatS}$ & 0.319 & \textbf{\textcolor{violet}{0.391}}$^3$ & \textbf{\textcolor{blue}{0.482}}$^2$ & 0.540 & \textbf{\textcolor{blue}{0.538}}$^2$  & \textbf{\textcolor{violet}{0.259}}$^3$ & \textbf{\textcolor{blue}{0.277}}$^2$ & 0.304 & 0.304& \textbf{\textcolor{violet}{0.299}}$^3$\\
WaveSSM$_\text{dpssS}$ & 0.324 & 0.397 & \textbf{\textcolor{violet}{0.484}}$^3$ & \textbf{\textcolor{violet}{0.531}}$^3$ & 0.546 & 0.262 & 0.284 & 0.300 & 0.294 & \textbf{\textcolor{blue}{0.295}}$^2$\\
WaveSSM$_\text{MorletT}$ & \textbf{\textcolor{violet}{0.318}}$^3$ & \textbf{\textcolor{blue}{0.388}}$^2$ & 0.485 & \textbf{\textcolor{red}{0.524}}$^1$ & \textbf{\textcolor{red}{0.534}}$^1$ & 0.266 & 0.293 & 0.295 & 0.306&0.309 \\
WaveSSM$_\text{GaussT}$ & 0.374 & 0.468 & 0.520 & 0.556 & 0.558 & 0.277 & 0.283 & \textbf{\textcolor{red}{0.279}}$^1$ & \textbf{\textcolor{red}{0.284}}$^1$ & \textbf{\textcolor{red}{0.286}}$^1$\\
WaveSSM$_\text{MexhatT}$ & \textbf{\textcolor{red}{0.312}}$^1$ & \textbf{\textcolor{red}{0.385}}$^1$ & \textbf{\textcolor{red}{0.480}}$^1$ & \textbf{\textcolor{blue}{0.527}}$^2$ & \textbf{\textcolor{red}{0.534}}$^1$  & 0.267 & 0.279 & \textbf{\textcolor{violet}{0.291}}$^3$ & 0.351& 0.327\\
WaveSSM$_\text{dpssT}$ & \textbf{\textcolor{blue}{0.315}}$^2$ & 0.396 & 0.487 & \textbf{\textcolor{red}{0.524}}$^1$ & 0.541 & \textbf{\textcolor{red}{0.251}}$^1$ & \textbf{\textcolor{violet}{0.278}}$^3$ & 0.301 & 0.296 & 0.303\\
\bottomrule
\end{tabular}

\end{table}

% \subsection{Long Range Arena}

% Table~\ref{tab:table_9}

%%%%%%%%%%%%%%%%%%%%%%%%%%%%%%%%%%%%%%
%% Table
%%%%%%%%%%%%%%%%%%%%%%%%%%%%%%%%%%%%%%
\begin{table*}[h]
    \centering
    \small
    % \footnotesize
    % \scriptsize
    \setlength{\tabcolsep}{4pt}
    \caption{Extended comparison on LRA.}
    \begin{tabular}{@{}lcccccc@{}}
\toprule
\textbf{Model}  & \texttt{ListOps} & \texttt{Text} & \texttt{Retrieval} & \texttt{Image} & \texttt{Pathfinder} & \texttt{PathX-128}\\
                     & (2,048) & (4,096) & (4,000) & (1,024) & (1,024) & (16,384) \\
\midrule
Transformer~\cite{DBLP:conf/nips/VaswaniSPUJGKP17}             & 36.37 & 64.27 & 57.46 & 42.44 & 71.40 & \xmark  \\
Reformer~\cite{DBLP:conf/iclr/KitaevKL20}                 & 37.27 & 56.10 & 53.40 & 38.07 & 68.50 & \xmark \\
BigBird~\cite{DBLP:conf/nips/ZaheerGDAAOPRWY20}                 & 36.05 & 64.02 & 59.29 & 40.83 & 74.87 & \xmark \\
Linear Trans.~\cite{DBLP:conf/icml/KatharopoulosV020}  & 16.13 & 65.90 & 53.09 & 42.34 & 75.30 & \xmark \\
Performer~\cite{DBLP:conf/iclr/ChoromanskiLDSG21}           & 18.01 & 65.40 & 53.82 & 42.77 & 77.05 & \xmark  \\
\midrule
MEGA~\cite{DBLP:conf/iclr/MaZKHGNMZ23}              & \textbf{\textcolor{red}{63.14}}$^1$ & \textbf{\textcolor{red}{90.43}}$^1$ & {91.25} & \textbf{\textcolor{red}{90.44}}$^1$ & \textbf{\textcolor{red}{96.01}}$^1$ & \textbf{\textcolor{blue}{97.98}}$^2$ \\
\midrule
DSS~\cite{DBLP:conf/nips/0001GB22}      & 60.60 & 84.80 & 87.80 & 85.70 & 84.60 & 87.80 \\
S4-FouT~\cite{DBLP:conf/iclr/GuGR22}                      & 57.88 & 86.34 & 89.66 & 89.07 & 94.46 & \xmark\\
S4-LegS~\cite{DBLP:conf/iclr/GuGR22}                     & 59.60 & 86.82 & 90.90 & 88.65 & 94.20 & 96.35 \\
Liquid-S4~\cite{DBLP:conf/iclr/HasaniLWCAR23}                & \textbf{\textcolor{blue}{62.75}}$^2$ & 89.02 & 91.20 & 89.50 & 94.80 & 96.66 \\
\midrule
S5-Inv~\cite{DBLP:conf/iclr/SmithWL23}                     & 60.07 & 87.77 & 91.26 & 86.41 & 93.42 & \textbf{\textcolor{violet}{97.54}}$^3$ \\
S5-Lin~\cite{DBLP:conf/iclr/SmithWL23}                   & 59.98  & 88.15 & 91.31 & 86.05 & 94.31 & 65.60  \\
S5~\cite{DBLP:conf/iclr/SmithWL23}                                     & \textbf{\textcolor{violet}{62.15}}$^3$ & 89.31 & 91.40 & 88.00 & \textbf{\textcolor{blue}{95.33}}$^2$ & \textbf{\textcolor{red}{98.58}}$^1$  \\
\midrule
LRU~\cite{DBLP:conf/icml/OrvietoSGFGPD23} & 60.2 & \textbf{\textcolor{blue}{89.4}}$^2$ &89.9 &89.0 & \textbf{\textcolor{violet}{95.1}}$^3$ & 94.2 \\
\midrule
WaveSSM$_\text{MorletS}$ & 60.88 & {89.11} & \textbf{\textcolor{red}{91.97}}$^1$ & 89.26 & {94.94} &  \xmark\\
% WaveSSM$_\text{CMorletS}$  \\
WaveSSM$_\text{GaussS}$ & 59.62 & 88.06 & 91.19 & {89.68} & \xmark &  \xmark\\
% WaveSSM$_\text{CGaussS}$ \\
WaveSSM$_\text{MexhatS}$ & 60.13& \textbf{\textcolor{violet}{89.33}}$^3$ & \textbf{\textcolor{blue}{91.88}}$^2$ & 89.35 & \xmark &  \xmark\\
% WaveSSM$_\text{ShannonS}$ \\
% WaveSSM$_\text{fbspS}$ \\
WaveSSM$_\text{dpssS}$ & {61.74}& 88.56& 91.52& 89.41& \xmark &  \xmark\\
\midrule
WaveSSM$_\text{MorletT}$ & {61.79} & {89.23} & \textbf{\textcolor{blue}{91.88}}$^2$ & \textbf{\textcolor{blue}{89.86}}$^2$ & 73.31 & \xmark\\
% WaveSSM$_\text{CMorletT}$  \\
WaveSSM$_\text{GaussT}$ &60.93 &87.65  & 90.89 & 87.26 & {94.33} & \xmark\\
% WaveSSM$_\text{CGaussT}$ \\
WaveSSM$_\text{MexhatT}$ & {61.34} & 89.05 & \textbf{\textcolor{violet}{91.68}}$^3$ & \textbf{\textcolor{violet}{89.74}}$^3$  & 73.42 & \xmark\\
% WaveSSM$_\text{ShannonT}$ \\
% WaveSSM$_\text{fbspT}$ \\
WaveSSM$_\text{dpssT}$ &60.98 & 89.01 & 91.48 &  89.04 & 62.51  & \xmark\\
\bottomrule
\end{tabular}

    \label{tab:table_9}
\end{table*}

\end{document}